\newtheorem{algorithm}{Algorithm}
\newtheorem{theorem}{Theorem}
\newtheorem{lemma}{Lemma}
\theoremstyle{remark}
\theoremstyle{definition}
\newcommand{\argmin}{\mathop{\mathrm{argmin}}}
\newcommand{\argmax}{\mathop{\mathrm{argmax}}}
\newcommand{\st}{\mathop{\mathrm{subject\,\,to}}}
\def\half{\frac{1}{2}}
\def\row{\mathrm{row}}
\def\nul{\mathrm{null}}
\def\sign{\mathrm{sign}}
\def\hx{\hat{x}}
\def\hu{\hat{u}}
\def\hbeta{\hat{\beta}}
\def\tbeta{\tilde{\beta}}
\def\tx{\tilde{x}}
\def\R{\mathbb{R}}
\def\cI{\mathcal{I}}
\title{A General Framework for Fast Stagewise Algorithms}
\author{Ryan J. Tibshirani \\ Carnegie Mellon University \\ 
{\tt\small ryantibs@stat.cmu.edu}}  
\date{}
\begin{document}
\maketitle

\begin{abstract}
Forward stagewise regression follows a very simple strategy for 
constructing a sequence of sparse regression estimates: it starts with   
all coefficients equal to zero, and iteratively updates the
coefficient (by a small amount $\epsilon$) of the 
variable that achieves the maximal absolute inner product with the
current residual.  This procedure has an interesting connection to the 
lasso: under some conditions, it is known that the sequence of
forward stagewise estimates exactly coincides with the
lasso path, as the step size $\epsilon$ goes to zero. Furthermore,
essentially the same equivalence holds outside of least squares
regression, with the minimization of a differentiable convex loss
function subject to an $\ell_1$ norm constraint (the stagewise
algorithm now updates the coefficient corresponding to the maximal
absolute component of the gradient).   

Even when they do not match their $\ell_1$-constrained analogues,
stagewise estimates provide a useful approximation, and are
computationally appealing. 
Their success in sparse modeling motivates the question: can
a simple, effective strategy like forward stagewise be
applied more broadly in other regularization settings, beyond the 
$\ell_1$ norm and sparsity?  The current paper is an attempt to do
just this.  We present a general framework for stagewise estimation,
which yields fast algorithms for problems such as group-structured
learning, matrix completion, image denoising, and more.   

Keywords: {\it forward stagewise regression, lasso,
$\epsilon$-boosting, regularization paths, scalable algorithms}
\end{abstract}


\section{Introduction}
\label{sec:introduction}

In a regression setting, let $y \in \R^n$ denote an
outcome vector and $X \in \R^{n\times p}$ a matrix of predictor
variables, with columns $X_1,\dots X_p \in \R^n$.  
For modeling $y$ as a linear function of $X$, we begin by considering
(among the many possible candidates for sparse estimation tools) a
simple method: {\it forward stagewise regression.}  In words,  
forward stagewise regression produces a sequence of coefficient 
estimates $\beta^{(k)}$, $k=0,1,2,\ldots$, by iteratively decreasing
the maximal absolute inner product of a variable with the current
residual, each time by only a small amount. A more
precise description of the algorithm is as follows.  

\begin{algorithm}[\textbf{Forward stagewise regression}]
\label{alg:fsr}
\hfill\par
\smallskip\smallskip
\noindent
Fix $\epsilon>0$, initialize $\beta^{(0)}=0$, and repeat for
$k=1,2,3,\ldots$, 
\begin{gather}
\label{eq:fsrup}
\beta^{(k)} = \beta^{(k-1)} + \epsilon \cdot 
\sign\big(X_i^T (y-X\beta^{(k-1)})\big) \cdot e_i, \\
\label{eq:fsrdir}
\text{where}\;\, i \in 
\argmax_{j=1,\ldots p} \, |X_j^T(y-X\beta^{(k-1)})|.
\end{gather}
\end{algorithm}

In the above, $\epsilon>0$ is a small fixed constant (e.g.,
$\epsilon=0.01$), commonly referred to as the step size or learning
rate; $e_i$ denotes the $i$th standard basis vector in $\R^p$; and 
the element notation in \eqref{eq:fsrdir} emphasizes that the
maximizing index $i$ need not be unique.  
The basic idea behind the 
forward stagewise updates \eqref{eq:fsrup}, \eqref{eq:fsrdir} is
highly intuitive: at each iteration we greedily select the variable
$i$ that has the largest absolute inner product (or correlation, for    
standardized variables) with the residual, and we add $s_i \epsilon$
to its coefficient, where $s_i$ is the sign of this inner product.
Accordingly, the fitted values undergo the update:
\begin{equation*}
X\beta^{(k)} = X\beta^{(k-1)} + \epsilon \cdot s_i X_i.
\end{equation*}
Such greediness, in selecting variable $i$, is counterbalanced by the
small step size $\epsilon>0$; instead of increasing the coefficient of
$X_i$ by a (possibly) large amount in the fitted model, forward
stagewise only increases it by $\epsilon$,
which ``slows down'' the learning process.  As a result, it typically
requires many iterations to produce estimates of reasonable interest
with forward stagewise regression, e.g., it could easily take
thousands of iterations to reach a model with only tens of active
variables (we use ``active'' here to refer to variables that are
assigned nonzero coefficients). See the left panel of Figure
\ref{fig:lasso} for a small example.    

\begin{figure}[htb]
\centering
\includegraphics[width=0.475\textwidth]{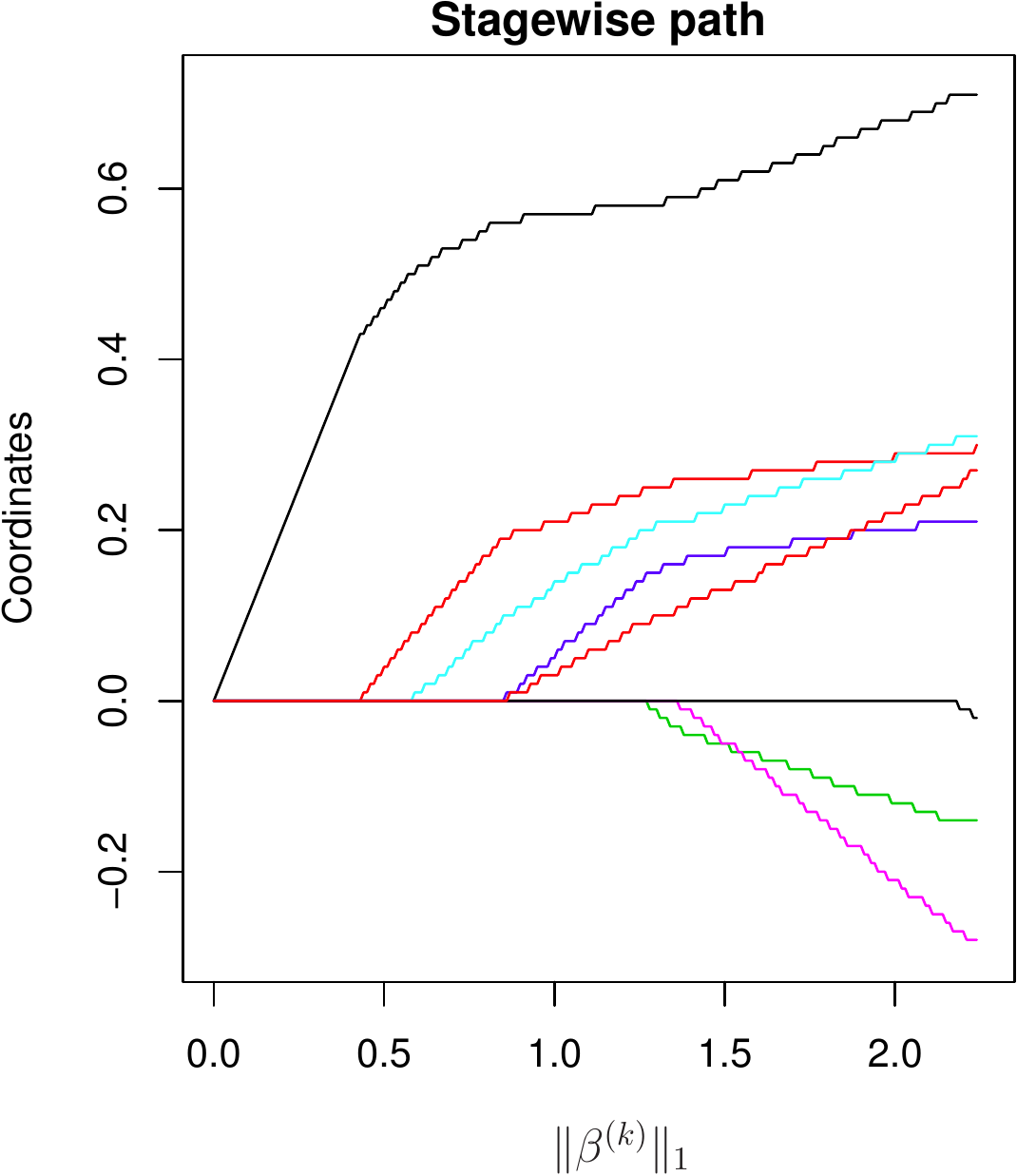}
\includegraphics[width=0.475\textwidth]{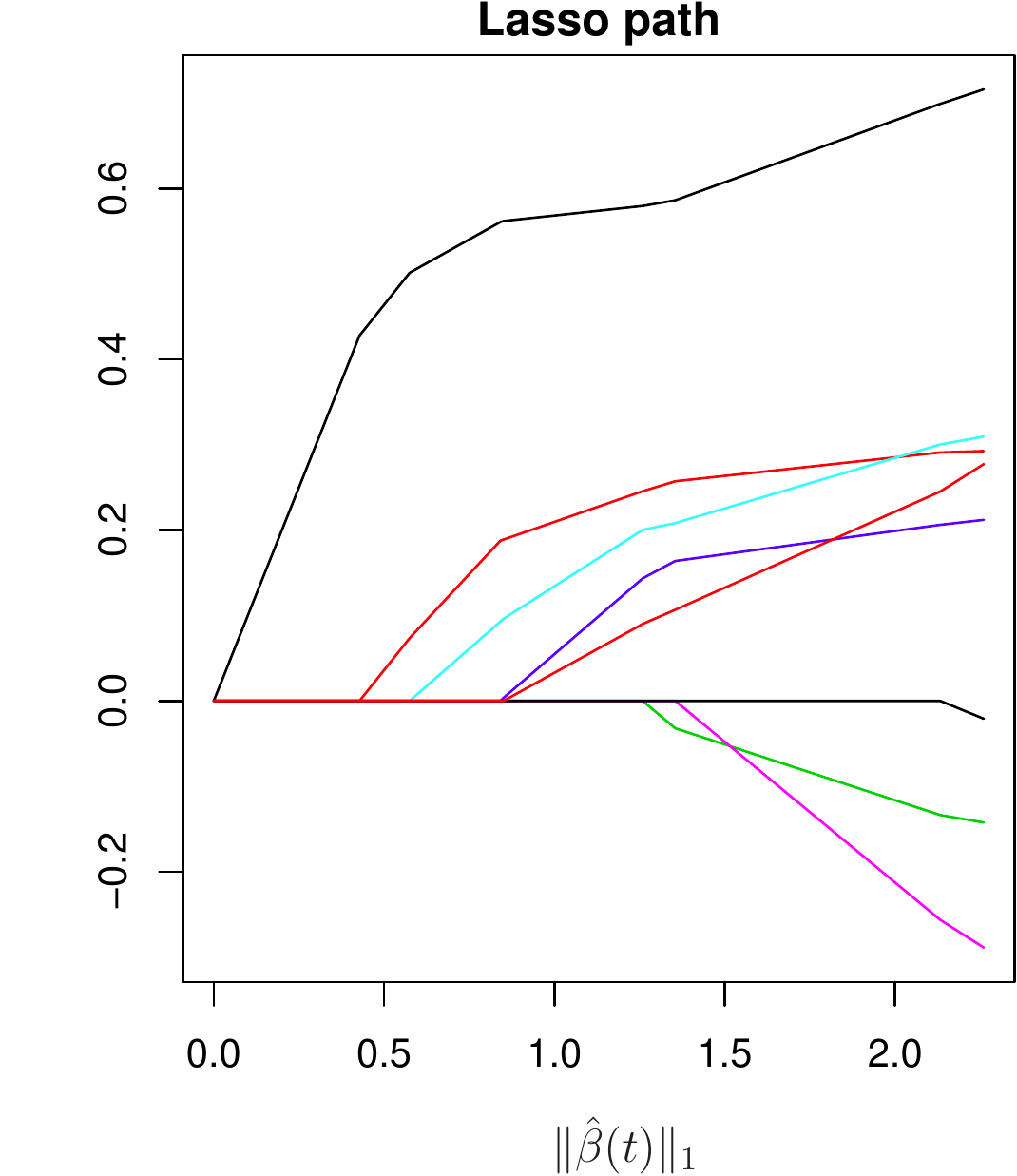} \\
\caption{\it A simple example using the prostate cancer data from  
  \citet{esl}, where the log PSA score of $n=67$ men with prostate  
  cancer is modeled as a linear function of $p=8$ biological
  predictors. The left panel shows the
  forward stagewise regression estimates $\beta^{(k)} \in \R^8$,
  $k=1,2,3,\ldots$, with the 8 coordinates plotted in 
  different colors.  The stagewise algorithm was run with
  $\epsilon=0.01$ for 250 iterations, and the x-axis here gives the
  $\ell_1$ norm of the estimates across iterations.  The right panel
  shows the lasso solution path, also parametrized by the
  $\ell_1$ norm of the estimate.  The similarity between the stagewise
  and lasso paths is visually striking; for small enough $\epsilon$,
  they appear identical.  This is not a coincidence and
  has been rigorously studied by \citet{lars}, and other authors; in
  Section \ref{sec:lasso} we provide an intuitive explanation
  for this phenomenon.}
\label{fig:lasso}
\end{figure}

This ``slow learning'' property is a key difference between
forward stagewise regression and the closely-named {\it forward
stepwise regression} procedure: at each iteration, the latter algorithm
chooses a variable in a similar manner to that in
\eqref{eq:fsrdir}\footnote{If $A$ denotes the active set at the end of
iteration $k-1$, then at iteration $k$ forward stepwise chooses the
variable $i$ such that the sum of squared errors from regressing $y$
onto the variables in $A \cup \{i\}$ is smallest.  This is
equivalent to choosing $i$ such that 
\smash{$|\widetilde{X}_i^T (y-X\beta^{(k-1)})|$} is largest, where 
$\beta^{(k-1)}$ denote the coefficients from regressing $y$ on the
variables in $A$, and \smash{$\widetilde{X}_i$} is the residual from
regressing $X_i$ on the variables in $A$.}, but once it does so, it
updates the fitted model by regressing $y$ on all variables
selected thus far.  While both are greedy algorithms, the stepwise
procedure is much greedier; after $k$ iterations, it produces a
model with exactly $k$ active variables.  Forward stagewise and
forward stepwise are old techniques (some classic references for
stepwise regression methods are \citet{efroy} and \citet{drapersmith}, 
but there could have been
earlier relevant work).  According to \citet{esl}, forward stagewise
was historically dismissed by statisticians as being  
``inefficient'' and hence less useful than methods like forward
or backward stepwise.  This is perhaps understandable, 
if we keep in mind the limited computational resources of
the time.  From a modern perspective, however, we now appreciate
that ``slow learning'' is a form of regularization and can present
considerable benefits in terms of the generalization error of the
fitted models---this is seen not only in regression, but across
variety of settings.  
Furthermore, by modern standards, forward stagewise is
computationally cheap: to trace out a path of regularized estimates,
we repeat very simple iterations, each one requiring (at most) $p$
inner products, computations that could be trivially parallelized.

The revival of interest in stagewise regression 
began with the work of \citet{lars}, where the authors derived a
surprising connection between the sequence of forward stagewise 
estimates and the solution path of the {\it lasso} \citep{lasso}, 
\begin{equation}
\label{eq:lasso}
\hbeta(t) = \argmin_{\beta\in\R^p} \,\half\|y-X\beta\|_2^2 
\;\,\st\;\, \|\beta\|_1 \leq t,
\end{equation}
over the regularization parameter $t \geq 0$.  The relationship
between stagewise and the lasso will be reviewed in Section
\ref{sec:lasso} in detail, but the two
panels in Figure \ref{fig:lasso} tell the essence of the story.  The 
stagewise paths, on the left, appear to be jagged versions of their  
lasso counterparts, on the right.  Indeed, as the step size $\epsilon$
is made smaller, this jaggedness becomes less noticeable, and
eventually the two sets of paths appear exactly the same.  
This is not a coincidence, and under some conditions (on the problem 
instance in consideration), it is known that the stagewise path
converges to the lasso path, as $\epsilon \rightarrow 0$.
Interestingly, when these conditions do not hold, stagewise estimates
can deviate substantially from lasso solutions, and yet in
such situations the former estimates can still perform competitively
with the latter, say, in terms of test error (or really any other
standard error metric). This is an important point, and it supports
the use of stagewise regression as a general tool for regularized
estimation.    

\subsection{Summary of our contributions}
\label{sec:summary}

This paper departs from the lasso setting and considers the generic
convex problem   
\begin{equation}
\label{eq:genprob}
\hx(t) \in \argmin_{x\in\R^n} \, f(x) 
\;\,\st\;\, g(x) \leq t,
\end{equation}
where $f,g : \R^n \rightarrow \R$ are convex functions, and $f$ is
differentiable.
Motivated by forward stagewise regression and its connection to the
lasso, our main contribution is the following 
{\it general stagewise algorithm} for producing an approximate
solution path of \eqref{eq:genprob}, as the regularization parameter
$t$ varies over $[t_0,\infty)$. 

\begin{algorithm}[\textbf{General stagewise procedure}]
\label{alg:genstage}
\hfill\par
\smallskip\smallskip
\noindent
Fix $\epsilon>0$ and $t_0\in\R$. Initialize $x^{(0)}=\hx(t_0)$, a solution
in \eqref{eq:genprob} at $t=t_0$. Repeat, for $k=1,2,3,\ldots$,
\begin{gather}
\label{eq:stageup}
x^{(k)} = x^{(k-1)} + \Delta, \\ 
\label{eq:stagedir}
\text{where}\;\,
\Delta \in \argmin_{z\in\R^n} \,\,
\langle \nabla f(x^{(k-1)}), z \rangle
\;\,\st\;\, g(z) \leq \epsilon.
\end{gather}
\end{algorithm}

The intuition behind the general stagewise algorithm can be seen right
away: at each iteration, we update the current iterate in a direction
that minimizes the inner product with the gradient of $f$ (evaluated
at the current iterate), but simultaneously restrict this direction to
be small under $g$.  By applying these updates repeatedly, we
implicitly adjust the trade-off  between minimizing $f$ and
$g$, and hence one can imagine  that the $k$th iterate $x^{(k)}$
approximately solves \eqref{eq:genprob} with $t=g(x^{(k)})$. In Figure
\ref{fig:examples}, we show a few simple examples of the general
stagewise paths implemented for various different choices of loss
functions $f$ and regularizing functions $g$.  

\begin{figure}[htbp]
\vspace{-10pt}
\centering
\includegraphics[width=0.35\textwidth]{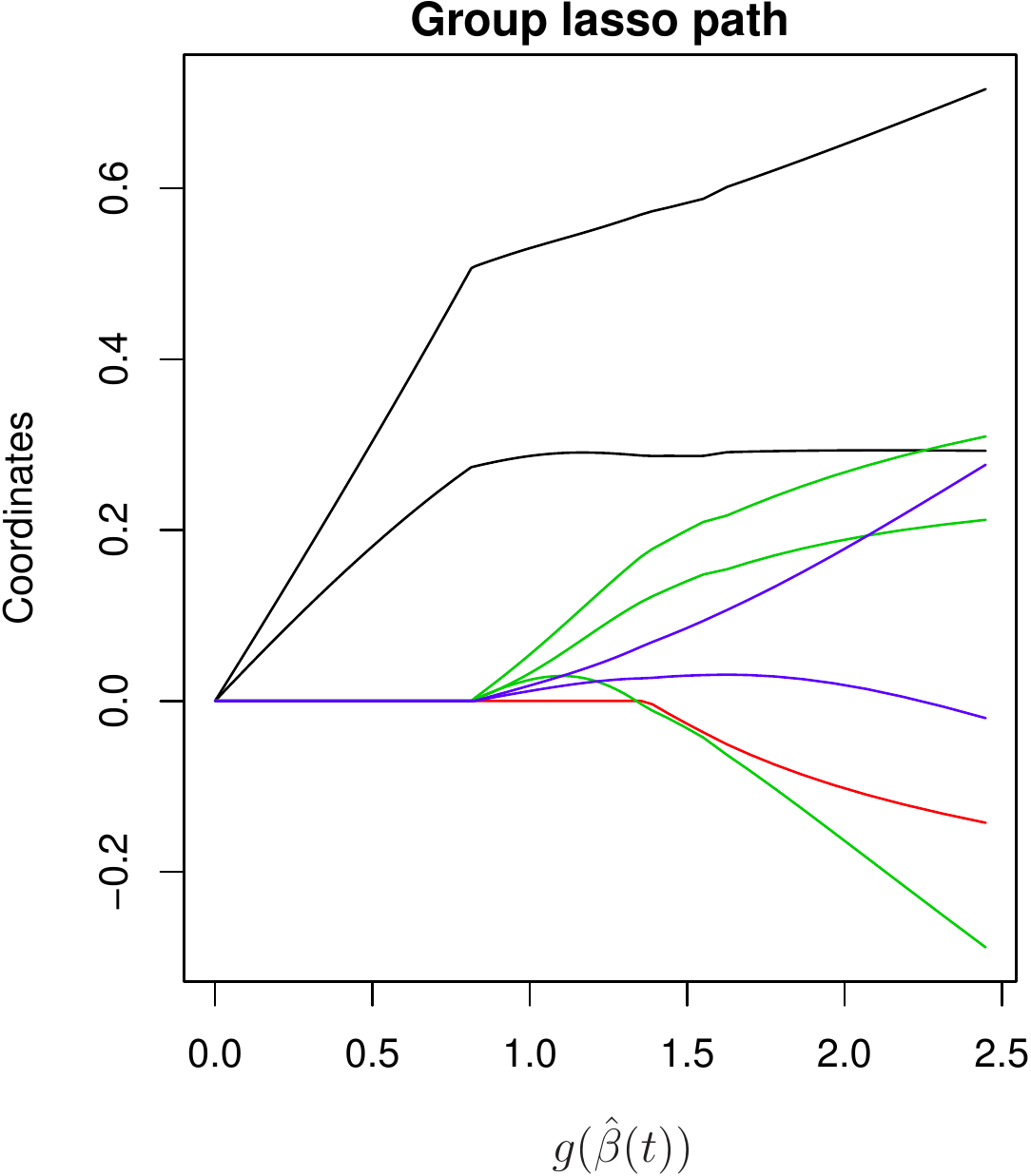} 
\includegraphics[width=0.35\textwidth]{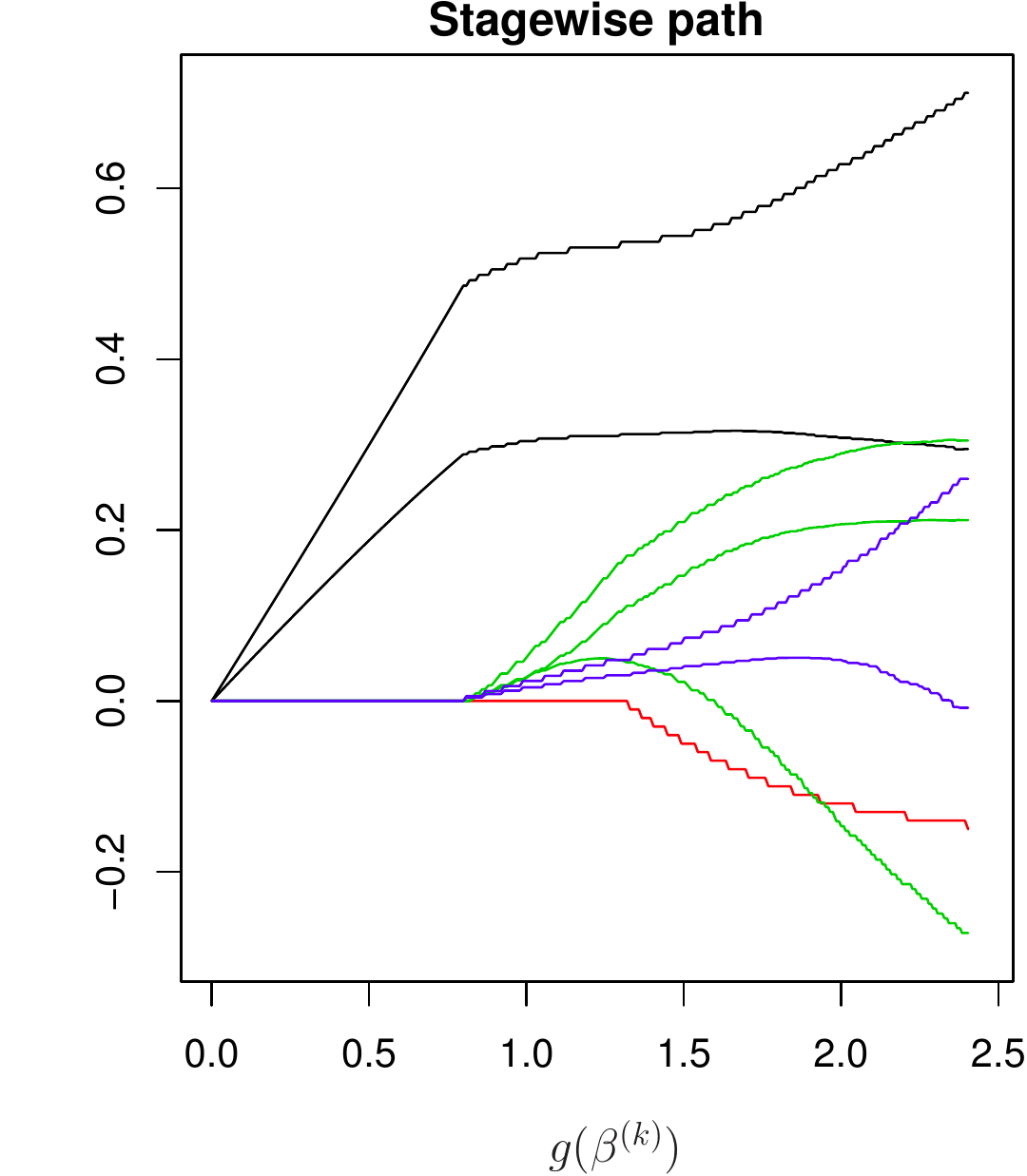} 
\\ \hspace{0pt} \\
\includegraphics[width=0.35\textwidth]{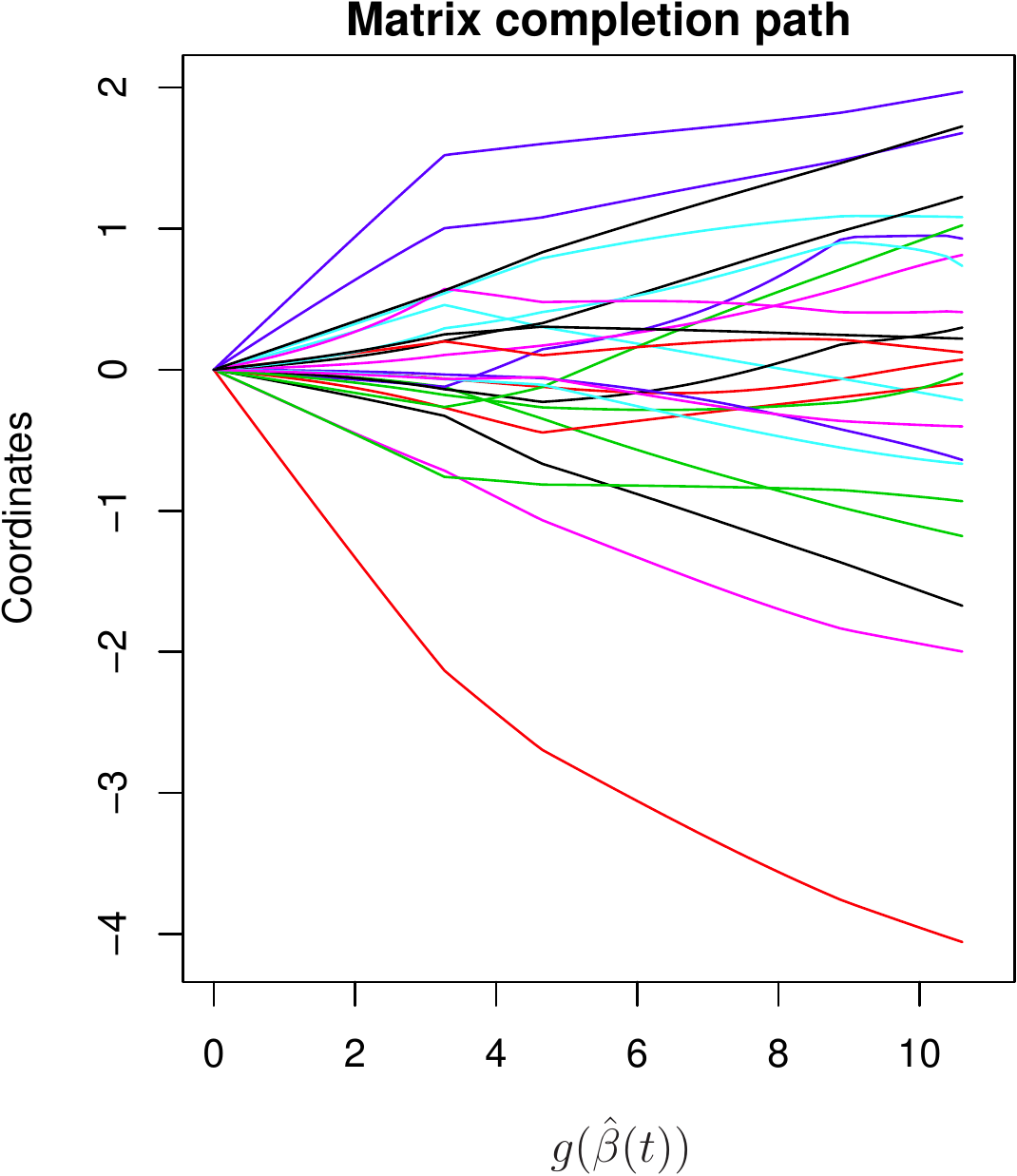} 
\includegraphics[width=0.35\textwidth]{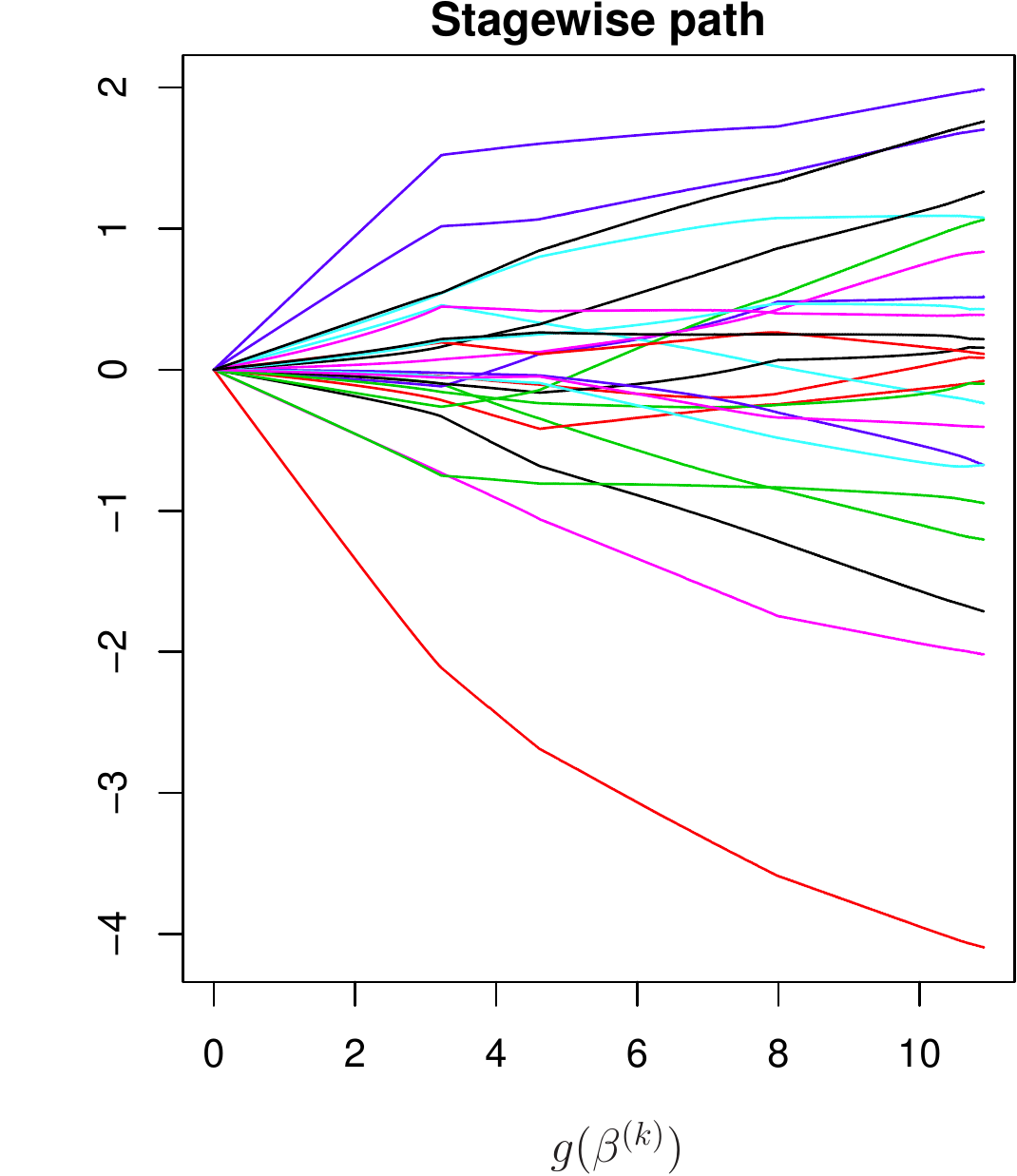} 
\\ \hspace{0pt} \\
\includegraphics[width=0.35\textwidth]{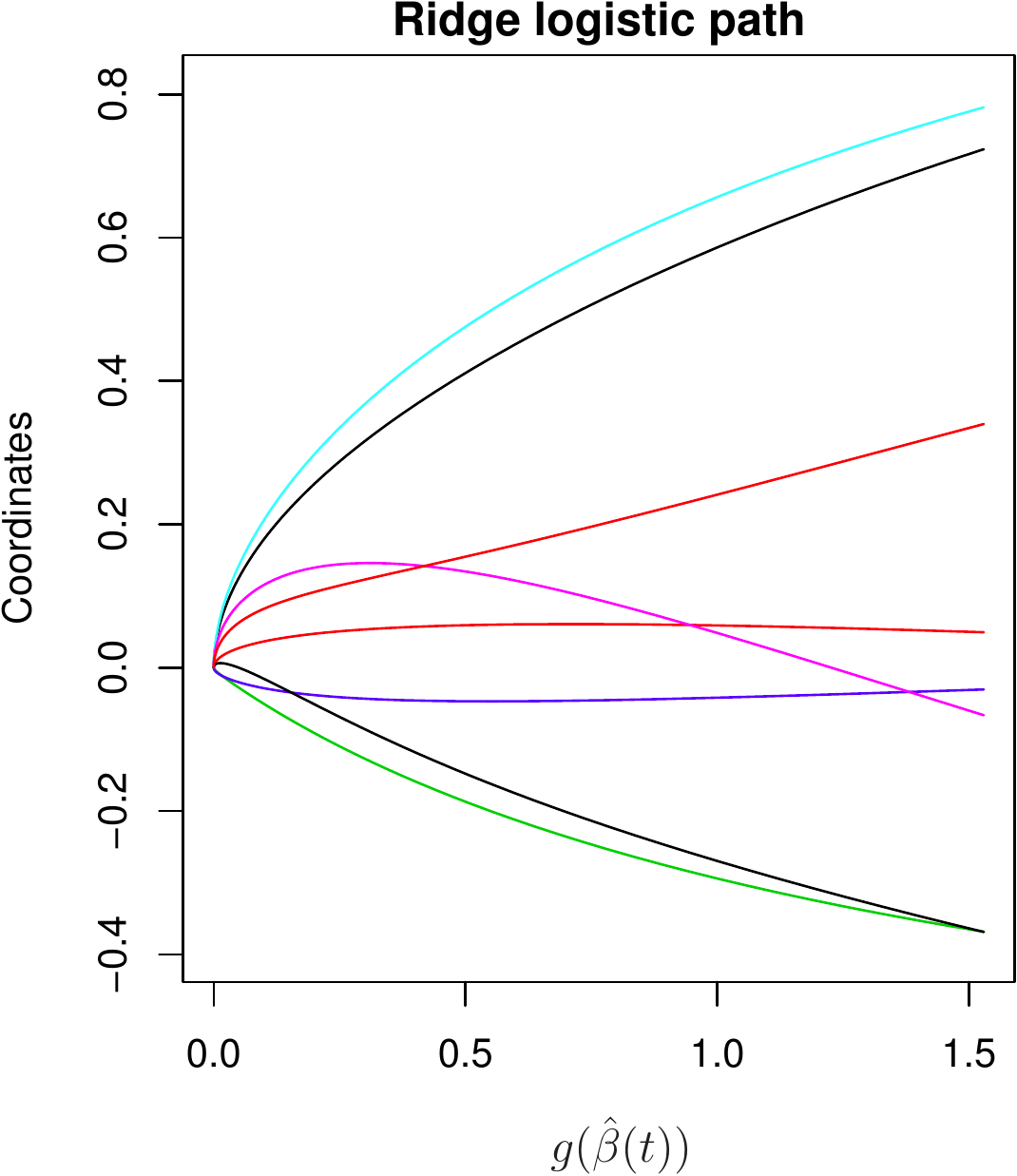} 
\includegraphics[width=0.35\textwidth]{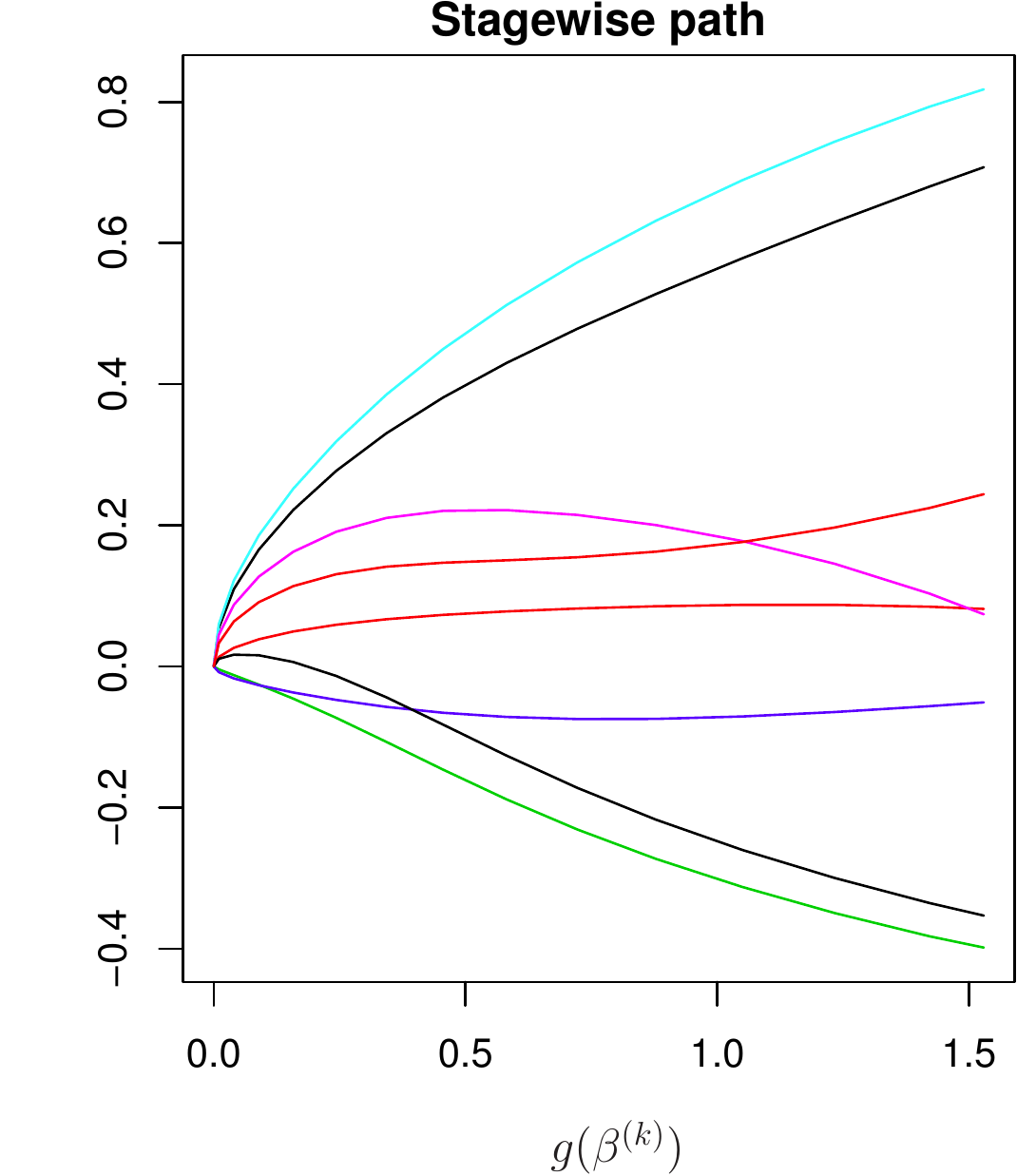} 
\caption{\it Examples comparing the actual solution paths (left 
  column) to the stagewise paths (right column) across various problem
  contexts, using the prostate cancer data set.  The first row
  considers a group lasso model on the prostate data (where the groups
  were somewhat arbitrarily chosen based on the predictor types); the
  second row considers a matrix completion task, on a partially
  observed submatrix of the full predictor matrix; the third row
  considers a logistic regression model with ridge regularization
  (the outcome being the indicator of log PSA $> 1$).  In each
  case, the stagewise estimates were very easy to compute; Sections
  \ref{sec:groupreg}, \ref{sec:tracereg}, and \ref{sec:quadreg}
  discuss these problem settings in detail.}   
\label{fig:examples}
\end{figure}

In the next section, we develop further intuition
and motivation for the general stagewise procedure, and we tie in 
forward stagewise regression as a special case.  The rest of this
article is then dedicated to the implementation and analysis of
stagewise algorithms: Section \ref{sec:applications} derives the
specific form of the stagewise updates \eqref{eq:stageup},
\eqref{eq:stagedir} for various problem setups, Section    
\ref{sec:bigexamples} conducts large-scale empirical evaluations of 
stagewise estimates, Section \ref{sec:subopt} presents some 
theory on suboptimality, and Section \ref{sec:discussion} concludes
with a discussion.  

Throughout, our arguments and examples are centered around 
three points, summarized below.      

\begin{enumerate}
\item {\it Simple, fast estimation procedures.}
  The general framework for stagewise estimation in
  Algorithm \ref{alg:genstage} leads to simple and efficient stagewise  
  procedures for group-structured regularization problems (e.g., the
  group lasso, multitask learning), trace norm regularization
  problems (e.g., matrix completion), quadratic regularization problem
  problems (e.g., nonparametric smoothing), and (some) generalized
  lasso problems (e.g., image denoising).  For such problems, the
  proposed stagewise procedures are often competitive with existing 
  commonly-used algorithms in terms of efficiency, and are generally
  much simpler.       

\item {\it Similar to actual solution paths, but more stable.}
  In many examples, the computed stagewise path is highly 
  similar to the actual solution path of the corresponding convex
  regularization problem in \eqref{eq:genprob}---typically, this
  happens when the components of the actual solution change 
  ``slowly'' with the regularization parameter $t$.  In many others,
  even though it shares gross characteristics of the actual solution 
  path, the stagewise path is different---typically, this
  happens when the components of the actual solution change
  ``rapidly'' with $t$, and the stagewise component paths are much
  more stable.     

\item {\it Competitive statistical performance.}  Across essentially
  all cases, even those in which its constructed path is not close to
  the actual solution path, the stagewise algorithm performs
  favorably from a statistical point of view.  That is,
  stagewise estimates are comparable to solutions in
  \eqref{eq:genprob} with respect to relevant error metrics, across  
  various problem settings.  This suggests that 
  stagewise estimates deserved to be studied on their own, regardless
  of their proximity to solutions in \eqref{eq:genprob}. 
\end{enumerate}

The third point above, on the favorable statistical properties of
stagewise estimates, is based on empirical arguments, rather than    
theoretical ones. Statistical theory for stagewise estimates is
an important topic for future work.

\section{Properties of the general stagewise framework}
\label{sec:properties}

\subsection{Motivation: stagewise regression and the lasso}
\label{sec:lasso}

The lasso estimator is a popular tool for sparse estimation in the
regression setting.  Displayed in \eqref{eq:lasso}, we assume for 
simplicity that the lasso solution 
\smash{$\hbeta(t)$} in \eqref{eq:lasso} is unique, which holds under
very weak conditions on $X$.\footnote{For example, it suffices to
  assume that $X$ has columns in general position, see
  \citet{lassounique}.  Note that here we are only claiming uniqueness
  for all parameter values $t < t^*$, where $t^*$ is the smallest
  $\ell_1$ norm of a least squares solution of $y$ on $X$.}  
Recall that the parameter $t$ controls the level of sparsity in the
estimate \smash{$\hbeta(t)$}: when $t=0$, we have
\smash{$\hbeta(0)=0$}, and as $t$ increases, select components of
\smash{$\hbeta(t)$} become nonzero, corresponding to 
variables entering the lasso model (nonzero components of
\smash{$\hbeta(t)$} can also become zero, corresponding to variables
leaving the model).  The solution path \smash{$\hbeta(t)$, $t\in
  [0,\infty)$} is continuous and piecewise linear as a function of
$t$, and for a large enough value of $t$, the path culminates in a
least squares estimate of $y$ on $X$.  

The right panel of Figure \ref{fig:lasso} shows an example of the
lasso path, which, as we discussed earlier, appears quite similar 
to the stagewise path on the left.
This is explained by the seminal work of \citet{lars}, who describe
two algorithms  
(actually three, but the third is unimportant for our purposes): one
for explicitly constructing the lasso path \smash{$\hbeta(t)$} as a
continuous, piecewise linear function of the regularization
parameter $t \in [0,\infty)$, and another for computing the limiting
stagewise regression paths as $\epsilon \rightarrow 0$.
One of the (many) consequences of their work is the following: if
each component of the lasso solution path \smash{$\hbeta(t)$} is a 
monotone function of $t$, then these two algorithms coincide, and
therefore so do the stagewise and lasso paths (in the limit as
$\epsilon \rightarrow 0$).  Note that the lasso paths for the
data example in Figure \ref{fig:lasso} are indeed monotone, and hence
the theory confirms the observed convergence of stagewise and
lasso estimates in this example.  

The lasso has undergone intense study as a regularized
regression estimator, and its statistical properties (e.g., its
generalization error, or its ability to detect a truly relevant set
of variables) are more or less well-understood at this point.  Many of
these properties cast the lasso in a favorable light.  Therefore, the
equivalence between the (limiting) stagewise and lasso paths
lends credibility to forward stagewise as a regularized
regression procedure: for a small step size $\epsilon$, we know that
the forward stagewise estimates will be close
to lasso estimates, at least when the individual coordinate paths are
monotone.  At a high level, it is actually somewhat remarkable that
such a simple algorithm, Algorithm \ref{alg:fsr}, can produce
estimates that can stand alongside those defined by the (relatively)
sophisticated optimization problem in \eqref{eq:lasso}.
There are now several interesting points to raise.

\begin{itemize}
\item {\it The nonmonotone case.}  In practice, the components of the
  lasso path are rarely monotone.  How do the stagewise and
  lasso paths compare in such cases?  A precise theoretical answer is
  not known, but empirically, these paths can be quite
  different. In particular, for problems in which the predictors
  $X_1,\ldots X_p$ are correlated, the lasso coordinate
  paths can be very wiggly (as variables can enter and leave the model
  repeatedly), while the stagewise paths are often very stable; see,
  e.g., \citet{monotonelasso}.  In support of these empirical
  findings, the latter authors
  derived a local characterization of the lasso and forward stagewise
  paths: they show that at any point along the path, 
  the lasso estimate decreases the sum of squares loss function at an 
  optimal rate with respect to the increase in $\ell_1$ norm, and
  the (limiting) forward stagewise estimate decreases the loss
  function at an optimal rate with respect to the increase in $\ell_1$
  arc length.  Loosely speaking, since the $\ell_1$ arc length
  accounts for the entire history of the path up until the current
  point, the (limiting) stagewise algorithm is less ``willing'' to
  produce wiggly estimates. 

  Despite these differences, stagewise estimates tend to perform
  competitively with lasso estimates in terms of test error, and this
  is true even with highly correlated predictor variables, when
  the stagewise and lasso paths are very different (such statements are
  based on simulations, and not theory; see
  \citet{monotonelasso,knudsenthesis}). 
  This is a critical point, as it suggests that stagewise should be 
  considered as an effective tool for regularized estimation, apart
  from any link to a convex problem. We return to this idea throughout
  the paper.      

\item {\it General convex loss functions.}  Fortunately, the stagewise
  method extends naturally to sparse modeling in other settings,
  beyond Gaussian regression.  Let $f : \R^p \rightarrow \R$ be a
  differentiable convex loss function, e.g.,
  \smash{$f(\beta)=\half\|y-X\beta\|_2^2$} 
  for the regression setting.  Beginning again with $\beta^{(0)}=0$,
  the analogy of the stagewise steps in \eqref{eq:fsrup},
  \eqref{eq:fsrdir} for the present general setting are
\begin{gather}
\label{eq:fsup}
\beta^{(k)} = \beta^{(k-1)} - \epsilon \cdot 
\sign\big(\nabla_i f(\beta^{(k-1)}) \big) \cdot e_i, \\
\label{eq:fsdir}
\text{where}\;\, i \in 
\argmax_{j=1,\ldots p} \, |\nabla_j f (\beta^{(k-1)})|.
\end{gather}
That is, at each iteration we update $\beta^{(k)}$ in the direction
opposite to the largest component of the gradient (largest in absolute 
value).  Note that this reduces to the usual update rules 
\eqref{eq:fsrup}, \eqref{eq:fsrdir} when \smash{$f(\beta)=\half
\|y-X\beta\|_2^2$.}  \citet{boostpath} studied the stagewise routine
\eqref{eq:fsup}, \eqref{eq:fsdir}, and its connection to the
$\ell_1$-constrained estimate
\begin{equation}
\label{eq:l1reg}
\hbeta(t) = \argmin_{\beta \in \R^p} \, f(\beta) \;\,\st\;\,
\|\beta\|_1 \leq t.
\end{equation}
Similar to the result for lasso regression, these authors prove that
if the solution \smash{$\hbeta(t)$} in \eqref{eq:l1reg} has monotone
coordinate paths, then under mild 
conditions\footnote{Essentially, \citet{boostpath} assume
that conditions on $f$ that imply a unique solution in
\eqref{eq:l1reg}, and allow for a second order Taylor expansion
of $f$.  Such conditions are that $f(\beta) = h(X\beta)$,
with $h$ twice differentiable and strictly convex, and $X$ having
columns in general position.} on $f$, the stagewise paths given by 
\eqref{eq:fsup}, \eqref{eq:fsdir} converge to the path 
\smash{$\hbeta(t)$} as 
$\epsilon \rightarrow 0$.  This covers, e.g., the 
cases of logistic regression and Poisson regression losses, with
predictor variables $X$ in general position.  The same general
message, as in the linear regression setting, applies here: compared 
to the relatively complex optimization problem \eqref{eq:l1reg},
the stagewise algorithm \eqref{eq:fsup}, \eqref{eq:fsdir} is 
very simple.  The most (or really, the only) advanced part of each  
iteration is the computation of the gradient $\nabla
f(\beta^{(k-1)})$; in the logistic or Poisson regression settings, the
components of $\nabla f(\beta^{(k-1)})$ are given by  
\begin{equation*}
\nabla_j f(\beta^{(k-1)}) = X_j^T \big(y-\mu(\beta^{(k-1)})\big),
\;\;\; j=1,\ldots p,  
\end{equation*}
where $y \in \R^n$ is the outcome and $\mu(\beta^{(k-1)}) \in \R^n$
has components 
\begin{equation*}
\mu_i(\beta^{(k-1)}) = \begin{cases}
1/[1+\exp(-(X\beta^{(k-1)})_i)] & \text{for logistic regression} \\
\exp((X\beta^{(k-1)})_i) & \text{for Poisson regression}
\end{cases},
\;\;\; i=1,\ldots n.
\end{equation*}
Its precise connection to the $\ell_1$-constrained optimization
problem \eqref{eq:l1reg} for monotone paths is
encouraging, but even outside of this case, the simple and efficient
stagewise algorithm \eqref{eq:fsup}, \eqref{eq:fsdir} produces
regularized estimates deserving of attention in their own right.  

\item {\it Forward-backward stagewise.} \citet{stagewiselasso}
  examined a novel modification of forward stagewise, under  
  a general loss function $f$: at each iteration, their
  proposal takes a backward step (i.e., moves a component
  of $\beta^{(k)}$ towards zero) if this would decrease the loss
  function by a sufficient amount $\xi$; otherwise it takes a  
  forward step as usual.  The authors prove that, as long as the 
  parameter $\xi$ used for the backward steps scales as
  $\xi=o(\epsilon)$, the path from this forward-backward stagewise
  algorithm converges to the solution 
  path in \eqref{eq:l1reg} as $\epsilon \rightarrow 0$.  The
  important distinction here is that their result does not assume
  monotonicity of the coordinate paths in \eqref{eq:l1reg}.  
  (It does, however, assume that the loss 
  function $f$ is strongly convex---in the linear regression setting, 
  \smash{$f(\beta)=\half \|y-X\beta\|_2^2$}, this is equivalent to
  assuming that $X \in \R^{n\times p}$ has linearly independent  
  predictors, which
  requires $n \geq p$).\footnote{It is also worth pointing out that
    the type of convergence considered by \citet{stagewiselasso} is  
    stronger than that considered by \citet{lars} and
    \citet{boostpath}.  The former authors prove that, under suitable
    conditions, the entire stagewise path converges globally to the
    lasso solution path; the latter authors only prove a local type of
    convergence, that has to do with the limiting stagewise and lasso
    directions at any fixed point along the path.}  The
  forward-backward stagewise algorithm hence provides another
  way to view the connection between (the usual) forward stagewise
  steps \eqref{eq:fsup}, \eqref{eq:fsdir} and the
  $\ell_1$-regularized optimization problem \eqref{eq:l1reg}: 
  the forward stagewise path is an approximation to the solution path
  in \eqref{eq:l1reg} given by skipping the requisite backward
  steps needed to correct for nonmonotonicities.
\end{itemize}

Clearly, there has been some fairly extensive work connecting the
stagewise estimates \eqref{eq:fsrup}, \eqref{eq:fsrdir} and the lasso
estimate \eqref{eq:lasso}, or more generally, the stagewise estimates 
\eqref{eq:fsup}, \eqref{eq:fsdir} and the $\ell_1$-constrained
estimate \eqref{eq:l1reg}.  Still, however, this connection seems
mysterious.  Both methods produce a regularization path, with a fully
sparse model on one end, and a fully dense model on the other---but 
beyond this basic degree of similarity, why should we expect the
stagewise path \eqref{eq:fsup}, \eqref{eq:fsdir} and the
$\ell_1$ regularization path \eqref{eq:l1reg} to be so closely
related?   
The work referenced above gives a mathematical treatment of this
question, and we feel, does not provide much intuition.  
In fact, there is a simple interpretation of the forward stagewise 
algorithm that explains its connection to the lasso problem, seen
next. 

\subsection{A new perspective on forward stagewise regression} 

We start by rewriting the steps \eqref{eq:fsup}, \eqref{eq:fsdir}
for the stagewise algorithm, under a general loss $f$, as
\begin{gather*}
\beta^{(k)}=\beta^{(k-1)}+\Delta, \\
\text{where} \;\,
\Delta = - \epsilon\cdot \sign\big(\nabla_i f(\beta^{(k-1)})\big)
\cdot e_i , \\
\text{and} \;\,
|\nabla_i f(\beta^{(k-1)})| = \|\nabla f(\beta^{(k-1)})\|_\infty.
\end{gather*}
As $\nabla_i f(\beta^{(k-1)})$ is maximal in absolute value among all
components of the gradient, the quantity $\sign(\nabla_i
f(\beta^{(k-1)}))\cdot e_i$ is a  subgradient of the $\ell_\infty$
norm evaluated at $\nabla f(\beta^{(k-1)})$: 
\begin{equation*}
\Delta \in -\epsilon \cdot \Big(\partial \|x\|_\infty 
\Big|_{x=\nabla f(\beta^{(k-1)})}\Big).
\end{equation*}
Using the duality between the $\ell_\infty$ and $\ell_1$ norms, 
\begin{equation*}
\Delta \in -\epsilon \cdot \Big(
\argmax_{z \in \R^p} \,\,
\langle \nabla f(\beta^{(k-1)}), z \rangle
\;\,\st\;\, \|z\|_1 \leq 1 \Big),
\end{equation*}
or equivalently,
\begin{equation*}
\Delta \in 
\argmin_{z \in \R^p} \,\,
\langle \nabla f(\beta^{(k-1)}), z \rangle
\;\,\st\;\, \|z\|_1 \leq \epsilon.
\end{equation*}
(Above, as before, the element notation emphasizes that the
maximizer or minimizer is not necessarily unique.)  
Hence the forward stagewise steps \eqref{eq:fsup}, \eqref{eq:fsdir}
satisfy 
\begin{gather}
\label{eq:fsup2}
\beta^{(k)} = \beta^{(k-1)} + \Delta, \\
\label{eq:fsdir2}
\text{where}\;\,
\Delta \in \argmin_{z\in\R^p} \,\,
\langle \nabla f(\beta^{(k-1)}), z \rangle
\;\,\st\;\, \|z\|_1 \leq \epsilon.
\end{gather}
Written in this form, the stagewise algorithm exhibits a
natural connection to the $\ell_1$-regularized optimization problem
\eqref{eq:l1reg}. At each iteration, forward
stagewise moves in a direction that minimizes the inner product with  
the gradient of $f$, among all directions
constrained to have a small $\ell_1$ norm; therefore, the sequence of 
stagewise estimates balance (small) decreases in the loss
function $f$ with (small) increases in the $\ell_1$ norm, just like
the solution path in \eqref{eq:l1reg}, as the regularization parameter
$t$ increases.  This intuitive perspective aside, the
representation \eqref{eq:fsup2}, \eqref{eq:fsdir2} for the forward
stagewise estimates is important because it inspires an analogous
approach for general convex regularization problems.  This was
already presented in Algorithm \ref{alg:genstage}, and next we discuss
it further.


\subsection{Basic properties of the general stagewise procedure}   
\label{sec:basicprops}

Recall the general minimization problem in \eqref{eq:genprob},
where we assume that the loss function $f$ is convex and
differentiable, and the regularizer $g$ is convex.  It can now be seen
that the steps \eqref{eq:stageup}, \eqref{eq:stagedir} in
the general stagewise procedure in Algorithm \ref{alg:genstage} are
directly motivated by the forward stagewise steps, as expressed in 
\eqref{eq:fsup2}, \eqref{eq:fsdir2}.  The explanation is similar to
that given above: as we repeat the steps of the algorithm, the
iterates are constructed to decrease the loss function $f$ (by
following its negative gradient) at the cost of a small increase in
the regularizer $g$. In this sense, the stagewise algorithm navigates
the trade-off between minimizing $f$ and $g$, and produces an
approximate regularization path for \eqref{eq:genprob}, i.e., the
$k$th iterate $x^{(k)}$ approximately solves problem
\eqref{eq:genprob} with $t=g(x^{(k)})$.    

From our work at the end of the last subsection, it is clear that 
forward stagewise regression \eqref{eq:fsup},
\eqref{eq:fsdir}, or equivalently \eqref{eq:fsup2}, \eqref{eq:fsdir2},
is a special case of the general stagewise procedure,
applied to the $\ell_1$-regularized problem \eqref{eq:l1reg}.
Moreover, the general stagewise procedure can be applied in many
other settings, well beyond $\ell_1$ regularization, as we show in the 
next section. Before presenting these applications, we now make
several basic remarks.

\begin{itemize}
\item {\it Initialization and termination.}
In many cases, initializing the algorithm is easy:
if $g(x)=0$ implies $x=0$ (e.g., this is true when $g$ is a
norm), then we can start the stagewise procedure at $t_0=0$ and
$x^{(0)}=0$.  In terms of a stopping criterion, a general strategy
for (approximately) tracing a full solution path is to stop the
algorithm when $g(x^{(k)})$ does not change very much between
successive 
iterations. If instead the algorithm has been terminated upon reaching
some maximum number of iterations or some maximum value of
$g(x^{(k)})$, and more iterations are desired, then the algorithm can
surely be restarted from the last reached iterate $x^{(k)}$.  

\item {\it First-order justification.}
If $g$ satisfies the triangle inequality (again, e.g., it would as a
norm), then the increase in the value of $g$ between successive
iterates is bounded by $\epsilon$:
\begin{equation*}
g(x^{(k)}) \leq g(x^{(k-1)}) + g(\Delta) \leq g(x^{(k-1)}) + \epsilon.
\end{equation*}
Furthermore, we can give a basic (and heuristic) justification of the
stagewise steps \eqref{eq:stageup}, \eqref{eq:stagedir}. Consider the
minimization problem \eqref{eq:genprob} at the parameter 
$t=g(x^{(k-1)})+\epsilon$; we can write this as
\begin{equation*}
\hx(t) \in \argmin_{x\in\R^n} \, f(x) - f(x^{(k-1)})
\;\,\st\;\, g(x) - g(x^{(k-1)}) \leq \epsilon,
\end{equation*}
and then reparametrize as
\begin{gather}
\label{eq:trueup}
\hx(t) = x^{(k-1)} + \Delta^*, \\
\label{eq:truedir}
\Delta^* \in \argmin_{z\in\R^n} 
\, f(x^{(k-1)}\hspace{-1pt}+\hspace{-1pt}z) 
- f(x^{(k-1)}) \;\,\st\;\, 
g(x^{(k-1)}\hspace{-1pt}+\hspace{-1pt}z) - g(x^{(k-1)}) \leq
\epsilon.
\end{gather}
We now modify the problem \eqref{eq:truedir} in two ways:
first, we replace the objective function in \eqref{eq:truedir} with
its first-order (linear) Taylor approximation around $x^{(k-1)}$,  
\begin{equation}
\label{eq:taylor}
\langle \nabla f(x^{(k-1)}), z \rangle
\approx f(x^{(k-1)}+z) - f(x^{(k-1)}),
\end{equation} 
and second, we shrink the constraint set in \eqref{eq:truedir} to
\begin{equation*}
\{z \in \R^n : g(z) \leq \epsilon \} \subseteq 
\{ z \in \R^n: g(x^{(k-1)}+z)-g(x^{(k-1)}) \leq \epsilon \},
\end{equation*}
since, as noted earlier, any element of the left-hand side above is an element of 
the right-hand side by the triangle inequality. These two modifications 
define a different update direction
\begin{equation*}
\Delta \in \argmin_{z\in\R^n} \, \langle \nabla f(x^{(k-1)}), z \rangle
\;\,\st\;\, g(z) \leq \epsilon,
\end{equation*}
which is exactly the direction \eqref{eq:stagedir} in the general
stagewise procedure. Hence the stagewise algorithm chooses $\Delta$
as above, rather than choosing the actual direction $\Delta^*$ in 
\eqref{eq:truedir}, to perform an update step from $x^{(k-1)}$. This
update results in a feasible point 
$x^{(k)}=x^{(k-1)}+\Delta$ for the problem \eqref{eq:genprob} at
$t=g^{(k-1)}+\epsilon$; 
of course, the point $x^{(k)}$ is not necessarily optimal, but as
$\epsilon$ gets smaller, the first-order Taylor approximation in
\eqref{eq:taylor} becomes tighter, so one would imagine that the point
$x^{(k)}$ becomes closer to optimal. 

\item {\it Dual update form.}
If $g$ is a norm, then the update direction defined in
\eqref{eq:stagedir} can be expressed more succinctly in terms of the 
dual norm $g^*(x) = \max_{g(z) \leq 1} x^T z$. We write 
\begin{align}
\nonumber
\Delta &\in -\epsilon\cdot \Big(\argmax_{z\in\R^n} \, \langle \nabla f(x^{(k-1)}), z \rangle
\;\,\st\;\, g(z) \leq 1\Big) \\
\label{eq:dualdir}
&= -\epsilon\cdot \partial g^*\big(\nabla f(x^{(k-1)})\big),
\end{align}
i.e., the direction $\Delta$ is $-\epsilon$ times a subgradient of the dual 
norm $g^*$ evaluated at $\nabla f(x^{(k-1)})$. 
This is a useful observation, since many norms admit a known dual norm
with known subgradients; we will see examples of this in the coming
section. 

\item {\it Invariance around $\nabla f$.}
The level of difficulty associated with computing the update
direction, i.e., in solving problem \eqref{eq:stagedir}, depends 
entirely on $g$ and not on $f$ at all (assuming that $\nabla f$ can be
readily computed). We can think of $\Delta$ as an operator on $\R^n$: 
\begin{equation}
\label{eq:opdir}
\Delta(x) \in \argmin_{z\in\R^n} \,\, \langle x,z \rangle
\;\,\st\;\, g(z) \leq \epsilon.
\end{equation}
This operator $\Delta(\cdot)$ is often called the 
{\it linear minimization oracle} associated with the function $g$, in
the optimization literature.  At each input $x$, it returns a
minimizer of the problem in \eqref{eq:opdir}.   Provided that
$\Delta(\cdot)$ can be 
expressed in closed-form---which is fortuitously the case for many
common statistical optimization problems, as we will see in the
sections that follow---the stagewise update step \eqref{eq:stageup}
simply evaluates 
this operator at $\nabla f(x^{(k-1)})$, and adds the result to $x^{(k-1)}$:
\begin{equation*}
x^{(k)} = x^{(k-1)} + \Delta\big(\nabla f(x^{(k-1)}) \big).
\end{equation*}
An analogy can be drawn here to the proximal operator 
in proximal gradient descent, used for minimizing the composite
function $f+g$, where $f$ is smooth but $g$ is (possibly)
nonsmooth. The proximal operator is defined entirely in terms of
$g$, and as long as it can be expressed analytically, the generalized
gradient update for $x^{(k)}$ simply uses the output of this operator
at $\nabla f(x^{(k-1)})$. 

\item {\it Unbounded stagewise steps.}  
Suppose that $g$ is a seminorm, i.e., it satisfies $g(ax)=|a|g(x)$
for $a \in \R$, and $g(x+y) \leq g(x) + g(y)$, but $g$ can have a
nontrivial null space, $N_g = \{x \in \R^n : g(x) = 0\}$. 
In this case, the stagewise update step in \eqref{eq:stageup} can be
unbounded; in particular, if  
\begin{equation}
\label{eq:nullg}
\langle \nabla f(x^{(k)}), z \rangle
\not= 0 \;\;\; \text{for some $z \in N_g$},
\end{equation}
then we can drive $\langle \nabla f(x^{(k)}), z \rangle \rightarrow
-\infty$ along a sequence with $g(z)=0$, and so the stagewise update
step would be clearly undefined.
Fortunately, a simple modification of the general stagewise
algorithm can account for this problem.  Since we are assuming that
$g$ is a seminorm, the set $N_g$ is a linear subspace.  To 
initialize the general stagewise algorithm at say $t_0=0$, therefore,
we solve the linearly constrained optimization problem  
\begin{equation*}
x^{(0)} \in \argmin_{x\in N_g} \, f(x).
\end{equation*}
In subsequent stagewise steps, we then restrict the updates to lie in
the subspace orthogonal to $N_g$.  That is, to be explicit, we
replace \eqref{eq:stageup} \eqref{eq:stagedir} in Algorithm
\ref{alg:genstage} with  
\begin{gather}
\label{eq:stageupmod}
x^{(k)} = x^{(k-1)} + \Delta, \\ 
\text{where}\;\,
\label{eq:stagedirmod}
\Delta \in \argmin_{z \in N_g^\perp} \,\,
\langle \nabla f(x^{(k-1)}), z \rangle
\;\,\st\;\, g(z) \leq \epsilon,
\end{gather}
where \smash{$N_g^\perp$} denotes the orthocomplement of $N_g$.  We
will see this modification, e.g., put to use for the quadratic regularizer
$g(\beta)=\beta^T Q \beta$, where $Q$ is positive semidefinite and
singular. 
\end{itemize}

Some readers may wonder why we are working with the 
constrained problem \eqref{eq:genprob}, and not 
\begin{equation}
\label{eq:genprob2}
\hx(\lambda) \in \argmin_{x\in\R^n} \, f(x) + \lambda g(x),
\end{equation}
where $\lambda \geq 0$ is now the regularization parameter, and is
called the Lagrange multiplier associated with $g$. It is probably
more common in the current statistics and machine learning literature
for optimization problems to be expressed in the Lagrange form
\eqref{eq:genprob2}, rather than the constrained form
\eqref{eq:genprob}.  The solution paths of \eqref{eq:genprob} and
\eqref{eq:genprob2}   
(given by varying $t$ and $\lambda$ in their respective problems)
are not necessarily equal for general convex functions $f$ and 
$g$; however, they are equal under very mild assumptions\footnote{For
  example, it is enough to assume that $g \geq 0$, and that for all
  parameters $t, \lambda \geq 0$, the solution sets of
  \eqref{eq:genprob}, \eqref{eq:genprob2} are nonempty.}, which hold 
for all of the examples visited in this paper. Therefore, there is not
an important difference in terms of studying \eqref{eq:genprob} versus 
\eqref{eq:genprob2}. We choose to focus on \eqref{eq:genprob} as we
feel that the intuition for stagewise algorithms is easier to see 
with this formulation.   

\subsection{Related work}
\label{sec:related}

There is a lot of work related to the proposal of this paper.  
Readers familiar with optimization will likely identify the 
general stagewise procedure, in Algorithm \ref{alg:genstage}, 
as a particular type of (normalized) {\it steepest descent}.
Steepest descent is an iterative algorithm for minimizing a smooth
convex function $f$,
in which we update the current iterate in a direction that minimizes
the inner product with the gradient of $f$ (evaluated at the current
iterate), among all vectors constrained to have norm $\|\cdot\|$ 
bounded by 1 (e.g., see \citet{convex}); the step size for the
update can be chosen in any one of the usual ways
for descent methods.   Note that gradient descent is simply a
special case of steepest descent with $\|\cdot\|=\|\cdot\|_2$ 
(modulo normalizing factors). Meanwhile,
the general stagewise algorithm is just steepest
descent with $\|\cdot\|=g(\cdot)$, and a
constant step size $\epsilon$.  It is important to point out that our 
interest in the general stagewise procedure is different from
typical interest in steepest descent.  
In the classic usage of steepest descent, we seek to minimize a
differentiable convex function $f$; our choice of norm $\|\cdot\|$
affects the speed with which we can find such a minimizer, but under
weak conditions, any choice of norm will eventually bring us to a
minimizer nonetheless.  In the general stagewise algorithm, we are not
really interested in the final minimizer itself, but rather,
the path traversed in order to get to this minimizer. 
The stagewise path is composed of iterates that have
interesting statistical properties, 
given by gradually balancing $f$ and $g$; choosing different
functions $g$ will lead to generically different paths. Focusing
on the path, instead of its endpoint, may seem strange
to a researcher in optimization, but it is quite natural for
researchers in statistics and machine learning.   

Another method related to our general stagewise proposal 
is the {\it Frank-Wolfe algorithm} \citep{frankwolfe},
used to minimize a differentiable convex function $f$ over a convex 
set $C$.  Similar to (projected) gradient descent, which
iteratively minimizes local quadratic approximations of $f$ over $C$,
the Frank-Wolfe algorithm iteratively minimizes local linear
approximations of $f$ over $C$.
In a recent paper, \citet{jaggi} shed light on Frank-Wolfe as an
efficient, scalable algorithm for modern machine learning problems.
For a single value of the regularization parameter $t$, the
Frank-Wolfe algorithm can be used to solve problem
\eqref{eq:genprob}, taking as the constraint set 
$C=\{x : g(x) \leq t\}$; the Frank-Wolfe steps here look very similar
to the general stagewise steps \eqref{eq:stageup},
\eqref{eq:stagedir}, but an important distinction is that the iterates
from Frank-Wolfe result in a single estimate, rather than each iterate
constituting its own estimate along the regularization path, as in the
general stagewise procedure.  This connection deserves
more discussion, and so we dedicate a subsection of the appendix to 
it: see Appendix \ref{app:frankwolfe}.  Other well-known methods
based on 
local linearization are {\it cutting-plane} \citep{cuttingplane} and  
{\it bundle} \citep{cvxanalandmin} methods.  \citet{teo1} present a 
general bundle method for regularized risk minimization that is
particularly relevant to our proposal (see also \citet{teo2}); this is
similar to the Frank-Wolfe approach in that it solves the
problem \eqref{eq:genprob} at a fixed value of the parameter $t$ (one
difference is that its local linearization steps are based on the 
entire history of previous iterates, instead of just the single
last iterate).  For brevity, we do not conduct a
detailed comparison between their bundle method and our general 
stagewise procedure, though we believe it would be interesting to do
so.  

Yet another class of methods that are highly relevant to our
proposal are {\it boosting} procedures.  Boosting algorithms are
iterative in form, and we typically think of them as tracing out
a sequence of estimates, just like our general stagewise algorithm
(and unlike the iterative algorithms described above, e.g., steepest
descent and Frank-Wolfe, which we tend to think of as culminating in 
a single estimate).  The literature on boosting is vast; see, e.g.,
\citet{esl} or \citet{boostreview} for a nice review.  Among boosting
methods, {\it gradient boosting} \citep{gradboost} most closely
parallels forward stagewise fitting.  Consider a setup in which 
our weak learners are the individual predictor variables $X_1,\ldots
X_p$, and the loss function is $L(X\beta)=f(\beta)$.  The gradient
boosting updates, using a shrinkage factor $\epsilon$, are most
commonly expressed in terms of the fitted values, as in
\begin{gather}
\label{eq:gbup}
X\beta^{(k)} = X\beta^{(k-1)} + \epsilon \cdot \alpha_i X_i, \\
\label{eq:gbsearch}
\text{where} \;\,
\alpha_i \in \argmin_{\alpha \in \R} \, 
L(X\beta^{(k-1)} + \alpha X_i), \\
\label{eq:gbdir}
\text{and}\;\,
i \in \argmin_{j=1,\ldots p} \, \bigg( \min_{\alpha \in \R} \, 
\|-\nabla L (X\beta^{(k-1)}) - \alpha X_j\|_2^2 \bigg).
\end{gather}
The step \eqref{eq:gbdir} selects the weak learner $X_i$ that
best matches the negative gradient,
$-\nabla L (X\beta^{(k-1)})$, in a least squares 
sense; the step \eqref{eq:gbsearch} chooses the coefficient
$\alpha_i$ of $X_i$ via line search. If we assume that the variables
have been scaled to have unit norm,  $\|X_j\|_2=1$ for $j=1,\ldots p$,
then it is easy to see that \eqref{eq:gbdir} is equivalent to  
\begin{equation*}
i \in \argmax_{j=1,\ldots p} \, 
|X_j^T \nabla L(X\beta^{(k-1)}) | = 
\argmax_{j=1,\ldots p} \,
|\nabla_j f(\beta^{(k-1)})|,
\end{equation*}
which is exactly the same selection criterion used by forward
stagewise under the loss function $f$, as expressed in
\eqref{eq:fsdir}.  Therefore, at a given iteration, gradient boosting
and forward stagewise choose the next variable $i$ in the
same manner, and only differ in their choice of the coefficient of
$X_i$ in the constructed additive model. The gradient boosting update
in \eqref{eq:gbup} adds $\epsilon \cdot \alpha_i X_i$ to the current
model, where $\alpha_i$ is chosen by line search in
\eqref{eq:gbsearch}; meanwhile, the forward stagewise
update in \eqref{eq:fsup} can be expressed as
\begin{equation}
\label{eq:fsup3}
X\beta^{(k)} = X\beta^{(k-1)} + \epsilon \cdot s_i X_i,
\end{equation}
where $s_i = -\sign(\nabla_i f(\beta^{(k-1)})$, a simple choice
of coefficient compared to $\alpha_i$.  Because $\alpha_i$ is chosen
by minimizing the loss function along the direction defined by $X_i$ 
(anchored at $X\beta^{(k-1)}$), gradient boosting is even more greedy
than forward stagewise, but practically there is not a big difference
between the two, especially when $\epsilon$ is small.
In fact, the distinction between \eqref{eq:gbup} and \eqref{eq:fsup3}
is slight enough that several authors refer to forward stagewise
as a boosting procedure, e.g., \citet{boostpath},
\citet{stagewiselasso}, and \citet{boostreview} refer to forward 
stagewise as {\it $\epsilon$-boosting}.   

The tie between boosting and forward stagewise suggests that we might
be able to look at our general stagewise proposal through the lens of
boosting, as well. Above we compared boosting and forward
stagewise for the problem of sparse estimation; in this problem,
deciding on the universe of weak learners for gradient boosting is
more or less straightforward, as we can use the variables $X_1,\ldots X_p$
themselves (or, e.g., smooth marginal transformations of these
variables for sparse nonparametric
estimation). This works because each iteration of gradient 
boosting adds a single weak learner to the fitted model, so the 
model is sparse in the early stages of the algorithm, and becomes  
increasingly dense as the algorithm proceeds.  However, for more
complex problems (beyond sparse estimation), specifying a universe of
weak learners is not as straightforward.  Consider, e.g., matrix
completion or image denoising---what kind of weak learners would be
appropriate here? At a broad level, our 
general stagewise procedure offers a prescription for a
class of weak learners based on the regularizer $g$,
through the definition of $\Delta$ in \eqref{eq:stagedir}.  Such weak
learners seem intuitively reasonable in various problem settings:
they end up being groups of variables for group-structured estimation
problems (see Section \ref{sec:groupreg}), rank 1 matrices for matrix 
completion (Section \ref{sec:tracereg}), and pixel
contrasts for image denoising (Section \ref{sec:genlasso}).  This may 
lead to an interesting perspective on gradient boosting with an
arbitrary regularization scheme, though we do not explore it further. 

Finally, the form of the update $\Delta$ in \eqref{eq:stagedir} sets our 
work apart from other general path tracing
procedures. \citet{stagewiselasso} and \citet{gps} propose approximate
path following methods for optimization problems whose regularizers
extend beyond the $\ell_1$ norm, but their algorithms only update 
one component of the estimate at a time (which corresponds to utilizing
individual variables as weak learners, in the boosting perspective);
on the other hand, our general stagewise procedure specifically adapts 
its updates to the regularizer of concern $g$.  
We note that, in certain special cases (i.e., for certain regularizers
$g$), our proposed algorithm bears similarities to existing algorithms 
in the literature: for ridge regularization, our proposal is similar
to gradient-directed path following, as studied in 
\citet{graddir} and \citet{parflow}, and for $\ell_1/\ell_2$ 
multitask learning, our stagewise algorithm is similar to the
block-wise path following method of \citet{multitask2}. 








\section{Applications of the general stagewise framework} 
\label{sec:applications}

\subsection{Group-structured regularization}
\label{sec:groupreg}

We begin by considering the group-structured regularization problem 
\begin{equation}
\label{eq:groupreg}
\hbeta(t) \in \argmin_{\beta\in\R^p} \,f(\beta)
\;\,\st\;\, \sum_{j=1}^G w_j \|\beta_{\cI_j}\|_2 \leq t,
\end{equation}
where the index set $\{1,\ldots p\}$ has been partitioned into $G$ 
groups $\cI_1,\ldots \cI_G$, $\beta_{\cI_j} \in \R^{p_j}$
denotes the components of $\beta\in\R^p$ for the $j$th group, and  
$w_1,\ldots w_G \geq 0$ are fixed weights.  The loss $f$ is kept as a 
generic differentiable convex function---this is because, as
explained in Section \ref{sec:basicprops}, the stagewise updates are  
invariant around $\nabla f$, in terms of their computational form.  

Note that the {\it group lasso} problem \citep{bakinphd,grouplasso} is
a special case of \eqref{eq:groupreg}.  In the typical group lasso
regression setup, we observe an outcome $y\in\R^n$ and predictors  
$X\in\R^{n\times p}$, and the predictor variables admit some natural
grouping $\cI_1,\ldots \cI_G$. To perform group-wise variable
selection, one can use the group lasso estimator, defined as in   
\eqref{eq:groupreg} with  
\begin{equation*}
f(\beta) = 
\half \Big\|y-\sum_{j=1}^G X_{\cI_j} \beta_{\cI_j} \Big\|_2^2 
\;\;\;\text{and}\;\;\; 
w_j = \sqrt{p_j}, \;\, j=1,\ldots G,
\end{equation*}
where $X_{\cI_j}\in\R^{n\times p_j}$ is the predictor matrix for group
$j$, and $p_j=|\cI_j|$ is the size of the group $j$.  The same
idea clearly applies outside of the linear regression setting
(e.g., see \citet{grouplassolog} for a study of the group lasso
regularization in logistic regression).

A related yet distinct problem is that of {\it multitask
learning}.  In this setting we consider not one but multiple learning
problems, or tasks, and we want to select a common set of 
variables that are important across all tasks.  A popular estimator 
for this purpose is based on $\ell_1/\ell_2$ regularization
\citep{multitask1,multitask2}, and
also fits into the framework \eqref{eq:groupreg}: the loss function
$f$ becomes the sum of the losses across the
tasks, and the groups $\cI_1,\ldots \cI_G$ collect the coefficients
corresponding to the same variables across tasks.  For example, in 
multitask linear regression, we write $y^{(i)} \in \R^n$ for the
outcome, $X^{(i)} \in \R^{n\times m}$ for the predictors, and
$\beta^{(i)}$ the coefficients for the $i$th task, 
$i=1,\ldots r$.  We form a global coefficient vector 
$\beta=(\beta^{(1)},\ldots \beta^{(m)}) \in \R^p$, where $p=m\cdot r$,
and form groups $\cI_1,\ldots \cI_m$, where $\cI_j$ collects the
coefficients of predictor variable $j$ across the tasks.  The 
$\ell_1/\ell_2$ regularized multitask learning estimator is then
defined as in \eqref{eq:groupreg} with
\begin{equation*}
f(\beta) = \half \sum_{i=1}^r \|y^{(i)} - X^{(i)} \beta^{(i)} \|_2^2
\;\;\;\text{and}\;\;\; w_j = 1, \;\, j=1,\ldots m,
\end{equation*}
where the default is to set all of the weights to 1, in the lack of any 
prior information about variable importance (note that the groups
$\cI_1,\ldots \cI_m$ are all the same size here).

The general stagewise algorithm, Algorithm \ref{alg:genstage}, does
not make any distinction between cases such as the group lasso and 
multitask learning problems; it only requires $f$ to be a convex
and smooth function. To initialize the algorithm for the group
regularized problem \eqref{eq:groupreg}, we can take $t_0=0$
and $\beta^{(0)}=0$. The next lemma shows how to calculate the 
appropriate update direction $\Delta$ in \eqref{eq:stagedir}. 

\begin{lemma}
\label{lem:groupdir}
For \smash{$g(\beta)=\sum_{j=1}^G w_j \|\beta_{\cI_j}\|_2$}, the
general stagewise procedure in Algorithm \ref{alg:genstage} repeats
the updates $\beta^{(k)}= \beta^{(k-1)}+\Delta$, where $\Delta$ can be 
computed as follows: first find $i$ such that
\begin{equation}
\label{eq:groupdir0}
\frac{\|(\nabla f)_{\cI_i}\|_2}{w_i} = \max_{j=1,\ldots G} \,
\frac{\|(\nabla f)_{\cI_j}\|_2}{w_j},
\end{equation}
where we abbreviate $\nabla f = \nabla f(\beta^{(k-1)})$, then let
\begin{align}
\label{eq:groupdir1}
\Delta_{\cI_j} &= 0 \;\;\;\text{for all}\;\, j\not=i, \\
\label{eq:groupdir2}
\Delta_{\cI_i} &= \frac{-\epsilon \cdot (\nabla f)_{\cI_i}}
{w_i \|(\nabla f)_{\cI_i}\|_2}.
\end{align}
\end{lemma}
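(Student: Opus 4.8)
The plan is to analyze the linear minimization oracle in \eqref{eq:stagedir} directly, by splitting the objective and the constraint group by group and then optimizing the way the total norm budget $\epsilon$ is allocated across groups. Write $\nabla f = \nabla f(\beta^{(k-1)})$ and decompose $\langle \nabla f, z\rangle = \sum_{j=1}^G \langle (\nabla f)_{\cI_j}, z_{\cI_j}\rangle$ and $g(z) = \sum_{j=1}^G w_j\|z_{\cI_j}\|_2$. For any feasible $z$, introduce the per-group budget $t_j = w_j\|z_{\cI_j}\|_2 \ge 0$, so that $\sum_{j} t_j \le \epsilon$; conversely any nonnegative $(t_j)$ with $\sum_j t_j \le \epsilon$ is realizable.

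First I would solve the inner (per-group) problem: for a fixed value of $t_j$, minimize $\langle (\nabla f)_{\cI_j}, z_{\cI_j}\rangle$ over all $z_{\cI_j}$ with $w_j\|z_{\cI_j}\|_2 = t_j$. By Cauchy--Schwarz this minimum equals $-(t_j/w_j)\,\|(\nabla f)_{\cI_j}\|_2$, attained at $z_{\cI_j} = -(t_j/w_j)\,(\nabla f)_{\cI_j}/\|(\nabla f)_{\cI_j}\|_2$ when $(\nabla f)_{\cI_j}\ne 0$ (and the contribution is $0$ for every choice when $(\nabla f)_{\cI_j}=0$). Substituting back, problem \eqref{eq:stagedir} reduces to the linear program of minimizing $-\sum_{j} t_j \cdot \|(\nabla f)_{\cI_j}\|_2/w_j$ over $t_j\ge 0$ with $\sum_j t_j\le\epsilon$. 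Next I would observe that, since this is a nonnegative combination of the $t_j$ whose total is at most $\epsilon$, its optimum concentrates the whole budget on an index $i$ attaining $\max_j \|(\nabla f)_{\cI_j}\|_2/w_j$, i.e.\ $t_i=\epsilon$ and $t_j=0$ for $j\ne i$. Plugging $t_i=\epsilon,\ t_j=0$ into the per-group minimizers recovers exactly \eqref{eq:groupdir1}--\eqref{eq:groupdir2}. (Equivalently, one can invoke the dual-norm form \eqref{eq:dualdir}: the norm dual to $g$ is $g^*(x)=\max_j\|x_{\cI_j}\|_2/w_j$, and $\Delta=-\epsilon\,\partial g^*(\nabla f)$; a subgradient of this max-of-norms is supported on a maximizing block $i$, where it equals $(\nabla f)_{\cI_i}/(w_i\|(\nabla f)_{\cI_i}\|_2)$.)

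The steps above are essentially routine convexity bookkeeping; what I expect to need the most care are the degenerate cases, and flagging them correctly is really the main point rather than a genuine obstacle. If several groups attain the maximum in \eqref{eq:groupdir0}, the budget may be split among them and the minimizer is not unique; the displayed $\Delta$ is then one valid choice, consistent with the element notation in \eqref{eq:stagedir}. If $(\nabla f)_{\cI_i}=0$ for a maximizing $i$ — equivalently $\nabla f=0$ on all groups — every feasible direction gives objective value $0$, and we may take $\Delta=0$. Finally, if $w_j=0$ for a group with $(\nabla f)_{\cI_j}\ne 0$, then the oracle is unbounded below (one can drive $\langle\nabla f,z\rangle\to-\infty$ along that block at zero cost); this is exactly the ``unbounded stagewise steps'' situation of Section \ref{sec:basicprops}, so Lemma \ref{lem:groupdir} should be read under positive weights (or, otherwise, after the null-space modification \eqref{eq:stageupmod}--\eqref{eq:stagedirmod}).
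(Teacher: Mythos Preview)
Your proof is correct. The paper actually omits the proof entirely, saying only that it ``follows straight from the KKT conditions for \eqref{eq:stagedir}.'' Your route is a direct primal decomposition---Cauchy--Schwarz within each block, then a linear program over the budget simplex---rather than writing down and solving the KKT system. Both are standard and short; your argument is arguably more elementary and has the advantage of making the structure (and the degenerate cases you flag: ties, $\nabla f=0$, and $w_j=0$) completely explicit. The alternative you sketch via the dual norm $g^*(x)=\max_j \|x_{\cI_j}\|_2/w_j$ and \eqref{eq:dualdir} is exactly the mechanism the paper uses for the more general Lemma~\ref{lem:group2dir}, so either presentation would slot in cleanly.
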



We omit the proof; it follows straight from the KKT conditions for  
\eqref{eq:stagedir}, with $g$ as defined in the lemma.
Computation of $\Delta$ in \eqref{eq:groupdir0}, 
\eqref{eq:groupdir1}, \eqref{eq:groupdir2} is very cheap, 
and requires $O(p)$ operations. To rephrase:
at the $k$th iteration, we simply find the group $i$ such that the
corresponding 
block of the gradient $\nabla f(\beta^{(k-1)})$ has the largest $\ell_2$
norm (after scaling appropriately by the weights). We then
move the coefficients for group $i$ in a direction opposite to this 
gradient value; for all other groups, we leave their coefficients untouched
(note that, if a group has not been visited by past update steps, then this
means leaving its coefficients identically equal to zero).
The outputs of the stagewise algorithm therefore match our intuition
about the role of the constraint in \eqref{eq:groupreg}---for 
some select groups, all coefficients are set to nonzero values, and for other 
groups, all coefficients are set to zero. That the actual solution in
\eqref{eq:groupreg} satisfies this intuitive property can be verified
by examining its own KKT conditions.


Looking back at Figure \ref{fig:examples}, the first row compares the
exact solution and stagewise paths for a group lasso regression
problem.  The stagewise path was computed using 300 steps with 
$\epsilon=0.01$, and shows strong similarities to the exact group
lasso path.  In other problem instances, say, when the predictors
across different groups are highly correlated, the group lasso
coefficient paths can behave wildly with $t$, and yet the stagewise
paths can appear much less wild and more stable. Later, in Section
\ref{sec:bigexamples}, we consider larger examples and give more
thorough empirical comparisons.

\subsection{Group-structured regularization with arbitrary norms}
\label{sec:groupregarb}

Several authors have considered group-based regularization using the
$\ell_\infty$ norm in place of the usual $\ell_2$ norm (e.g., see
\citet{simulvar} for such an approach in multitask learning).
To accomodate this and other general group-structured 
regularization approaches, we consider the problem
\begin{equation}
\label{eq:groupreg2}
\hbeta(t) \in \argmin_{\beta\in\R^p} \,f(\beta)
\;\,\st\;\, \sum_{j=1}^G w_j h_j(\beta_{\cI_j}) \leq t,
\end{equation}
where each $h_j$ is an arbitrary norm.  Let $h_j^*$ denote the  
dual norm of $h_j$; e.g., if $h_j(x)=\|x\|_{q_j}$, then 
$h_j^*(x)=\|x\|_{r_j}$, where $1/q_j+1/r_j = 1$.  
Similar to the result in Lemma
\ref{lem:groupdir}, the stagewise updates for problem
\eqref{eq:groupreg2} take a simple group-based form. 

\begin{lemma}
\label{lem:group2dir}
For \smash{$g(\beta)=\sum_{j=1}^G w_j h_j(\beta_{\cI_j})$}, the
general stagewise procedure in Algorithm \ref{alg:genstage} repeats
the updates $\beta^{(k)}= \beta^{(k-1)}+\Delta$, where $\Delta$ can be 
computed as follows: first find $i$ such that
\begin{equation*}
\frac{h_i^*\big((\nabla f)_{\cI_i}\big)}{w_i} = \max_{j=1,\ldots G} \,
\frac{h_j^*\big((\nabla f)_{\cI_j}\big)}{w_j},
\end{equation*}
where we abbreviate $\nabla f = \nabla f(\beta^{(k-1)})$, then let
\begin{align*}
\Delta_{\cI_j} &= 0 \;\;\;\text{for all}\;\, j\not=i, \\
\Delta_{\cI_i} &\in -\frac{\epsilon}{w_i} \cdot 
\partial h_i^* \big( (\nabla f)_{\cI_i} \big).
\end{align*}
\end{lemma}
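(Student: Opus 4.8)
The plan is to solve the linear minimization oracle \eqref{eq:stagedir} explicitly when $g(\beta)=\sum_{j=1}^G w_j h_j(\beta_{\cI_j})$, generalizing the argument sketched (but omitted) for Lemma \ref{lem:groupdir}. First I would observe that the problem $\min_z \langle \nabla f, z\rangle$ subject to $\sum_j w_j h_j(z_{\cI_j}) \le \epsilon$ decouples across blocks in the following sense: for a fixed budget allocation, the optimal $z_{\cI_j}$ points in the direction that minimizes $\langle (\nabla f)_{\cI_j}, z_{\cI_j}\rangle$ over $\{z_{\cI_j}: h_j(z_{\cI_j}) \le r_j\}$, and by definition of the dual norm this minimum equals $-r_j \, h_j^*((\nabla f)_{\cI_j})$. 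So the problem reduces to the scalar allocation problem $\min_{r\ge 0} -\sum_j r_j h_j^*((\nabla f)_{\cI_j})$ subject to $\sum_j w_j r_j \le \epsilon$, i.e. maximizing $\sum_j r_j h_j^*((\nabla f)_{\cI_j})$ over the simplex-like set $\{r \ge 0 : \sum_j w_j r_j \le \epsilon\}$.

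Next I would solve this allocation problem: it is a linear program over a scaled simplex, so an optimal vertex puts the entire budget on the block $i$ maximizing the ``bang per unit weight'' ratio $h_i^*((\nabla f)_{\cI_i})/w_i$, giving $r_i = \epsilon/w_i$ and $r_j = 0$ for $j \ne i$ (assuming the maximizing ratio is strictly positive; if $\nabla f = 0$ the update is trivially $\Delta = 0$). This immediately yields $\Delta_{\cI_j} = 0$ for $j \ne i$ and reduces the remaining task to characterizing the minimizer of $\langle (\nabla f)_{\cI_i}, z\rangle$ over $\{z : h_i(z) \le \epsilon/w_i\}$. By the duality between a norm and its dual, the set of such minimizers is exactly $(\epsilon/w_i)$ times the subdifferential of $h_i^*$ at $(\nabla f)_{\cI_i}$ — this is the same fact used to pass from \eqref{eq:fsdir} to \eqref{eq:fsdir2} and stated in \eqref{eq:dualdir}, namely $\argmax_{h_i(z)\le 1}\langle x, z\rangle = \partial h_i^*(x)$. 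Negating (since we minimize, not maximize) gives $\Delta_{\cI_i} \in -(\epsilon/w_i)\,\partial h_i^*((\nabla f)_{\cI_i})$, as claimed.

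Alternatively, and perhaps more cleanly, I would just write down the KKT conditions for \eqref{eq:stagedir} directly, exactly as the paper does for Lemma \ref{lem:groupdir}: stationarity gives $\nabla f = -\mu\, \partial g$ for the Lagrange multiplier $\mu \ge 0$ on the constraint, and $\partial g$ has block structure $(\partial g)_{\cI_j} = w_j\, \partial h_j(\beta_{\cI_j})$ wherever $\beta_{\cI_j}\ne 0$; complementary slackness forces the constraint tight; and comparing the implied per-block magnitudes $h_j^*((\nabla f)_{\cI_j}) = \mu w_j \cdot (\text{something} \le 1)$ pins down which block is active. I expect the main obstacle — really the only subtlety — to be the bookkeeping around non-uniqueness and the degenerate cases: when several blocks tie for the maximal ratio the active block is not unique (harmless, matching the $\in$/element notation in the statement), and when $(\nabla f)_{\cI_i}$ is a point where $h_i^*$ is not differentiable the update direction is genuinely set-valued, which is precisely why the conclusion is phrased with $\partial h_i^*$ rather than a gradient. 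Handling the $\nabla f = 0$ case separately and noting that $h_j^*$ is well-defined and the subdifferential nonempty (finite-dimensional norm, so $h_j^*$ is finite and convex everywhere) closes the remaining gaps.
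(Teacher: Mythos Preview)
Your proposal is correct. The paper itself omits the proof entirely, remarking only that it ``follows from the KKT conditions for \eqref{eq:stagedir}''; your second approach is exactly this, and your first approach (reducing to a scalar budget-allocation LP over the blocks and then invoking the subdifferential characterization of the dual norm) is a slightly more explicit, elementary route to the same conclusion that avoids writing out Lagrangian stationarity.
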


Again we omit the proof; it follows from the KKT conditions for
\eqref{eq:stagedir}. Indeed, Lemma \ref{lem:group2dir} covers  
Lemma \ref{lem:groupdir} as a special case, recalling that the
$\ell_2$ norm is self-dual.  Also, recalling that the $\ell_\infty$
and $\ell_1$ norms are dual, Lemma \ref{lem:group2dir} says that the 
stagewise algorithm for $g(\beta)=\sum_{j=1}^G w_j
\|\beta_{\cI_j}\|_\infty$ first finds $i$ such that
\begin{equation*}
\frac{\|(\nabla f)_{\cI_i}\|_1}{w_i} = \max_{j=1,\ldots G} \,
\frac{\|(\nabla f)_{\cI_j}\|_1}{w_j},
\end{equation*}
and then defines the update direction $\Delta$ by
\begin{align*}
\Delta_{\cI_j} &= 0 \;\;\;\text{for all}\;\, j\not=i, \\
\Delta_\ell &= -\frac{\epsilon}{w_i} \cdot
\begin{cases}
0 & \text{for}\;\, \ell \in \cI_i, \,(\nabla f)_\ell = 0 \\
\sign\big((\nabla f)_\ell\big) & \text{for}\;\, \ell \in \cI_i, \, 
(\nabla f)_\ell \not= 0. 
\end{cases}
\end{align*}
More broadly, Lemma \ref{lem:group2dir} provides a general
prescription for deriving the stagewise updates for
regularizers that are block-wise sums of norms, as long as we can
compute subgradients of the dual norms.  For example, the norms
in consideration could be a mix of $\ell_p$ norms, matrix norms, 
etc.  

\subsection{Trace norm regularization}
\label{sec:tracereg}

Consider a class of optimization problems over matrices,
\begin{equation}
\label{eq:tracereg}
\hat{B}(t) \in \argmin_{B \in\R^{m\times n}} \,f(B)
\;\,\st\;\, \|B\|_* \leq t,
\end{equation}
where $\|B\|_*$ denotes the trace norm (also called the nuclear norm)
of a matrix $B$, i.e., the sum of its singular values.  
Perhaps the most well-known example of trace norm regularization comes
from the problem of {\it matrix completion} (e.g., see
\citet{exactmat}, \citet{nearopt2},
\citet{softimpute}).  Here the setup is 
that we only partially observe entries of a matrix $Y \in \R^{m\times 
n}$---say, we observe all entries $(i,j) \in \Omega$---and we seek
to estimate the missing entries.  A natural estimator for this purpose
(studied by, e.g., \citet{softimpute}) is defined as in
\eqref{eq:tracereg} with  
\begin{equation*}
f(B) = \half\sum_{(i,j) \in \Omega} (Y_{ij} - B_{ij})^2.
\end{equation*}
The trace norm also appears in interesting examples beyond
matrix completion.  For example, \citet{nptrace} consider
regularization with the trace norm in multiple nonparametric
regression, and \citet{imtrace} consider it in large-scale
image classification.   

The general stagewise algorithm applied to the trace norm
regularization problem \eqref{eq:tracereg} can be initialized with
$t_0=0$ and $B^{(0)}=0$, and the update direction in \eqref{eq:stagedir}
is now simple and efficient.

\begin{lemma}
\label{lem:tracedir}
For $g(B)=\|B\|_*$, the general stagewise procedure in Algorithm
\ref{alg:genstage} repeats the updates
$\beta^{(k)}=\beta^{(k-1)}+\Delta$, where 
\begin{equation}
\label{eq:tracedir}
\Delta = -\epsilon \cdot uv^T,
\end{equation}
with $u,v$ being leading left and right singular vectors,
respectively, of $\nabla f (B^{(k-1)})$.
\end{lemma}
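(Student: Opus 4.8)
The plan is to solve the linear minimization oracle \eqref{eq:stagedir} directly, with the variable $z$ now ranging over matrices $Z\in\R^{m\times n}$, the regularizer $g(Z)=\|Z\|_*$, and the inner product taken to be $\langle A,Z\rangle=\tr(A^T Z)$. Abbreviating $A=\nabla f(B^{(k-1)})$, the task is to exhibit a minimizer of $\langle A,Z\rangle$ over $\{Z:\|Z\|_*\le\epsilon\}$. Equivalently, via the dual-norm representation \eqref{eq:dualdir}, since $\|\cdot\|_*$ is a norm whose dual is the operator norm (the largest singular value), it suffices to produce a subgradient of the operator norm at $A$, scaled by $-\epsilon$; this will turn out to be exactly $-\epsilon\,uv^T$.

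First I would write the singular value decomposition $A=\sum_{\ell}\sigma_\ell\, u_\ell v_\ell^T$ with $\sigma_1\ge\sigma_2\ge\cdots\ge 0$, and set $u=u_1$, $v=v_1$, the leading left and right singular vectors. For the lower bound I would invoke the H\"older inequality for Schatten norms (equivalently von Neumann's trace inequality): for any $Z$,
\begin{equation*}
\langle A,Z\rangle=\tr(A^T Z)\ge -\sum_{\ell}\sigma_\ell(A)\,\sigma_\ell(Z)\ge -\sigma_1(A)\sum_{\ell}\sigma_\ell(Z)=-\sigma_1(A)\,\|Z\|_*,
\end{equation*}
so $\langle A,Z\rangle\ge-\epsilon\,\sigma_1(A)$ whenever $\|Z\|_*\le\epsilon$. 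Then I would check that $Z=-\epsilon\,uv^T$ attains this value: $\|uv^T\|_*=1$ because $u,v$ are unit vectors, and $\langle A,-\epsilon\,uv^T\rangle=-\epsilon\,\tr(A^T uv^T)=-\epsilon\,v^T A^T u=-\epsilon\,\sigma_1(A)$ by orthonormality of the singular vectors (since $A^T u=\sigma_1 v$). Hence $-\epsilon\,uv^T$ is feasible and optimal, i.e., $\Delta=-\epsilon\,uv^T$ solves \eqref{eq:stagedir}, which is the claim.

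There is no genuinely hard step here; the lone substantive ingredient is the Schatten-norm H\"older / von Neumann trace inequality, which simultaneously pins down the optimal value and the optimizer. Two small caveats are worth flagging in the writeup: when $\sigma_1$ has multiplicity greater than one (or when $A=0$), the leading singular vectors---hence the optimal $\Delta$---are not unique, which is consistent with the ``leading left and right singular vectors'' phrasing and with the nonuniqueness already acknowledged elsewhere; and the rectangular case $m\ne n$ needs no separate treatment, since the SVD and the displayed inequalities hold verbatim with the sum running over $\min(m,n)$ singular values.
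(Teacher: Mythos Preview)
Your proof is correct and follows essentially the same approach as the paper, which simply notes that the dual of the trace norm is the spectral norm and invokes the representation \eqref{eq:dualdir}; you have filled in the details by explicitly verifying, via the von Neumann trace inequality, that $-\epsilon\,uv^T$ solves the linear minimization oracle in \eqref{eq:stagedir}.
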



The proof relies on the fact that the dual of the trace norm
$g(B)=\|B\|_*$ is the spectral norm $g^*(B) = \|B\|_2$, and then
invokes the representation \eqref{eq:dualdir} for stagewise
estimates. 
For the stagewise update direction \eqref{eq:tracedir}, we need 
to compute the leading left and right singular vectors $u,v$ of the
$m\times n$ matrix $\nabla f(B^{(k-1)})$---these are the left and
right singular vectors corresponding to the top singular value of 
$\nabla f(B^{(k-1)})$.  Assuming that $\nabla f(B^{(k-1)})$ has a
distinct largest singular value, this can be done, e.g., using the
power method: letting $A=\nabla f(B^{(k-1)})$, we first run 
the power method on the $m\times m$ matrix $AA^T$, or the $n\times n$
matrix $A^T A$, depending on whichever is smaller.  This gives us
either $u$ or $v$; to recover the other, we then simply use matrix 
multiplication: $v=A^T u / \|A^T u\|_2$ or $u = Av / \|Av\|_2$.  
The power method is especially efficient if $A = \nabla
f(B^{(k-1)})$ is sparse (each iteration being faster), or has a large
spectral gap (fewer iterations required until convergence).
Of course, alternatives to the power method can be used
for computing the leading singular vectors of $\nabla f(B^{(k-1)})$,
such as methods based on inverse iterations, Rayleigh quotients, 
or QR iterations; see, e.g., \citet{gvl}.

In the second row of Figure \ref{fig:examples}, the exact and
stagewise paths for are shown matrix completion problem, where the
stagewise paths were computed using 500 steps with $\epsilon=0.05$.
While the two sets of paths appear fairly similar, we note that it is
harder to judge the degree of similarity between the two in the matrix
completion context. Here, 
the coordinate paths correspond to entries in the estimated matrix 
\smash{$\hat{B}$}, and their roles are not as clear as they are in,
say, in a regression setting, where the coordinate paths correspond to
the coefficients of individual variables. In other words, it is
difficult to interpret the slight differences between the exact and 
stagewise paths in the second row of Figure \ref{fig:examples}, which
present themselves as the trace norm grows large.  Therefore, to get a 
sense for the effect of these differences, we might compare
the mean squared error curves generated by the exact and stagewise 
estimates.  This is done in depth in Section \ref{sec:bigexamples}.

\subsection{Quadratic regularization}
\label{sec:quadreg}

Consider problems of the form
\begin{equation}
\label{eq:quadreg}
\hbeta(t) \in \argmin_{\beta \in\R^p} \, f(\beta) 
\;\,\st\;\, \beta^T Q \beta \leq t,
\end{equation}
where $Q \succeq 0$, a positive semidefinite matrix.
The quadratic regularizer in \eqref{eq:quadreg} encompasses several
common statistical tasks.  When $Q=I$, the regularization 
term $\beta^T \beta = \|\beta\|_2^2$ is well-known as {\it ridge} 
\citep{ridge}, or {\it Tikhonov} regularization
\citep{tikhonov}. This regularizer shrinks the components of the
solution \smash{$\hbeta$} towards zero.  In a (generalized)
linear model setting with many predictor variables, such
shrinkage helps control the variance of the estimated coefficients.
Beyond this simple ridge case, roughness regularization in
nonparametric regression often fits into the form \eqref{eq:quadreg},
with $Q$ not just the identity.  For example, 
{\it smoothing splines} \citep{wahbasplines,greensilver} 
and {\it P-splines} \citep{psplines} can both be expressed as in
\eqref{eq:quadreg}. 
To see this, suppose that $y_1,\ldots y_n \in \R$ are observed
across input points $x_1,\ldots x_n \in \R$,    
and let $b_1,\ldots b_p$ denote the B-spline basis (of, say, 
cubic order) with knots at locations 
$z_1,\ldots z_p \in \R$. Smoothing splines use the inputs as
knots, $z_1=x_1, \ldots z_p=x_n$ (so that $p=n$); P-splines 
typically employ a (much) smaller number of knots across the range of
$x_1,\ldots x_n \in \R$.  Both estimators solve problem
\eqref{eq:quadreg}, with a loss function
\smash{$f(\beta)=\half\|y-B\beta\|_2^2$}, 
and $B \in  \R^{n\times p}$ having entries $B_{ij}=b_j(x_i)$, but the
two use a different definition for $Q$: its entries are given by
\smash{$Q_{ij} =  \int b''_i(x) b''_j(x) \, dx$} in the case of
smoothing splines, while $Q=D^T D$ in the case of P-splines, where $D$ 
is the discrete difference operator of a given (fixed) integral
order.  Both estimators can be extended to the logistic or Poisson 
regression settings, just by setting 
$f$ to be the logistic or Poisson loss, with natural parameter
$\eta=B\beta$ \citep{greensilver,psplines}.  

When $Q$ is positive definite, the general stagewise
algorithm, applied to \eqref{eq:quadreg}, can be initialized with
$t_0=0$ and $\beta^{(0)}=0$. 
The update direction $\Delta$ in \eqref{eq:stagedir} is
described by the following lemma.

\begin{lemma}
\label{lem:quadreg}
For $g(\beta)=\beta^T Q \beta$, with $Q$ a
positive definite matrix, the general stagewise procedure 
in Algorithm \ref{alg:genstage} repeats the updates
$\beta^{(k)}=\beta^{(k-1)}+\Delta$, where  
\begin{equation}
\label{eq:quaddir}
\Delta = -\sqrt{\epsilon} \cdot 
\frac{Q^{-1} \nabla f}{\sqrt{(\nabla f)^T Q^{-1} \nabla f}},
\end{equation}
and $\nabla f$ is an abbreviation for $\nabla f(\beta^{(k-1)})$. 
\end{lemma}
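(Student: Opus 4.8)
The problem defining the stagewise direction is
\[
\Delta \in \argmin_{z\in\R^p} \, \langle \nabla f, z\rangle
\;\,\st\;\, z^T Q z \leq \epsilon,
\]
a linear objective minimized over an ellipsoid. Since $Q \succ 0$, it has a symmetric positive definite square root $Q^{1/2}$, which is invertible. The plan is to change variables via $w = Q^{1/2} z$, so that the constraint $z^T Q z \leq \epsilon$ becomes $\|w\|_2^2 \leq \epsilon$, i.e.\ $\|w\|_2 \leq \sqrt{\epsilon}$, and the objective becomes $\langle \nabla f, Q^{-1/2} w\rangle = \langle Q^{-1/2}\nabla f, w\rangle$. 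We are thus left with the elementary problem of minimizing a linear functional $\langle c, w\rangle$, with $c = Q^{-1/2}\nabla f$, over the Euclidean ball of radius $\sqrt{\epsilon}$.

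By the Cauchy--Schwarz inequality, $\langle c, w\rangle \geq -\|c\|_2 \|w\|_2 \geq -\sqrt{\epsilon}\,\|c\|_2$ for any feasible $w$, with equality if and only if $w = -\sqrt{\epsilon}\, c/\|c\|_2$ (assuming $c \neq 0$). Transforming back, $\Delta = Q^{-1/2} w = -\sqrt{\epsilon}\, Q^{-1/2} c / \|c\|_2 = -\sqrt{\epsilon}\, Q^{-1}\nabla f / \|Q^{-1/2}\nabla f\|_2$. Finally, I would observe that $\|Q^{-1/2}\nabla f\|_2^2 = (\nabla f)^T Q^{-1}\nabla f$, so the denominator is exactly $\sqrt{(\nabla f)^T Q^{-1}\nabla f}$, yielding the claimed formula \eqref{eq:quaddir}. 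Equivalently, one could cite the dual-norm representation \eqref{eq:dualdir}: the dual of the (semi)norm $\beta \mapsto \sqrt{\beta^T Q\beta}$ is $x \mapsto \sqrt{x^T Q^{-1} x}$, whose (unique, since it is differentiable away from $0$) subgradient at $\nabla f$ is $Q^{-1}\nabla f / \sqrt{(\nabla f)^T Q^{-1}\nabla f}$; care is only needed because the constraint here is stated as $\beta^T Q\beta \leq t$ rather than $\sqrt{\beta^T Q\beta} \leq t$, which is what introduces $\sqrt{\epsilon}$ in place of $\epsilon$.

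There is no substantive obstacle here; the only points requiring a word of care are (i) arguing that the constraint is tight at the optimum, which follows because the linear objective is nonconstant whenever $\nabla f \neq 0$, and (ii) the degenerate case $\nabla f(\beta^{(k-1)}) = 0$, in which case $x^{(k-1)}$ already minimizes $f$, any feasible $z$ is optimal, and one simply takes $\Delta = 0$ (the formula \eqref{eq:quaddir} being understood as $0$ in that limit). I would also note in passing that when $\nabla f \neq 0$ the minimizer is unique, so the ``$\in$'' in \eqref{eq:stagedir} becomes an equality, consistent with the ``$=$'' in the statement.
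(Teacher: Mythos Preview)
Your argument is correct. The paper's own proof is a single sentence---``the proof follows by checking the KKT conditions for \eqref{eq:stagedir}''---so your change-of-variables via $Q^{1/2}$ followed by Cauchy--Schwarz is a slightly different but equally elementary route to the same conclusion. The KKT approach would write the Lagrangian $\langle \nabla f, z\rangle + \mu(z^TQz-\epsilon)$, set the gradient in $z$ to zero to get $z = -\tfrac{1}{2\mu}Q^{-1}\nabla f$, and fix $\mu$ via complementary slackness; your substitution $w=Q^{1/2}z$ reaches the same minimizer without introducing a multiplier. Both arguments are standard for linear minimization over an ellipsoid, and your additional remarks on tightness of the constraint, the degenerate case $\nabla f=0$, and uniqueness are all accurate and more explicit than what the paper records.
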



The proof follows by checking the KKT conditions for
\eqref{eq:stagedir}.  When $Q=I$, the update step \eqref{eq:quaddir}
of the general stagewise procedure for quadratic regularization is
computationally trivial, reducing to 
\begin{equation*}
\Delta = -\sqrt{\epsilon} \cdot 
\frac{\nabla f}{\|\nabla f\|_2}.
\end{equation*}
This yields fast, simple updates for ridge regularized estimators.
For a general matrix $Q$, computing the update direction in
\eqref{eq:quaddir} boils down to solving the linear equation  
\begin{equation}
\label{eq:q}
Q v = \nabla f(\beta^{(k-1)})
\end{equation}
in $v$. This is expensive for an arbitrary, dense $Q$; a
single solve of the linear system \eqref{eq:q} generally requires 
$O(p^3)$ operations.  
Of course, since the systems across all iterations involve the
same linear operator $Q$, we could initially compute a Cholesky
decomposition of $Q$ (or a related factorization), requiring
$O(p^3)$ operations, and then use this factorization to solve
\eqref{eq:q} at each iteration, requiring only $O(p^2)$
operations.  While certainly more efficient than the naive strategy of
separately solving each instance of \eqref{eq:q}, this is still not
entirely desirable for large problems.  

On the other hand, for several cases in which $Q$ is structured or 
sparse, the linear system \eqref{eq:q} can be solved efficiently.
For example, if $Q$ is banded with bandwidth $d$, then we can solve  
\eqref{eq:q} in $O(pd^2)$ operations (actually, an
initial Cholesky decomposition takes $O(pd^2)$ operations, and
each successive solve with this decomposition then takes $O(pd)$
operations).  

Importantly, the matrix $Q$ is banded in both the smoothing
spline and P-spline regularization cases: for smoothing splines, $Q$
is banded because the B-spline basis functions have local support;
for P-splines, $Q$ is banded because the discrete difference operator
is.  However, some care must be taken in applying the stagewise
updates in these cases, as $Q$ is singular, i.e., positive
semidefinite but not strictly positive definite.     
The stagewise algorithm needs to be modified, albeit only slightly, to
deal with this issue---this modification was discussed in
\eqref{eq:stageupmod}, \eqref{eq:stagedirmod} in Section
\ref{sec:basicprops}, and here we summarize the implications for
problem \eqref{eq:quadreg}.  First we compute the initial iterate to
lie in $\nul(Q)$, the null space of $Q$,   
\begin{equation}
\label{eq:quadinit2}
\beta^{(0)} \in \argmin_{\beta\in\nul(Q)} \, f(\beta).
\end{equation}
For, e.g., P-splines with $Q=D^T D$, and $D$ the discrete
difference operator of order $k$, the space $\nul(Q)$ is
$k$-dimensional and contains (the evaluations of) all polynomial
functions of order $k-1$.   The stagewise algorithm is then
initialized at such a point $\beta^{(0)}$ in \eqref{eq:quadinit2}, and
$t_0=0$.  For future iterations, note that when $\nabla
f(\beta^{(k)})$ has a nontrivial projection onto 
$\nul(Q)$, the stagewise update in \eqref{eq:stagedir} is
undefined, since $\langle \nabla f(\beta^{(k)}), z \rangle$ can be
made arbitrarily small along a direction $z$ such that $z^T Q z = 0$.
Therefore, we must further constrain the stagewise update to lie in
the orthocomplement $\nul(Q)^\perp=\row(Q)$, the row space of $Q$, 
as in 
\begin{equation*}
\Delta \in \argmin_{z \in \row(Q)} \,\,
\langle \nabla f(\beta^{(k-1)}), z \rangle
\;\,\st\;\, z^T Q z \leq \epsilon.
\end{equation*}
It is not hard to check that, instead of \eqref{eq:quaddir}, the
update now becomes
\begin{equation}
\label{eq:quaddir2}
\Delta = -\sqrt{\epsilon} \cdot 
\frac{Q^+\nabla f}
{\sqrt{(\nabla f)^T Q^+ \nabla f}}, 
\end{equation}
with $Q^+$ denoting the (Moore-Penrose) generalized inverse of $Q$.  

From a computational perspective, the stagewise update in
\eqref{eq:quaddir2} for the rank deficient case does not represent
more much work than that in \eqref{eq:quaddir} for the full rank
case.  With P-splines, e.g., we have $Q=D^T D$ where $D \in \R^{(n-k)
  \times n}$ is a banded matrix of full row rank.
A short calculation shows that in this case
\begin{equation*}
(D^T D)^+ = D^T (DD^T)^{-2} D,
\end{equation*}
i.e., applying $Q^+$ is computationally equivalent to two banded
linear system solves and two banded matrix multiplications.  Hence one
stagewise update for P-spline regularization problems takes $O(p)$
operations (the bandwidth of $D$ is a constant, $d=k+1$), excluding
computation of the gradient.

The third row of Figure \ref{fig:examples} shows an example of
logistic regression with ridge regularization, and displays the
grossly similar exact solution and stagewise paths. 
Notably, the stagewise path here was
constructed using only {\it 15 steps}, with an effective step size  
$\sqrt{\epsilon}=0.1$. This is a surprisingly small number of 
steps, especially compared to the numbers 
needed by stagewise in the examples (both small and large) from  
other regularization settings covered in this paper.  
As far as we can tell, 
this rough scaling appears to hold for ridge
regularization problems in general---for such problems, the stagewise
algorithm can be run with relatively large step sizes for small
numbers of steps, and it will still produce statistically appealing
paths. Unfortunately, this trend does not persist uniformly across all
quadratic regularization problems; it seems that the ridge case 
($Q=I$) is really a special one.    

For a second example, we consider P-spline regularization, using both 
continuous and binomial outcomes.  The left panel of Figure
\ref{fig:pspline} displays an array of stagewise
estimates, computed under P-spline regularization and a
Gaussian regression loss.  We generated $n=100$ noisy
observations $y_1,\ldots y_{100}$ from an underlying sinusoidal
curve, sampled at input locations $x_1,\ldots x_{100}$ drawn uniformly
over $[0,1]$.  The P-splines were defined using 30 equally spaced
knots across $[0,1]$, and the stagewise algorithm was run
for 300 steps with $\sqrt{\epsilon}=0.005$.  The figure shows the
spline approximations delivered by the stagewise estimates (from every 
15th step along the path, for visibility) and the true
sinusoidal curve overlayed as a thick dotted black line. We note that
in this particular setting, the stagewise algorithm is not so
interesting computationally, because each update step solves a banded
linear system, and yet the exact solution can itself be computed at
the same cost, at any regularization parameter value.
The example is instead meant to portray that the stagewise algorithm
can produce smooth and visually reasonable estimates of the underlying 
curve.  

The right panel of Figure \ref{fig:pspline} displays an
analogous example using $n=100$ binary observations,
$y_1,\ldots y_{100}$, generated according to the probabilities
\smash{$p_i^*=1/(1+e^{-\mu(x_i)})$},  
$i=1,\ldots 100$, where the inputs $x_1,\ldots x_{100}$ were sampled 
uniformly from $[0,1]$, and $\mu$ is a smooth function.  The
probability curve \smash{$p^*(x)=1/(1+e^{-\mu(x)})$} is drawn as a
thick dotted black line.  We ran the stagewise algorithm
under a logistic loss, with $\sqrt{\epsilon}=0.005$, and for 300 
steps; the figure plots the probability curves associated with the
stagewise estimates (from every 15th step along the path, for
visibility).  Again, we can see that the fitted curves are 
smooth and visually reasonable.  Computationally, the difficulty of
the stagewise algorithm in this logistic setting is essentially the
same  as that in the previous Gaussian setting; all that changes is
the computation of the gradient, which is an easy task. 
The exact solution, however, is more difficult to compute in this
setting than the previous, and requires the use of iterative algorithm
like Newton's method.  This kind of computational invariance around
the loss function, recall, is an advantage of the stagewise framework.  

\begin{figure}[htb]
\centering
\includegraphics[width=0.475\textwidth]{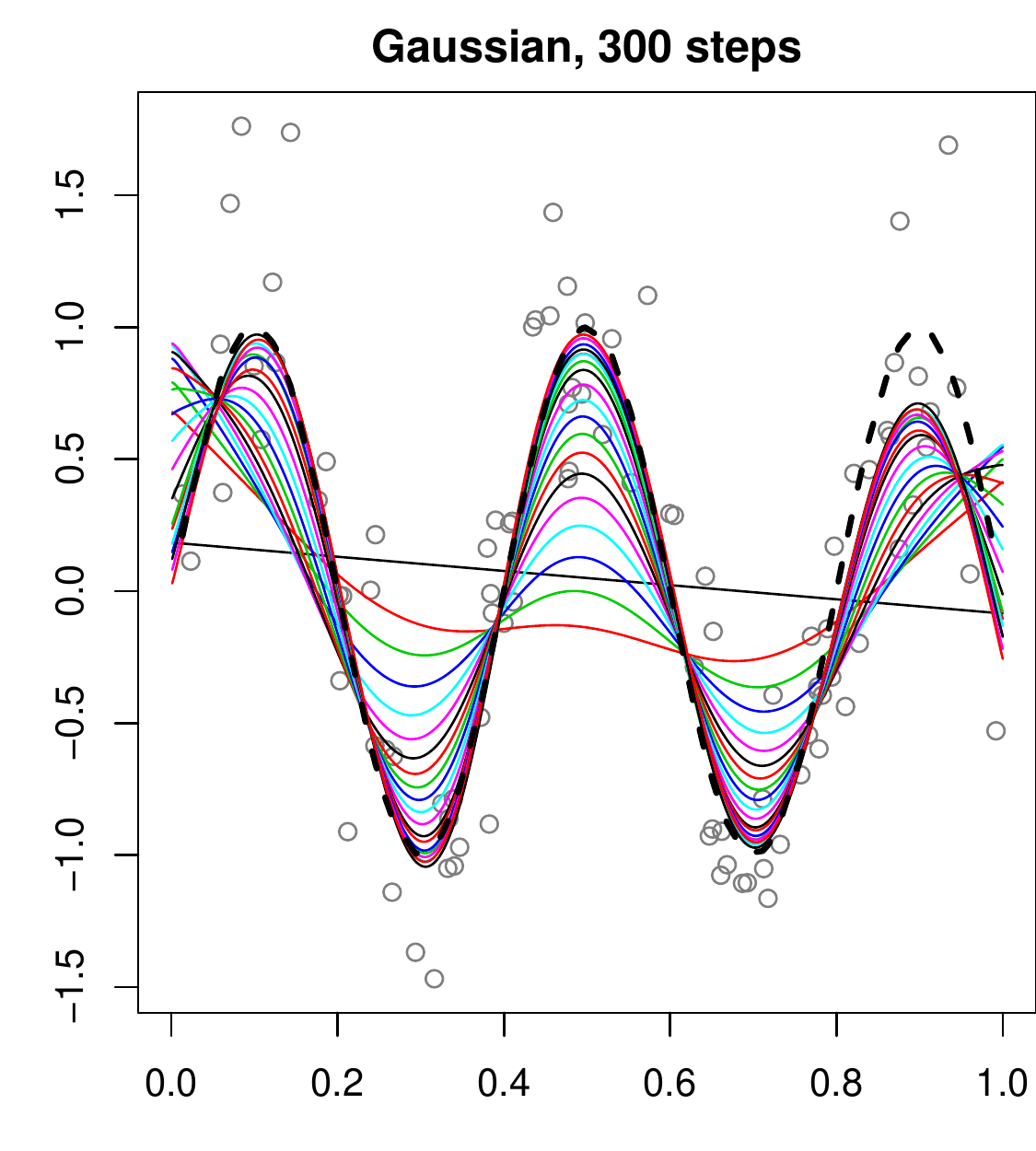} 
\includegraphics[width=0.475\textwidth]{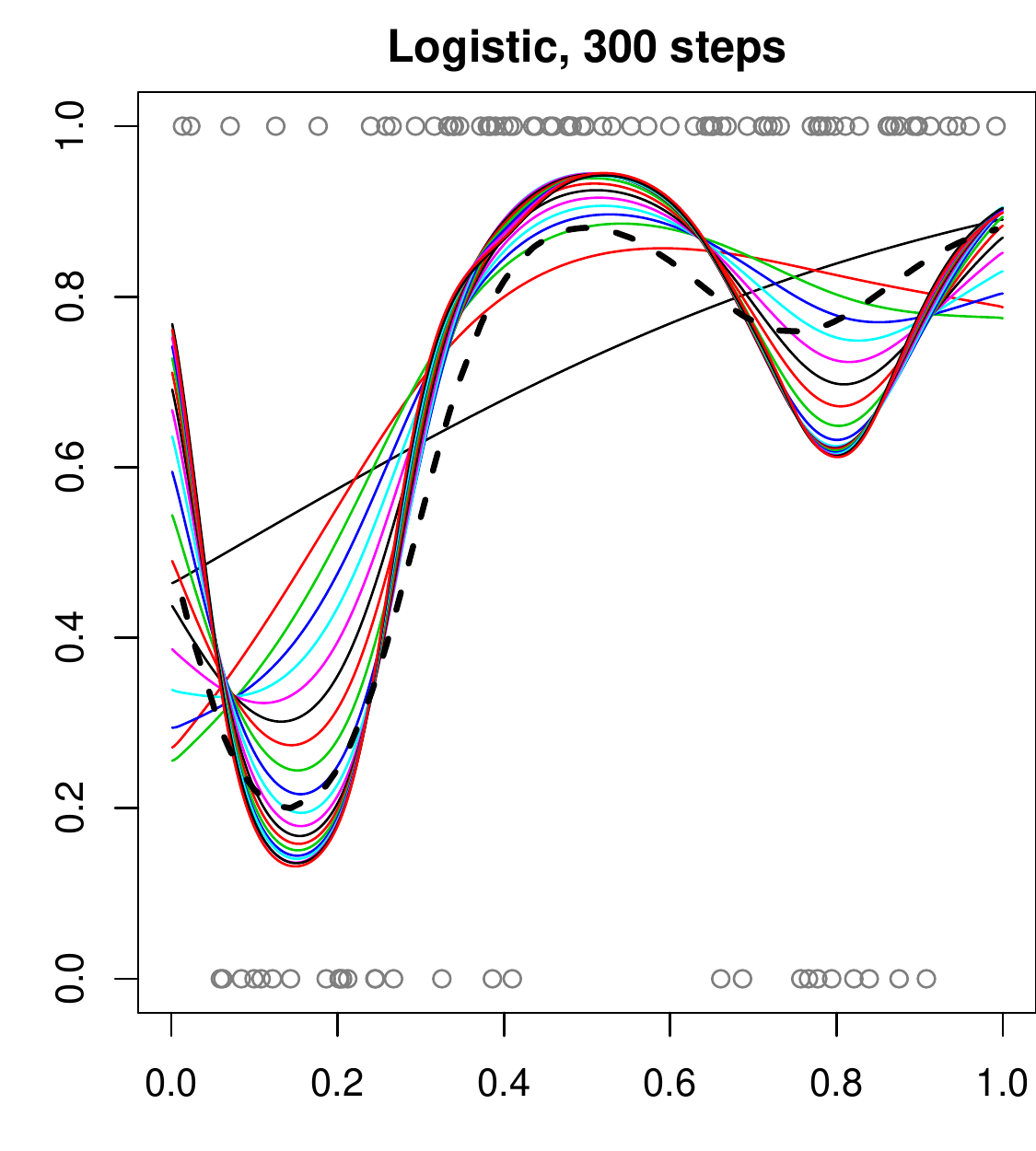} 
\caption{\it Snapshots of the stagewise path for P-spline
  regularization problems, with continuous data in the left panel, and
  binary data in the right panel.  In both examples, we use
  $n=100$ points, and the true data generating curve is displayed as
  a thick dotted black line.  The colored curves show the stagewise
  estimates over the first 300 path steps (plotted are every 15th
  estimate, for visibility).
}
\label{fig:pspline}
\end{figure}

\subsection{Generalized lasso regularization}
\label{sec:genlasso}

In this last application, we study generalized $\ell_1$ regularization 
problems,  
\begin{equation}
\label{eq:genlasso}
\hbeta(t) \in \argmin_{\beta\in\R^p}\,
f(\beta) \;\,\st\;\, \|D\beta\|_1 \leq t,
\end{equation}
where $D$ is a given matrix (it need not be square).
The regularization term above is
also called {\it generalized lasso} regularization, since it includes
lasso regularization as a special case, with $D=I$, but also covers a 
number of other regularization forms \citep{genlasso}.  For
example, {\it fused lasso} regularization is encompassed by
\eqref{eq:genlasso}, with $D$ chosen to be the edge incidence matrix
of some graph $G$, having nodes $V=\{1,\ldots p\}$ and
edges $E=\{e_1,\ldots e_m\}$.  In the special case of the chain graph,
wherein $E=\{\{1,2\},\{2,3\},\ldots \{p-1,p\}\}$, we have 
\begin{equation*}
D = \left[\begin{array}{rrrrrr}
-1 & 1 & 0 & \ldots & 0 & 0 \\
0 & -1 & 1 & \ldots & 0 & 0 \\
\vdots & & & & & \\
0 & 0 & 0 & \ldots & -1 & 1
\end{array}\right],
\end{equation*}
so that 
\smash{$\|D\beta\|_1=\sum_{j=1}^{p-1} |\beta_j-\beta_{j+1}|$}. This
regularization term encourages the ordered components of $\beta$ to be
piecewise constant, and problem \eqref{eq:genlasso} with this
particular choice of $D$ is usually called the 
{\it 1-dimensional fused lasso} in the statistics literature
\citep{fuse}, or {\it 1-dimensional total variation denoising} in
signal processing \citep{tv}.  In general, the edge incidence matrix  
$D \in \R^{m\times p}$ has rows corresponding to edges in $E$, and its
$\ell$th row is  
\begin{equation*}
D_\ell = (0, \ldots \underset{\substack{\;\;\uparrow \\ \;\;i}}{-1},
\ldots \underset{\substack{\uparrow \\ j}}{1}, \ldots 0) \in \R^p,
\end{equation*}
provided that the $\ell$th edge is $e_\ell=\{i,j\}$.  Hence
\smash{$\|D\beta\|_1=\sum_{\{i,j\} \in E} |\beta_i-\beta_j|$}, a
regularization term that encourages the components of $\beta$ to be
piecewise constant with respect to the structure defined by the graph
$G$. Higher degrees of
smoothness can be regularized in this framework as well, using {\it
  trend filtering} methods; see \citet{l1tf} or \citet{trendfilter}
for the 1-dimensional case, and \citet{graphtf} for the more general
case over arbitrary graphs.  

Unfortunately the stagewise update in \eqref{eq:stagedir}, under the
regularizer $g(\beta)=\|D\beta\|_1$, is not computationally
tractable.  Computing this update is the same as
solving a linear program, absent of any special structure in the
presence of a generic matrix $D$.  But we can make progress by
studying the generalized lasso from the perspective of convex
duality.  Our jumping point for the dual is actually the Lagrange form
of problem \eqref{eq:genlasso}, namely 
\begin{equation}
\label{eq:genlasso2}
\hbeta(\lambda) \in \argmin_{\beta\in\R^p}\,
f(\beta) + \lambda \|D\beta\|_1,
\end{equation}
with $\lambda \geq 0$ now being the regularization parameter.
The switch from \eqref{eq:genlasso} to \eqref{eq:genlasso2} is 
justified because the two parametrizations admit identical solution
paths.  Following standard arguments in convex analysis, the dual
problem of \eqref{eq:genlasso2} can be written as 
\begin{equation}
\label{eq:genlassodual}
\hu(\lambda) \in \argmin_{u \in \R^m} \, f^*(-D^T u) \;\, \st \;\,
\|u\|_\infty \leq \lambda,
\end{equation} 
with $f^*$ denoting the convex conjugate of $f$.  The primal and dual
solutions satisfy the relationship 
\begin{equation}
\label{eq:genlassopd}
\nabla f (\hbeta(\lambda)) + D^T \hu(\lambda) = 0.
\end{equation}
The general strategy is now to apply the stagewise algorithm to the
dual problem \eqref{eq:genlassodual} to produce an approximate
dual solution path, and then convert this into an approximate primal 
solution path via \eqref{eq:genlassopd}.
The stagewise procedure for \eqref{eq:genlassodual} can be initialized
with $\lambda_0=0$ and $u^{(0)}=0$, and the form of the updates is 
described next.  We assume that the conjugate function $f^*$ is
differentiable, which holds if $f$ is strictly convex.    

\begin{lemma}
\label{lem:genlasso}
Applied to the problem \eqref{eq:genlassodual}, the
general stagewise procedure in Algorithm \ref{alg:genstage} repeats
the updates $u^{(k)}=u^{(k-1)}+\Delta$, where    
\begin{equation}
\label{eq:genlassodir}
\Delta_i = -\epsilon \cdot
\begin{cases}
1 & \big[D \nabla f^*(-D^T u^{(k-1)})\big]_i < 0 \\
-1 & \big[D \nabla f^*(-D^T u^{(k-1)})\big]_i > 0\\
0 & \big[D \nabla f^*(-D^T u^{(k-1)})\big]_i = 0
\end{cases}
\;\;\;\text{for}\;\, i=1,\ldots m.
\end{equation}
\end{lemma}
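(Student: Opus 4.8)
The plan is to apply the general stagewise update \eqref{eq:stagedir} directly to the dual problem \eqref{eq:genlassodual}, recognizing the regularizer there as the $\ell_\infty$ norm. Specifically, for the stagewise procedure applied to \eqref{eq:genlassodual}, the loss function is $\phi(u) = f^*(-D^T u)$ with $\phi$ differentiable by hypothesis, and the regularizer is $g(u) = \|u\|_\infty$. The update direction is therefore $\Delta \in \argmin_{z \in \R^m} \langle \nabla \phi(u^{(k-1)}), z \rangle$ subject to $\|z\|_\infty \leq \epsilon$. Since the constraint set is the box $[-\epsilon,\epsilon]^m$, this linear minimization separates coordinatewise: each $\Delta_i$ is chosen to minimize $[\nabla \phi(u^{(k-1)})]_i \cdot z_i$ over $z_i \in [-\epsilon,\epsilon]$, which gives $\Delta_i = -\epsilon \cdot \sign([\nabla \phi(u^{(k-1)})]_i)$, with $\Delta_i = 0$ (or anything in $[-\epsilon,\epsilon]$) permissible when the gradient component vanishes.

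First I would compute $\nabla \phi(u)$ via the chain rule: $\nabla \phi(u) = \nabla_u \big[f^*(-D^T u)\big] = -D \,\nabla f^*(-D^T u)$, using that $\nabla f^*$ is well-defined under the stated strict-convexity assumption on $f$. Substituting this into the coordinatewise minimization, $\Delta_i = -\epsilon \cdot \sign\big([-D \nabla f^*(-D^T u^{(k-1)})]_i\big) = \epsilon \cdot \sign\big([D \nabla f^*(-D^T u^{(k-1)})]_i\big)$. Unwinding the sign convention gives exactly the three-case formula in \eqref{eq:genlassodir}: the direction is $-\epsilon$ when $[D \nabla f^*(-D^T u^{(k-1)})]_i < 0$, $+\epsilon$ when it is positive, and $0$ when it is zero. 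Alternatively, one could invoke the dual-update representation \eqref{eq:dualdir}, noting that the dual of $\|\cdot\|_\infty$ is $\|\cdot\|_1$ and that a subgradient of the $\ell_1$ norm at a vector $w$ is the componentwise sign vector (with freedom in $[-1,1]$ on zero components); this yields the same expression with slightly less bookkeeping.

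I do not expect a genuine obstacle here, since the result is essentially a routine specialization of Algorithm \ref{alg:genstage}. The one point deserving care is justifying differentiability of $\phi(u) = f^*(-D^T u)$: one should note that strict convexity of $f$ implies $f^*$ is differentiable (a standard fact from convex analysis), so $\nabla f^*$ exists everywhere on its domain, and then $\phi$ inherits differentiability by composition with the linear map $u \mapsto -D^T u$. A secondary subtlety is the non-uniqueness of the minimizer when some coordinate of $D \nabla f^*(-D^T u^{(k-1)})$ is zero; the element-of notation in Algorithm \ref{alg:genstage} already accommodates this, and the convention in \eqref{eq:genlassodir} of setting $\Delta_i = 0$ in that case is simply one valid selection. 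Everything else is the coordinatewise separation of a linear program over a box, which is immediate.
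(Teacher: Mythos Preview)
Your proposal is correct and matches the paper's own proof, which is stated only as a one-line sketch: ``The proof follows from the duality of the $\ell_\infty$ and $\ell_1$ norms, and the alternative representation in \eqref{eq:dualdir} for stagewise updates.'' Your primary argument (coordinatewise minimization of a linear function over the box $[-\epsilon,\epsilon]^m$) and your alternative via \eqref{eq:dualdir} with the $\ell_1$ subgradient are both valid and essentially equivalent; the latter is precisely what the paper invokes.
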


The proof follows from the duality of the $\ell_\infty$ and $\ell_1$
norms, and the alternative representation in \eqref{eq:dualdir} for
stagewise updates. Computation of $\Delta$ in
\eqref{eq:genlassodir}, aside from evaluating the gradient $\nabla
f^*$, reduces to two matrix multiplications: one by $D$ and one by
$D^T$. In many cases (e.g., fused lasso and trend filtering problems),
the matrix $D$ is sparse, which makes this update step very cheap.    
To reiterate the dual strategy: we compute the dual estimates
$u^{(k)}$, $k=1,2,3,\ldots$ using the stagewise updates outlined
above, and we compute primal estimates $\beta^{(k)}$,
$k=1,2,3,\ldots$ by solving for $\beta^{(k)}$ in the stationarity
condition      
\begin{equation}
\label{eq:genlassopd2}
\nabla f (\beta^{(k)}) + D^T u^{(k)} = 0,
\end{equation}
for each $k$.  The $k$th dual iterate $u^{(k)}$ is viewed as an
approximate solution in \eqref{eq:genlassodual} at
$\lambda=\|u^{(k)}\|_\infty$, and the $k$th primal iterate
$\beta^{(k)}$ an approximate solution in \eqref{eq:genlasso} at
$t=\|D\beta^{(k)}\|_1$.  

As pointed out by a referee of this paper, there is a key relationship
between $f$ and its conjugate $f^*$ that simplifies the
update direction in \eqref{eq:genlassodir} considerably.  At step $k$,
observe that  
\begin{equation*}
\nabla f^*(-D^T u^{(k-1)}) = \nabla f^*(\nabla f(\beta^{(k-1)})) =
\beta^{(k-1)}. 
\end{equation*}
The first equality comes from the primal-dual relationship
\eqref{eq:genlassopd} at step $k-1$, and the second is due to
the fact that $x = \nabla f^*(z) \iff z = \nabla f(x)$.  
As a result, the dual update $u^{(k)}=u^{(k-1)}+\Delta$ with $\Delta$
as in \eqref{eq:genlassodir} can be written more succinctly as 
\begin{equation}
\label{eq:genlassoup}
u^{(k)} = u^{(k-1)} - \epsilon \cdot \sign(D\beta^{(k-1)}),
\end{equation}
where $\sign(\cdot)$ is to be interpreted componentwise (with the
convention $\sign(0)=0$).  Therefore, one can think of the dual
stagewise strategy as alternating between computing a dual estimate
$u^{(k)}$ as in \eqref{eq:genlassoup}, and computing a primal
estimate $\beta^{(k)}$ by solving \eqref{eq:genlassopd2}. 

We note that, since the stagewise algorithm is being run through the
dual, the estimates $\beta^{(k)}$, $k=1,2,3,\ldots$ for generalized  
lasso problems differ from those in the other stagewise
implementations encountered thus far, in that $\beta^{(k)}$,
$k=1,2,3,\ldots$ correspond to approximate solutions at 
{\it increasing} levels of regularization, 
as $k$ increases.  That is, the stagewise algorithm
for problem \eqref{eq:genlasso} begins at the unregularized end of the
path and iterates towards the fully regularized end, which is opposite
to its usual direction.  

A special case worth noting is that of Gaussian signal
approximator problems, where the loss is
\smash{$f(\beta)=\half\|y-\beta\|_2^2$}. 
For such problems, the primal-dual relationship in
\eqref{eq:genlassopd2} reduces to
\begin{equation*}
\beta^{(k)} = y-D^T u^{(k)},
\end{equation*}
for each $k$.  This means that the initialization $u^{(0)}=0$ and
$\lambda_0=0$ in the dual is the same as $\beta^{(0)}=y$ and  
$t_0=\|Dy\|_1$ in the primal.  Furthermore, it means that the dual 
updates in \eqref{eq:genlassoup} lead to primal updates that can be
expressed directly as
\begin{equation}
\label{eq:genlassogup}
\beta^{(k)} = \beta^{(k-1)} - \epsilon \cdot D^T \sign(D\beta^{(k-1)}).
\end{equation}
From the pure primal perspective, therefore,
the stagewise algorithm begins   
with the trivial unregularized estimate $\beta^{(0)}=y$, and
to fit subsequent estimates in \eqref{eq:genlassogup},
it iteratively shrinks along directions opposite to the active rows of
$D$. That is, if $D_\ell \beta^{(k-1)}>0$ (where $D_\ell$ is the
$\ell$th row of $D$), then the algorithm adds $D_\ell^T$ to
$\beta^{(k-1)}$ in forming $\beta^{(k)}$, which shrinks 
$D_\ell \beta^{(k)}$ towards zero, as $D_\ell D_\ell^T >0$ (recall
that $D_\ell$ is a row vector).  The case $D_\ell
\beta^{(k-1)}<0$ is similar. If $D_\ell \beta^{(k-1)}=0$, then no 
shrinkage is applied along $D_\ell$.    

This story can be made more concrete for fused lasso
problems, where $D$ is the edge incidence matrix of a graph:  
here the update in \eqref{eq:genlassogup} evaluates the differences 
across neighboring components of $\beta^{(k-1)}$, and for any nonzero 
difference, it shrinks the associated components towards each other   
to build $\beta^{(k)}$.  The level of shrinkage is uniform across 
all active differences, as any two neighboring components move a 
constant amount $\epsilon$ towards each 
other.\footnote{This is 
  assuming that $D$ is the edge incidence matrix of an unweighted
  graph; with edge weights, the rows of $D$ scale accordingly, and so
  the effective amounts of shrinkage in the stagewise algorithm
  scale accordingly too.}  This is a simple and natural
iterative procedure for fitting piecewise constant estimates over
graphs.  For small examples using 1d and 2d grid graphs, 
see Appendix \ref{app:fusedlasso}.

\section{Large-scale examples and practical considerations}   
\label{sec:bigexamples}

We compare the proposed general stagewise procedure to various
alternatives, with respect to both computational and statistical
performance, across the three of the four major regularization
settings seen so far.  The fourth setting is moved to the 
appendix, for space; see Appendix \ref{app:biglog}. The current
section specifically investigates 
large examples, at least relative to the small examples presented in
Sections \ref{sec:introduction}--\ref{sec:applications}.   Of course,
one can surely find room to criticize our comparisons, e.g., with
respect to a different tuning of the algorithm that computes exact
solutions, a coarser grid of regularization parameter values over
which it computes solutions, a different choice of algorithm
completely, etc.  We have tried to conduct fair comparisons in each
problem setting, but we  recognize that perfectly fair and exhaustive
comparisons are near impossible. The message that we hope to convey is
not that the stagewise algorithm is computationally superior to other   
algorithms in the problems we consider, but rather, that the
stagewise algorithm is computationally competitive with the others,
yet it is very simple, and capable of producing estimates of
high statistical quality. 

\subsection{Group lasso regression}
\label{sec:biggroup}

\noindent
{\bf Overview.}
We examine two simulated high-dimensional group lasso regression
problems. To compute group lasso solution paths, we used the   
{\tt SGL} R package, available on the CRAN repository.  This package 
implements a block coordinate descent algorithm for solving the group
lasso problem, where each block update itself applies
accelerated proximal gradient descent \citep{sgl}.  This idea 
is not complicated, but an efficient implementation of this
algorithm requires care and attention to detail, such as backtracking
line search for the proximal gradient step sizes.  The stagewise
algorithm, on the other hand, is very simple---in C++, the 
implementation is only about 50 lines of code.  Refer to
Section \ref{sec:groupreg} for a description of the stagewise update
steps. The algorithmics of the {\tt SGL} package are also written in
C++. 

\bigskip
\noindent
{\bf Examples and comparisons.}  In both problem setups, we used 
$n=200$ observations, $p=4000$ predictors, and $G=100$
equal-sized groups (of size 40).  The true coefficient vector
$\beta^* \in \R^{4000}$ was defined to be group sparse, supported on
only 4 groups, and the nonzero components were drawn 
independently from
$N(0,1)$.  We generated observations $y \in \R^{200}$ by adding
independent  $N(0,\tau^2)$ noise to $X\beta^*$, where the predictor
matrix  $X \in \R^{200 \times 4000}$ and noise level $\tau$ were
chosen under two different setups.  In the first, the entries of $X$
were drawn independently from $N(0,1)$, so that the predictors were
uncorrelated (in the population); we also let $\tau=6$.  In the
second, each row of $X$ was drawn independently from a $N(0,\Sigma)$
distribution, where $\Sigma$ had a block correlation structure.  The
covariance matrix $\Sigma$ was defined so that each predictor variable 
had unit (population) variance, but (population) correlation
$\rho=0.85$ with 99 other predictors, each from a different group.
Further, in this second setup, we used an elevated noise level $\tau=10$. 

Figure \ref{fig:big_grouplasso} shows a comparison of the group
lasso and stagewise paths, from both computational and statistical 
perspectives.  We fit group lasso solutions over 100 regularization
parameter values (the {\tt SGL} package started at the
regularized end, and used warm starts).  We also ran the stagewise
algorithm in two modes: for 250 steps with $\epsilon=1$, and for 25
steps with $\epsilon=10$.  The top row of Figure
\ref{fig:big_grouplasso} asserts that, in both the uncorrelated and
correlated problem setups, the mean squared errors of the stagewise
fits $X\beta^{(k)}$ to the underlying mean $X\beta^*$ are quite
competitive with those of the exact fits \smash{$X\hbeta(t)$}.  
In both plots, the red and black error curves, corresponding to the
stagewise fits with $\epsilon=1$ and the exact fits, respectively,
lie directly on top of each other. It took less than 1
second to compute these stagewise fits, in either problem setup;
meanwhile, it took about  
10 times this long to compute the group lasso fits in the uncorrelated
setup, and 100 times this long in the correlated setup.
The stagewise algorithm with $\epsilon=10$ took 
less than 0.1 seconds to compute a total of 25 estimates, and offers
a slightly degraded but still surprisingly competitive mean squared 
error curve, in both the correlated and uncorrelated problem setups.
Exact timings can be found in the middle row of Figure
\ref{fig:big_grouplasso}.  The error curves and
timings were all averaged over 10 draws of observations $y$ from
the uncorrelated or correlated simulation models (for fixed
$X,\beta^*$); the timings were made on a desktop personal
computer. 

\begin{figure}[htbp]
\vspace{-20pt}
\centering
\includegraphics[width=0.45\textwidth]{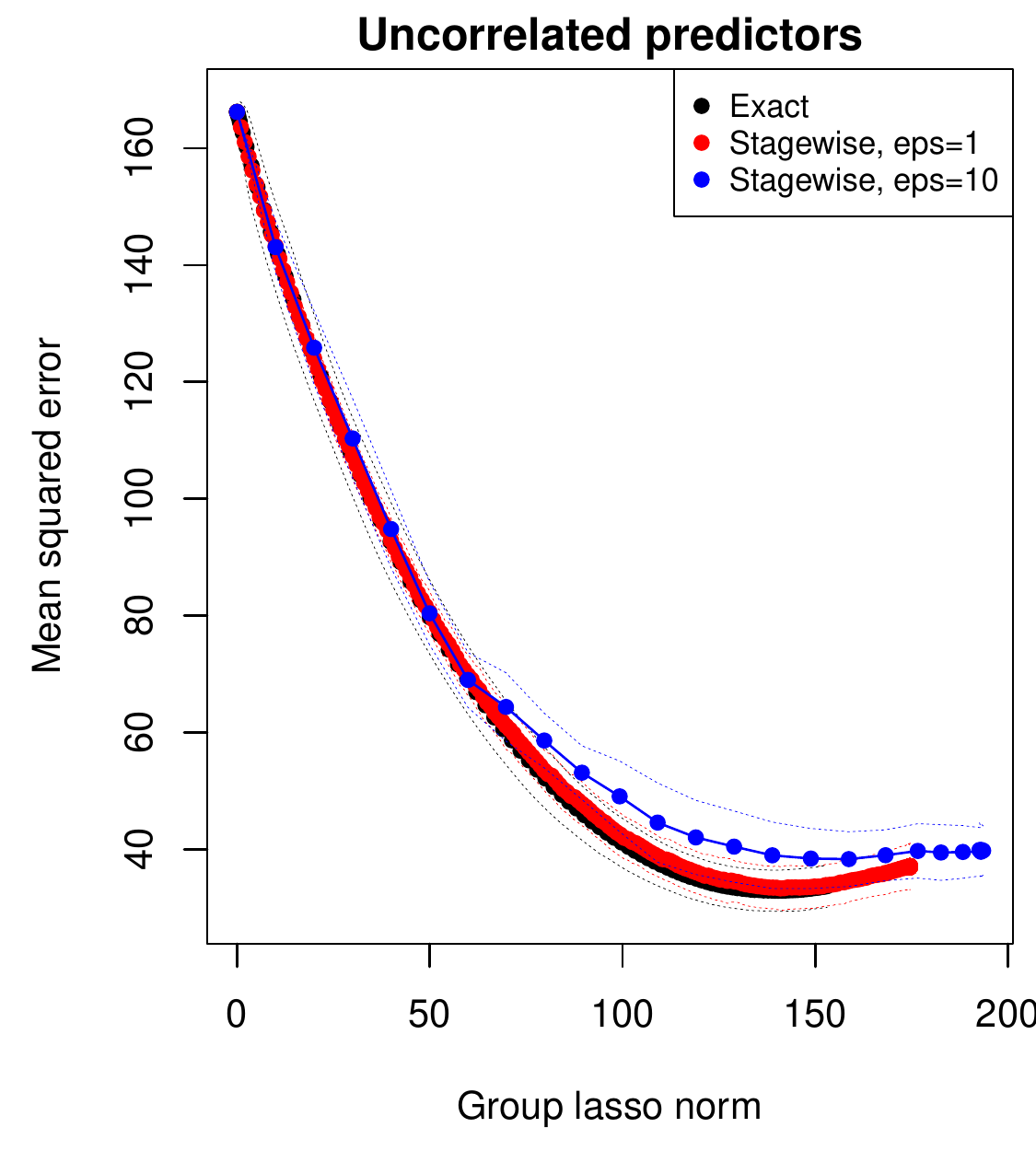}
\includegraphics[width=0.45\textwidth]{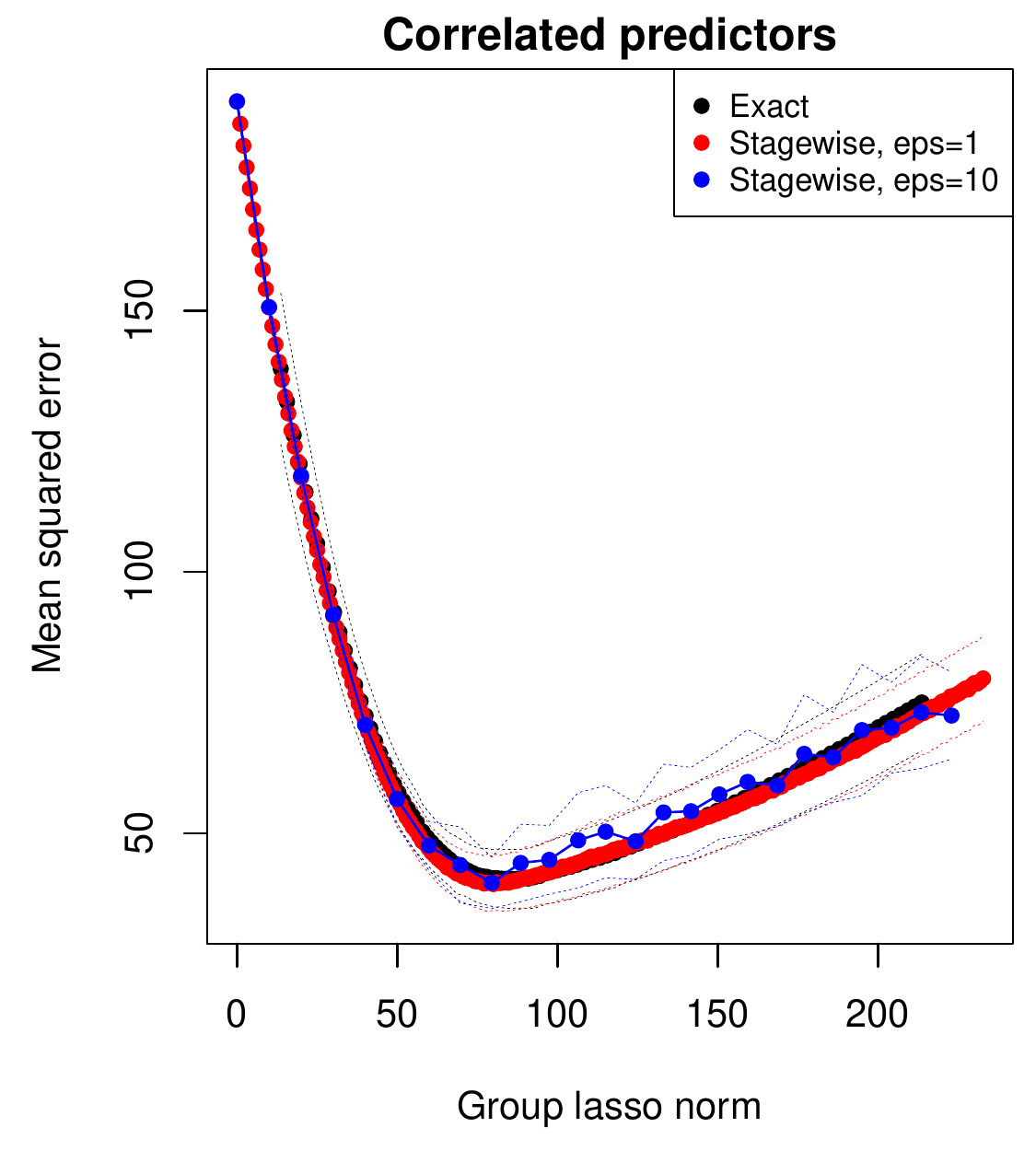} 
\vspace{10pt} \\
{\small
\begin{tabular}{|c|r|r|}
\hline
\multicolumn{3}{|c|}{Algorithm timings} \\
\hline
\hline
Method & Uncorrelated case & Correlated case \\
\hline
Exact: coordinate descent, 100 solutions & 9.08 (1.06) & 78.64 (17.92)\\
Stagewise: $\epsilon=1$, 250 estimates & 0.93 (0.00) & 0.94 (0.01) \\
Stagewise: $\epsilon=10$, 25 estimates & 0.09 (0.00) & 0.10 (0.01) \\
\hline
Frank-Wolfe: within 1\% of criterion value & 67.73 (10.37) & 92.91 (8.37) \\
Frank-Wolfe: within 1\% of mean squared error & 1.30 (0.56) & 13.17 (26.26) \\
\hline
\end{tabular}}
\vspace{20pt} \\
\includegraphics[width=0.425\textwidth]{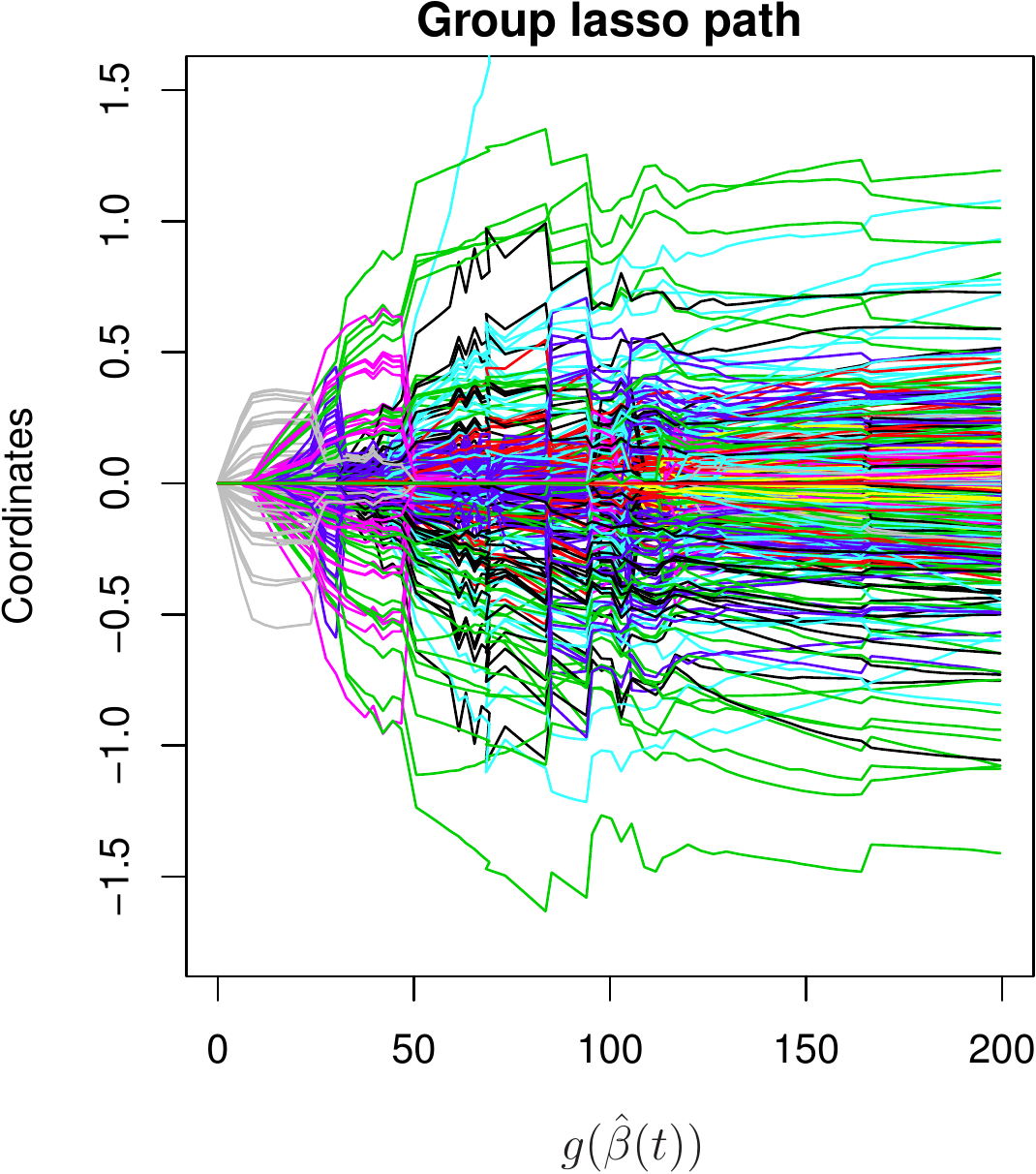}
\includegraphics[width=0.425\textwidth]{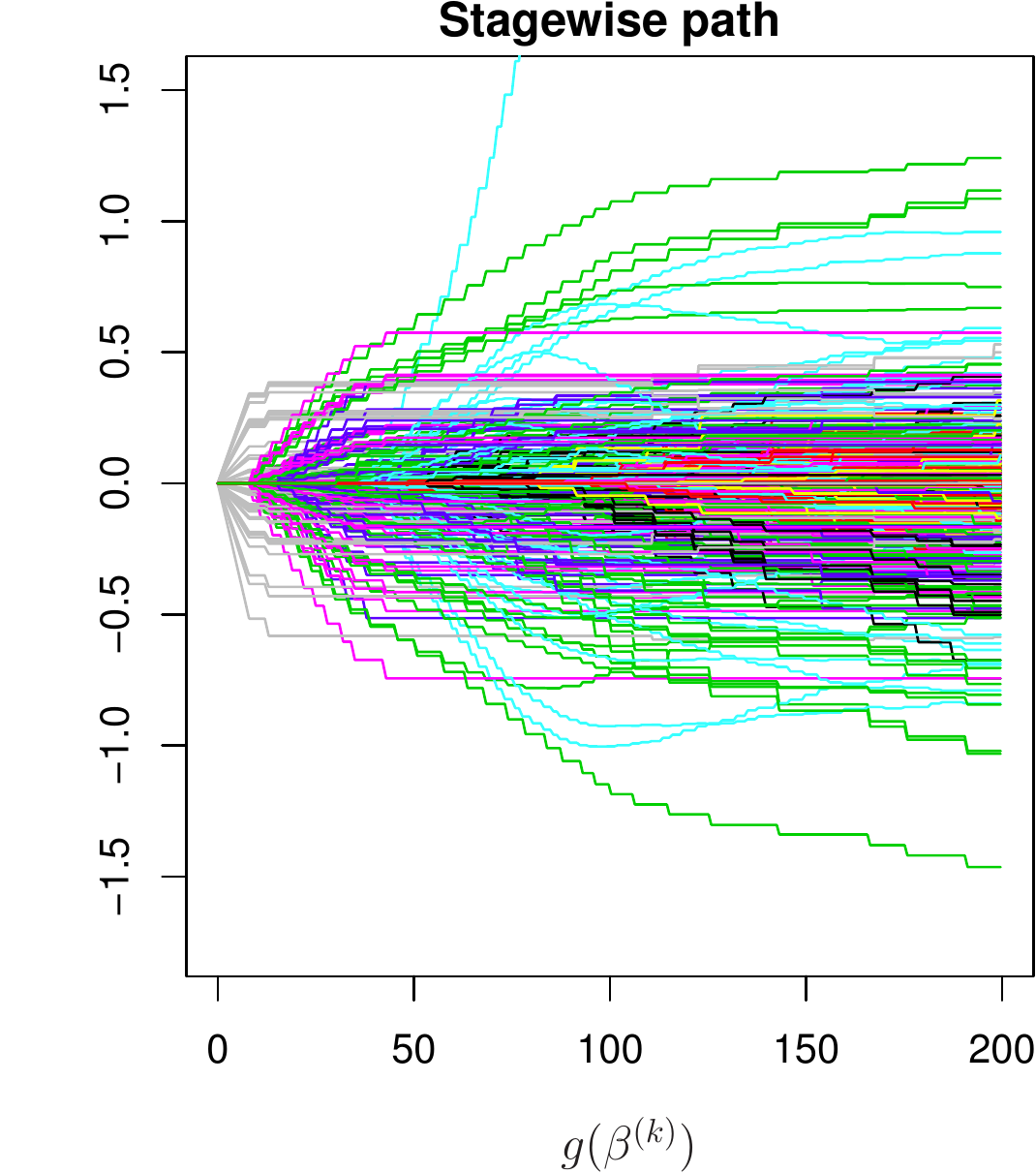} 
\caption{\it Statistical and computational comparisons between group
  lasso solutions and corresponding estimates produced by the
  stagewise approach, when $n=200$, $p=4000$.
  The top row shows that stagewise estimates
  can achieve competitive mean squared errors to that of group lasso
  solutions, as computed by coordinate descent, under two different
  setups for the predictors in group lasso regression:
  uncorrelated and block correlated.  (The curves were averaged over
  10 simulations, with standard deviations denoted by dotted
  lines.)  The middle table reports runtimes in seconds 
  (averaged over 10 simulations, with standard deviations in
  parentheses) for the various  
  algorithms considered, and shows that the stagewise algorithm
  represents a computationally attractive alternative to the {\tt SGL}
   coordinate descent approach and the Frank-Wolfe algorithm.
   Lastly, the bottom row contrasts the group lasso and
   stagewise component paths, for one draw from the
   correlated predictors setup.}     
\label{fig:big_grouplasso}
\end{figure}

Though the exact and stagewise component paths typically appear
quite similar in the uncorrelated problem setup, the same is not true
for the correlated setup. The bottom row of
Figure~\ref{fig:big_grouplasso} displays an example of the two sets of  
component paths for one simulated draw of observations, under the
correlated predictor model. The component paths of the group lasso
solution, on the left, vary wildly with the regularization
parameter; the stagewise paths, on the right, are much more stable. 
It is interesting to see that such different estimates can yield
similar mean squared errors (as, recall, shown in the top row of
Figure \ref{fig:big_grouplasso}) but this is the nature of using  
correlated predictors in a regression problem.

\bigskip
\noindent
{\bf Frank-Wolfe.} We include a comparison to the Frank-Wolfe
algorithm for computing group lasso solutions, across the same 100
regularization parameter values considered by the 
coordinate descent method.  Recall that the updates from Frank-Wolfe
share the same computational underpinnings as the stagewise ones, but 
are combined in a different manner; refer to
Appendix \ref{app:frankwolfe} for details.  We implemented the
Frank-Wolfe method for group lasso regression in C++, which starts 
at the largest regularization parameter value, and uses warm starts
along the parameter sequence.  The middle row of Figure 
\ref{fig:big_grouplasso} 
reports the Frank-Wolfe timings, averaged over 10 draws from the
uncorrelated and correlated simulation models.  We considered two
schemes for termination of the algorithm, at each regularization
parameter value $t$: the first terminates when 
\begin{equation}
\label{eq:train}
\|y-X\tbeta(t)\|_2^2 \leq 1.01 \cdot \|y-X\hbeta(t)\|_2^2,
\end{equation}
where \smash{$\tbeta(t)$} is the Frank-Wolfe iterate at $t$, and
\smash{$\hbeta(t)$} is the computed coordinate descent solution at
$t$; the second terminates when
\begin{equation}
\label{eq:test}
\|X\beta^*-X\tbeta(t)\|_2^2 \leq 1.01 \cdot \max \Big\{
\|X\beta^*-X\hbeta(t)\|_2^2, \|X\beta^*-X\beta^{(k_t)}\|_2^2 \Big\}, 
\end{equation}
where $\beta^{(k_t)}$ is the imputed stagewise estimate at the
parameter value $t$ (computed by linear interpolation of the
appropriate neighboring stagewise estimates).  In other words, the
first rule \eqref{eq:train} stops when the Frank-Wolfe iterate is
within 1\% of the criterion value achieved by the coordinate descent
solution, and the second rule \eqref{eq:test} stops when the
Frank-Wolfe iterate is within 1\% of the mean squared error of either
of the coordinate descent or stagewise fits.  Using the first rule, the
Frank-Wolfe algorithm took about 68 seconds to compute 100
solutions in the uncorrelated problem setup, and 93 seconds in
the correlated problem setup.  In terms of the total iteration count,
this meant 18,627 Frank-Wolfe iterations in the
uncorrelated case, and 25,579 in the correlated case; these numbers are
meaningful, because, recall, one Frank-Wolfe iteration is
(essentially) computationally equivalent to one stagewise iteration.
We can see that Frank-Wolfe struggles here
to compute solutions that match the accuracy of coordinate descent
solutions, especially for large values of $t$---in fact, when we
changed the factor of 1.01 to 1 in the stopping rule \eqref{eq:train},
the Frank-Wolfe algorithm converged far, far more slowly.  (For this
part, the coordinate descent solutions themselves were only computed
to moderate accuracy; we used the default convergence threshold in the
{\tt SGL} package.)  The results are  more optimistic under the second
stopping rule.  Under this rule, the 
Frank-Wolfe algorithm ran in just over 1 second (274 
iterations) in the uncorrelated setup, and about 13 seconds (3592 
iterations) in the correlated setup.  But this stopping rule
represents an idealistic situation for Frank-Wolfe, and moreover,
it cannot be realistically applied in practice, since it relies on
the underlying mean $X\beta^*$. 

\subsection{Matrix completion}
\label{sec:bigmat}

\noindent
{\bf Overview.}
We consider two matrix completion examples, one simulated and one
using real data.  To compute solutions of the matrix
completion problem, under trace norm regularization, we used the 
{\tt softImpute} R package from CRAN, which 
implements proximal gradient descent 
\citep{softimpute}. The proximal operator here
requires a truncated singular value decomposition (SVD) of a matrix
the same dimensions as the input (partially observed) matrix $Y$. SVD 
calculations are generally very expensive, but for this problem a
partial SVD can be efficiently
computed with clever schemes based on bidiagonalization 
or alternating least squares.  The {\tt softImpute}
package uses the latter scheme to compute a truncated SVD, and
though this does provide a substantial improvement over the naive
method of computing a full SVD, it is still far from cheap. 
The partial SVD computation via alternating least squares 
scales roughly quadratically with the rank of the sought solution, and
this must be repeated for every iteration taken by the algorithm until 
convergence.   

In comparison, the stagewise steps for the matrix completion 
problem require only the top left and right singular vectors of a 
matrix the same size as 
the input $Y$.  Refer back to Section \ref{sec:tracereg} for an 
explanation. To emphasize the differences between the two methods: 
the proximal gradient descent algorithm of {\tt softImpute}, at each 
regularization parameter value $t$ of interest, must iteratively
compute a partial SVD until converging on the desired solution; the
stagewise algorithm computes a single pair of left and right singular
vectors, to form one estimate at one parameter value $t$, and then
moves on to the next value of $t$.  
For the following examples, we used a simple
R implementation of the stagewise algorithm; the computational core of
the {\tt softImpute} package is also written in R.  

\bigskip
\noindent
{\bf Examples and comparisons.}
In the first example, we simulated an underlying low-rank matrix  
$B^* \in \R^{500 \times 500}$, of rank 50, by letting $B^*=UU^T$,
where $U \in \R^{500 \times 50}$ had independent $N(0,1)$ entries. 
We then added $N(0,20)$ noise, and discarded 40\% of the
entries, to form the input matrix $Y \in \R^{500 \times 500}$ 
(so that $Y$ was 60\% observed). We ran {\tt softImpute} at 
100 regularization parameter values (starting at the
regularized end, and using warm starts), and we ran two different
versions of the stagewise algorithm: one with $\epsilon=50$, for 500
steps, and 
one with $\epsilon=250$, for 100 steps.  The left plot in Figure  
\ref{fig:big_matcomp} shows the mean squared error curves 
of the resulting estimates, averaged over 10 draws of the input matrix   
$Y$ from the above prescription (with $B^*$ fixed). We can see that
the stagewise estimates, with $\epsilon=50$, trace out an essentially 
identical mean squared error curve to that from the exact solutions.
We can also see that, curiously, the larger step size
$\epsilon=250$ leads to suboptimal performance in stagewise
estimation, as measured by mean squared error.  This is unlike
the previous group lasso setting, in which a larger step size still
yielded basically the same performance (albeit slightly noisier mean
squared error curves).  

\begin{figure}[htb!]
\centering
\includegraphics[width=0.475\textwidth]{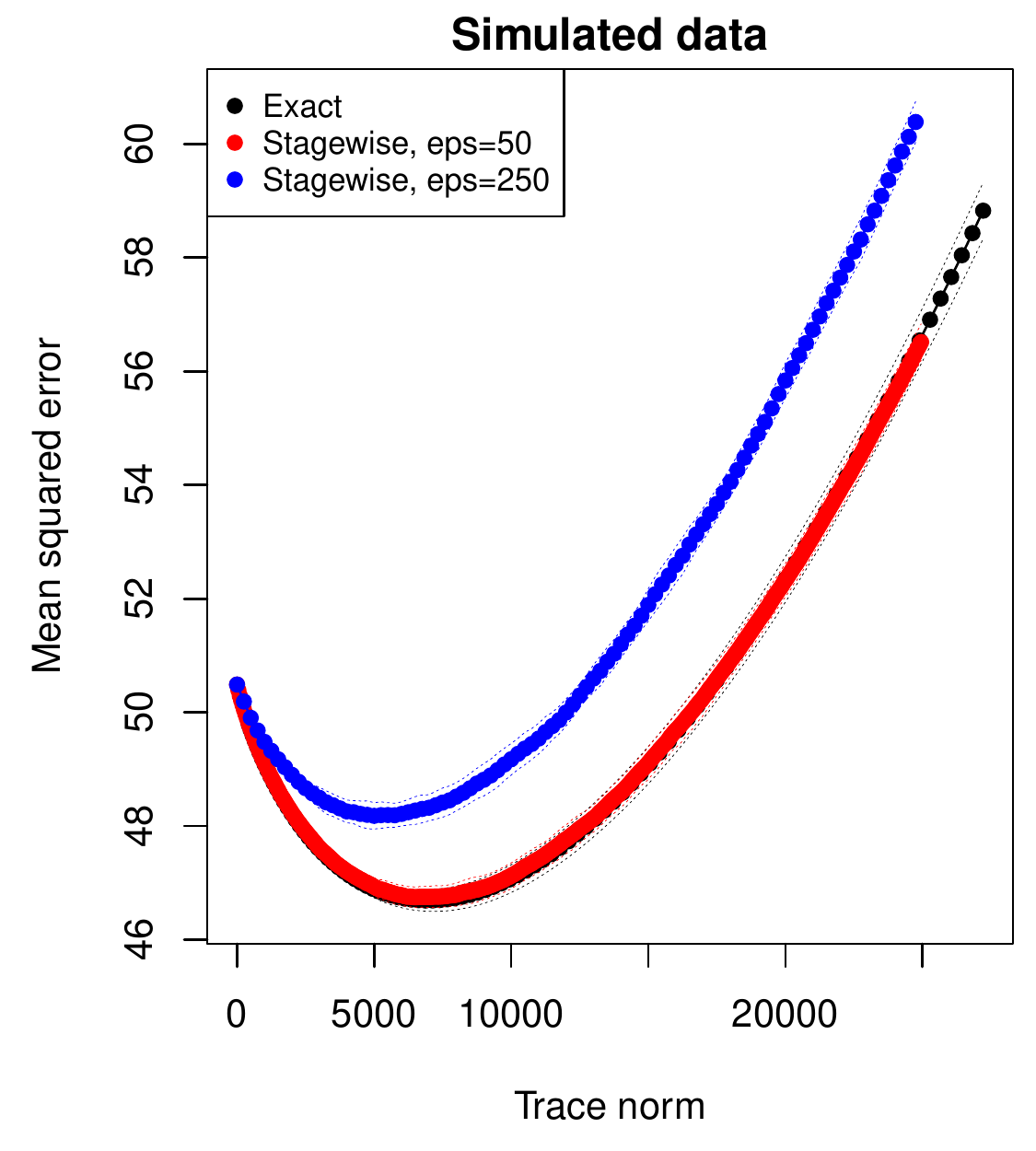}
\includegraphics[width=0.475\textwidth]{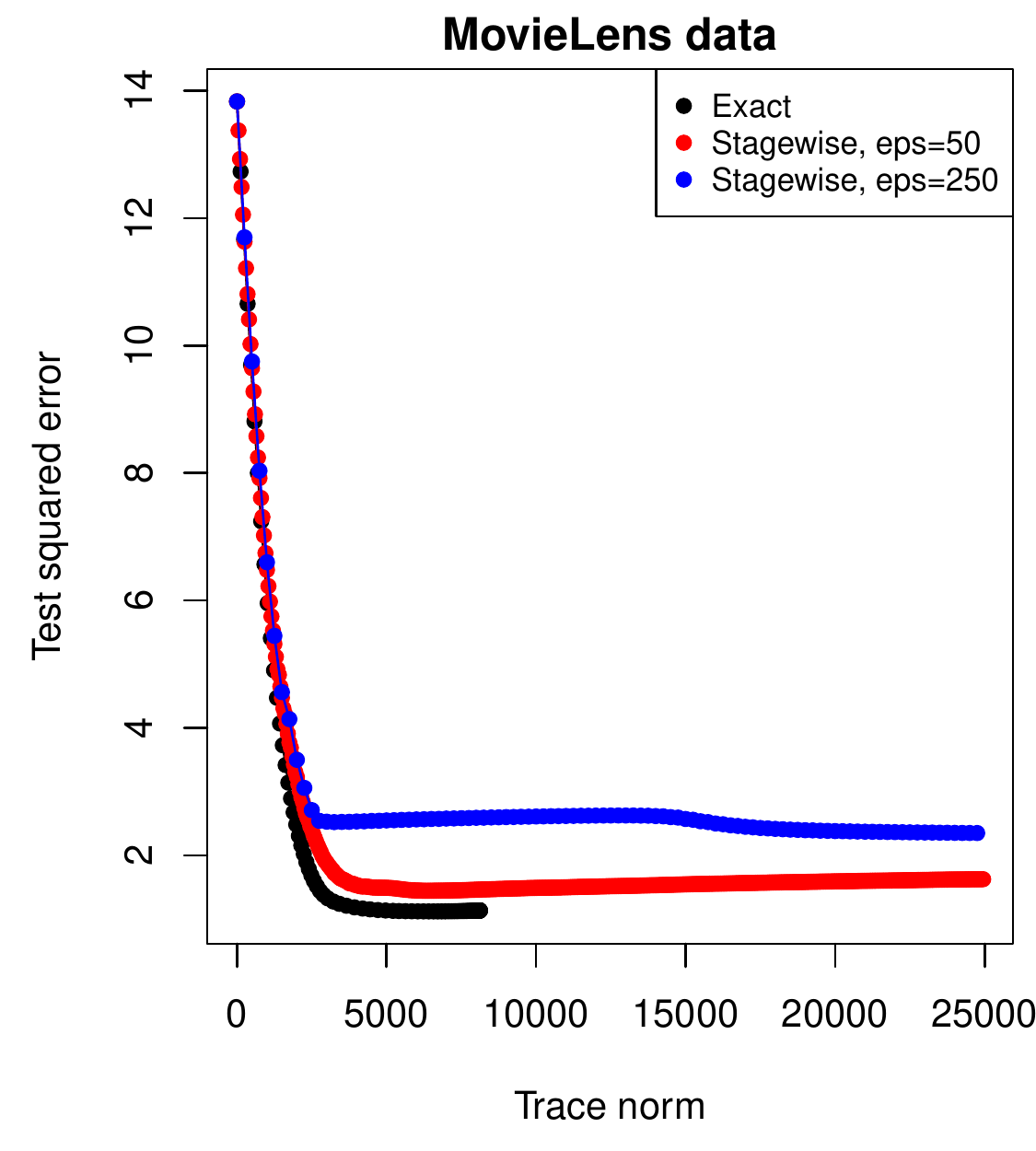}
\vspace{5pt} \\
{\small
\begin{tabular}{|c|r|r|}
\hline
\multicolumn{3}{|c|}{Algorithm timings} \\
\hline
\hline
Method & Simulated data & MovieLens data \\
\hline
Exact: proximal gradient, 100 solutions & 60.20 (1.45) & 334.67 \\
Stagewise: $\epsilon=50$, 500 estimates & 92.92 (2.42) & 107.66 \\
Stagewise: $\epsilon=250$, 100 estimates & 18.26 (0.98) & 21.22 \\
\hline
Frank-Wolfe: within 1\% of criterion value & 989.77 (19.88) & - \\
Frank-Wolfe: within 1\% of mean squared error & 154.06 (10.76) & - \\
\hline
\end{tabular}}
\caption{\it Comparisons between exact and stagewise estimates for
matrix completion problems.  The top left plot shows mean
squared error curves for a simulated example of a 40\% observed, 
$500 \times 500$ input matrix, and the right shows the same for the 
MovieLens data, where the input is 6\% observed and 
$943 \times 1682$. (The  
error curves in the left plot were averaged over 10
repetitions, and standard deviations are drawn as dotted lines.)
The stagewise estimates with $\epsilon=50$ are competitive in both  
cases. The bottom table gives the runtimes of {\tt softImpute}
proximal gradient descent, 
stagewise, and the Frank-Wolfe algorithm.
(Timings for the simulated case were averaged over 10
repetitions, with standard deviations in parentheses; Frank-Wolfe
was not run on the MovieLens example.)} 
\label{fig:big_matcomp} 
\end{figure}

In this simulated example, the proximal gradient descent method
implemented by {\tt softImpute} took an average of 206 iterations to
compute 100 solutions across 100 values of the regularization
parameter (averaged over the 10 repetitions of the observation matrix
$Y$).   This means an average of just 2.06 iterations per
solution---quite rapid convergence behavior for a first-order method  
like proximal gradient descent. (Note: we used the default convergence
threshold for {\tt softImpute}, which is only moderately small.) The
stagewise algorithms, using step sizes  
$\epsilon=50$ and $\epsilon=250$, ran for 500 and 100
iterations, respectively.  As explained, the two types of iterations
here are different in nature. Each iteration of proximal
gradient descent computes a truncated SVD, which is of roughly
quadratic complexity in the rank of current solution, and therefore
becomes more expensive as we progress down the regularization path;
each stagewise iteration computes a single pair of left and right
singular vectors, which has 
the same cost throughout the path, independent of the rank of the
current estimate.  The bottom row of Figure \ref{fig:big_matcomp} is a
table containing the running times of these two methods (averaged over
10 draws of $Y$, and recorded on a desktop computer).  We see that
proximal gradient descent spent an average of about 60 seconds to 
compute 100 solutions, i.e., 0.6 seconds per solution.  The
stagewise algorithm with $\epsilon=50$ took an average of about 93
seconds for 500 steps, and the algorithm with $\epsilon=250$ an
average of 18 seconds for 100 steps, with both translate into about
0.18 seconds per estimate. The speedy time of 0.6 seconds per
estimate of {\tt softImpute} is explained by two factors: fast
iterations (using the impressive, custom alternating least squares
routine developed by the package authors to compute partial SVDs), and
few iterations needed per solution (recall, only an average of 2.06
per solution in this example).  
The 0.18 seconds per stagewise iteration reflects the
runtime of computing leading left and right singular vectors with R's
standard {\tt svd} function, as our implementation somewhat naively
does (it does not take advantage of sparsity in any
way). This naive stagewise implementation works just fine for moderate  
matrix sizes, as in the current example.  But for larger matrix
sizes (and higher levels of missingness), we see significant 
improvements when we use a more specialized routine for computing the
top singular vectors. We also see a bigger separation in the costs per
estimate with stagewise and proximal gradient descent. This is
discussed next.   

The second example is based on the MovieLens data set (collected by
the GroupLens Research Project at the University of Minnesota, 
see \url{http://grouplens.org/datasets/movielens/}). 
We examined a 
subset of the full data set, with 100,000 ratings from $m = 943$ users
on $n=1682$ movies (hence the input 
matrix $Y \in \R^{943 \times 1682}$ was approximately 6\% observed).
We used an 80\%/20\% split of these ratings for training and testing,
respectively; i.e., we computed matrix completion estimates using the
first 80\% of the ratings, and evaluated test errors on the held out
20\% of the ratings.  
For the estimates, we ran {\tt softImpute} over 100 values of the
regularization parameter (starting at the regularized end, using warm
starts), and stagewise with $\epsilon=50$ for 500 steps, as well as
with $\epsilon=250$ for 100 steps. The right plot of Figure 
\ref{fig:big_matcomp} shows the test error curves from each of
these methods.  The stagewise estimates computed with $\epsilon=50$
and the exact solutions perform quite similarly, with the exact
solutions having a slight advantage as the trace norm exceeds about
2500.  The stagewise error curve when $\epsilon=250$ begins by
dropping off strongly just like the other two curves, but then it
flattens out too early, while the other two continue descending.
(We note that, for step sizes larger than $\epsilon=250$, the test
error curve stops decreasing even earlier, and for step sizes smaller
than  $\epsilon=50$, the error curve reaches a slightly lower minimum,
in line with that of the exact solution.  This type of behavior is 
reminiscent of boosting algorithms.)  

In terms of computation, the proximal gradient descent algorithm used
a total of 1220 iterations to compute 100 solutions in the MovieLens
example, or an average of 122 iterations per solution.  This is much
more than the 2.06 seconds per iteration as in the previous
simulated example, and it explains the longer total runtime of about
335 seconds, i.e., the longer total time of 33.5 seconds per
solution. 
The stagewise algorithms ran, by construction, for 500 and 100 steps
and took about 108 and 21 seconds, respectively, i.e., an average of
0.21 seconds per estimate.  To compute the leading left and right
singular vectors in each stagewise step here, we used the 
{\tt rARPACK} R package from CRAN, which accomodates sparse 
matrices.  This was highly beneficial because the gradient 
$\nabla f(B^{(k-1)})$ at each stagewise step was very sparse (about 
6\% entries of its were nonzero, since $Y$ was about 6\% observed).    

\bigskip
\noindent
{\bf Frank-Wolfe.} We now compare the Frank-Wolfe algorithm for 
computing matrix completion solutions, over the same 100
regularization parameter values used by {\tt softImpute}.  Each
Frank-Wolfe iteration computes a single pair of left and right top
singular vectors, just like stagewise iterations; 
see Appendix \ref{app:frankwolfe} for a general description of the
Frank-Wolfe method (or \citet{fwtrace} for a study of Frank-Wolfe
for trace norm regularization problems in particular).  We implemented 
the Frank-Wolfe algorithm for  
matrix completion in R, which starts at the regularized end of the
path, and uses warm starts at each regularization parameter value.
The timings for the Frank-Wolfe method, run on the simulated
example, are given in the table in Figure \ref{fig:big_matcomp} (we
did not run it on the MovieLens example).  As before, in the group
lasso setting, we considered two different stopping rules for
Frank-Wolfe, applied at each regularization parameter value $t$: the
first stops when the achieved criterion value is within 1\% of that
achieved by the proximal gradient descent approach in 
{\tt softImpute}, and the second stops when the achieved mean squared
error is within 1\% of either that of {\tt softImpute} or stagewise.
In either case, we cap the maximum number of iterations at 100, at
each parameter value $t$.

Under the first stopping rule, the Frank-Wolfe algorithm required an 
average of 5847 iterations to compute 100 solutions (averaged
over 10 draws of the input matrix $Y$); furthermore, this total
was calculated under the limit of 100 maximum iterations per solution,
and the algorithm met this limit at each one of the largest 50 
regularization parameter values $t$.  Recall that each one of these
Frank-Wolfe iterations is computationally equivalent to a stagewise
iteration. Accordingly, 500 steps of the stagewise algorithm, with 
$\epsilon=50$, ran in about an order of magnitude less time---93 
seconds versus 990 seconds.  The message is that the Frank-Wolfe
algorithm experiences serious difficulty in producing solutions at a
level of accuracy close to that of proximal gradient descent,
especially for lower levels of regularization. Using the
second stopping rule, Frank-Wolfe ran much faster, and
computed 100 solutions in about 997 iterations, or 154 seconds.
However, there are two important points to stress.
First, this rule is not generally
available in practice, as it depends on performance measured
with respect to the true matrix $B^*$.  Second, the termination
behavior under this rule is actually somewhat misleading, because once
the mean squared error curve begins to rise (in the left plot of
Figure \ref{fig:big_matcomp}, after about $t=7000$ in trace norm), the 
second rule will always cause Frank-Wolfe with warm starts to
trivially terminate in 1 iteration.  Indeed, in the simulated data
example, the Frank-Wolfe algorithm using this rule took about 22 
iterations per solution  before $t=7000$, and trivially 1 iteration
per solution after this point.   

\subsection{Image denoising}
\label{sec:bigimg}

{\bf Overview.} We study the image denoising
problem, cast as a generalized lasso problem
with Gaussian signal approximator loss, and 2d fused lasso or 2d total
variation regularization (meaning that the underlying graph is a 2d grid).
To compute exact solutions of this problem, we applied
a direct (noniterative) algorithm of \citet{fuseflow}, that
reduces this problem to sequence of maximum flow problems. The  
``parametric'' maximum flow approach taken by these authors is both
very elegant and highly specialized.  To the best of our knowledge,
their algorithm is one of the fastest existing algorithms for 2d fused
lasso problems (more generally, fused lasso problems over graphs).
For the simulations in this 
section we relied on a fast C++ implementation provided by the
authors (see
\url{http://www.cmap.polytechnique.fr/~antonin/software/}), which 
totals close to 1000 lines of code. The stagewise
algorithm is almost trivially simple in comparison, as our own C++
implementation requires only about 50 lines of code.  For the 2d
fused lasso regularizer, the stagewise update steps reduce to sparse
matrix multiplications; refer to Section \ref{sec:genlasso} for details. 

\bigskip
\noindent
{\bf Examples and comparisons.} We inspect two image denoising
examples. For the first, we constructed a $300 \times 200$ image
to have piecewise constant levels, and added independent $N(0,1)$
noise to the level of each pixel.  Both this true underlying image and
its noisy version are displayed in Figure \ref{fig:big_image}.  We
then ran the parametric max flow approach of \citet{fuseflow}, to
compute exact 2d fused lasso solutions, at 100 values of the
regularization parameter.  (This algorithm is direct and does not take
warm starts, so each instance was solved separately.) We also ran the
stagewise method in two modes: for 6000 steps with
$\epsilon=0.0005$, and for 500 steps with $\epsilon=0.005$.  The mean
squared error curves for each method are shown in the top left corner
of Figure \ref{fig:big_image}, and timings are given in the bottom
table. (All results here have been averaged over 10 draws of the noisy
image, and the timings were recorded on a desktop computer.)  We can
see that the stagewise estimates, both with $\epsilon=0.0005$ and
$\epsilon=0.005$, perform comparably to the exact solutions in terms of
mean squared error, though the estimates under the smaller step size
fare slightly better towards the more regularized end of the path.
The 6000 stagewise estimates using $\epsilon=0.0005$ took about 15
seconds to compute, and the 500 stagewise estimates using
$\epsilon=0.005$ took roughly 1.5 seconds.  The max flow approach
required an average of about 110 seconds to compute 100 solutions,
with the majority of computation time spent on solutions at higher
levels of regularization (which, here, correspond to lower mean
squared errors).  Finally, 
the estimate from each method that minimized mean squared error is
also plotted in Figure \ref{fig:big_image}; all look very similar and
do a visually reasonable job of 
recovering the underlying image.  That the stagewise approach can
deliver such high-quality denoised images with simple, cheap
iterations is both fortuituous and surprising.

\begin{figure}[htbp]
\vspace{-20pt}
\centering
{\renewcommand*{\arraystretch}{2.5}
\begin{tabular}{ccc}
& True image & Noisy image \\
\includegraphics[width=0.4\textwidth]{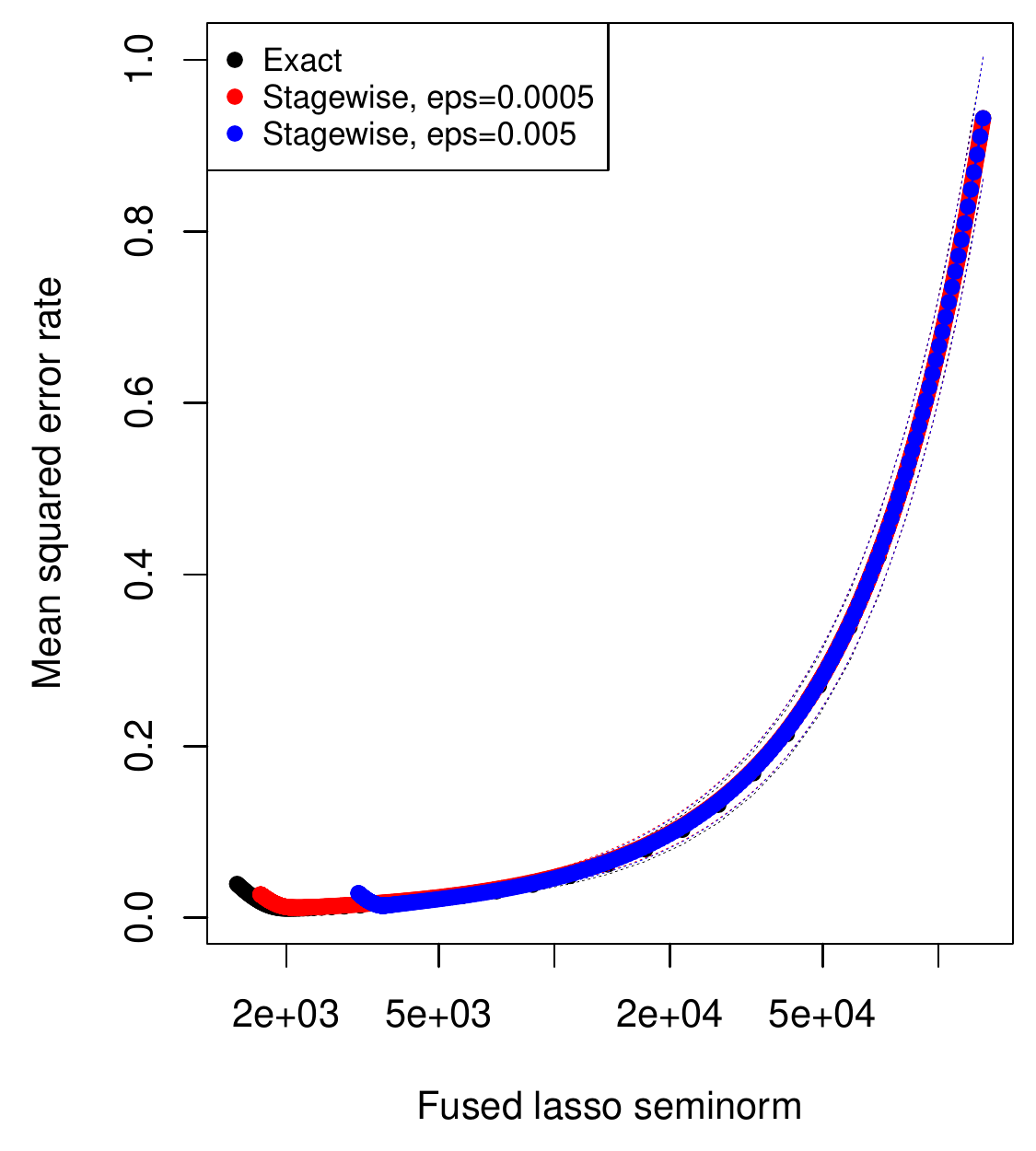} 
\hspace{5pt} &
\includegraphics[width=0.27\textwidth]{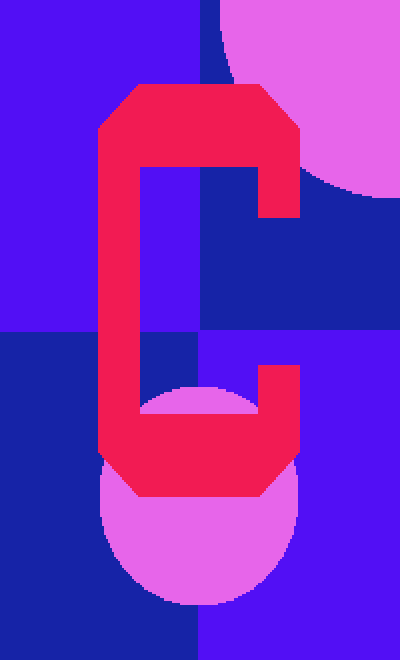} 
\hspace{5pt} &
\includegraphics[width=0.27\textwidth]{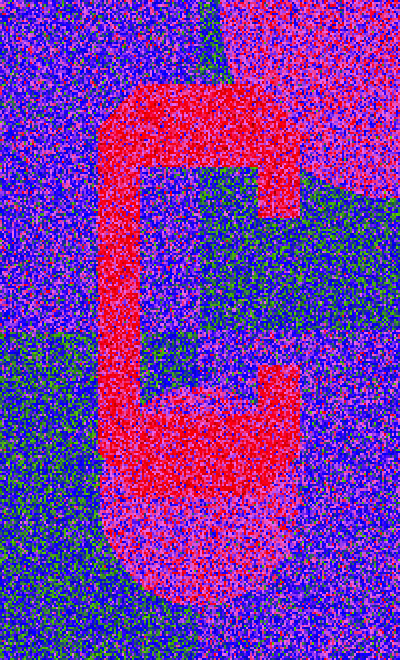} \\
\hfill
\parbox{0.27\textwidth}{\centering Exact, $t=2055.9$} & 
\parbox{0.27\textwidth}{\centering
Stagewise, $\epsilon=0.0005$, \\ 2323 steps} &
\parbox{0.27\textwidth}{\centering
Stagewise, $\epsilon=0.005$, \\ 211 steps} \\
\hfill
\includegraphics[height=0.45\textwidth]{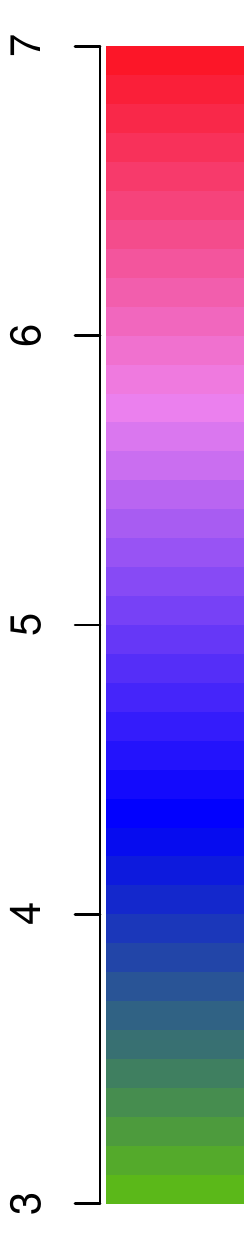}
\hspace{10pt}
\includegraphics[width=0.27\textwidth]{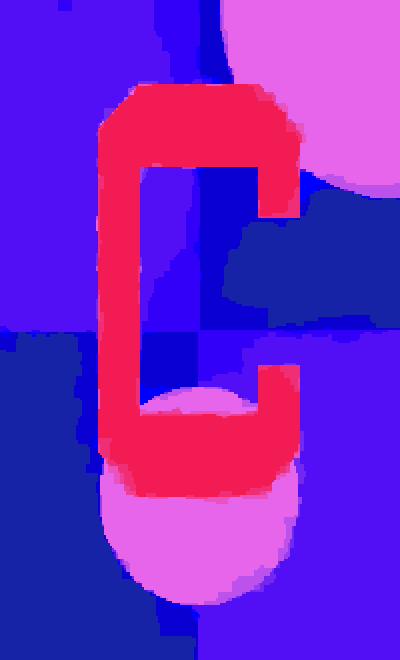} 
\hspace{5pt} &
\includegraphics[width=0.27\textwidth]{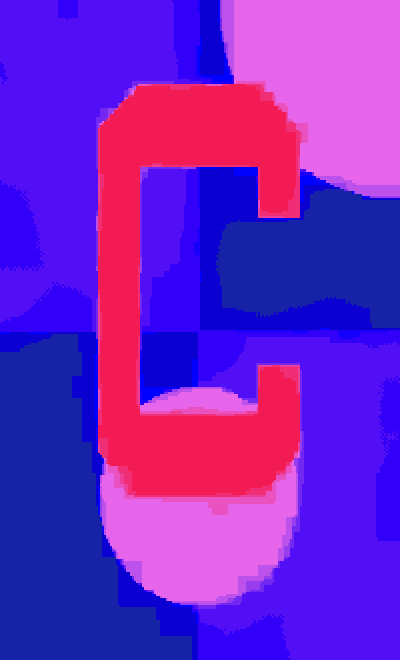} 
\hspace{5pt} &
\includegraphics[width=0.27\textwidth]{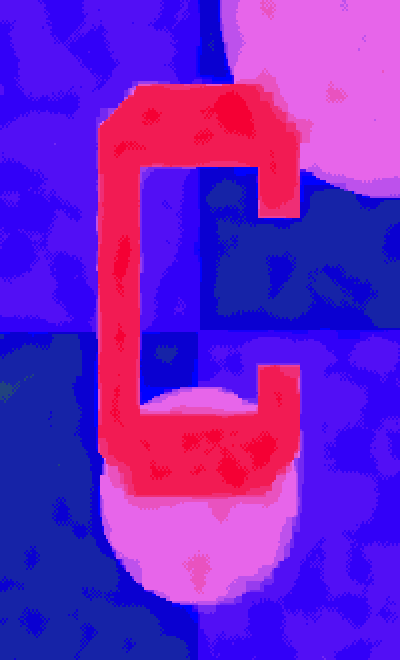} 
\end{tabular}}
\vspace{5pt} \\
{\small
\begin{tabular}{|c|r|}
\hline
\multicolumn{2}{|c|}{Algorithm timings} \\
\hline
\hline
Method & Runtime \\
\hline
Exact: maximum flow, 100 solutions & 109.04 (6.21) \\  
Stagewise: $\epsilon=0.0025$, 6000 estimates & 15.11 (0.18) \\
Stagewise: $\epsilon=0.25$, 500 estimates & 1.26 (0.02) \\
\hline
\end{tabular}}
\caption{\it Comparisons between exact 2d fused lasso solutions and
  stagewise estimates on a synthetic image denoising example. The true
  underlying $300 \times 200$ image is displayed in the middle of the
  top row.  (A color scale is applied for visualization purposes, see
  the left end of the bottom row.)  Over 10 noisy pertubations of this
  underlying 
  image, with one such example shown in the right plot of the top row,
  we compare averaged mean squared errors of the exact solutions and
  stagewise estimates, in the left plot of the top row.  Average
  timings for these methods are given in the bottom table.  (Standard
  deviations are denoted by dotted lines in the error plots, and are in 
  parentheses in the table.)  The stagewise estimates have competitive
  mean squared errors and are fast to compute.  The bottom row of plots
  shows the optimal image (i.e., that minimizing mean squared error)
  from each method, based on the single noisy image in the top right.} 
\label{fig:big_image}
\end{figure}

\begin{figure}[htbp]
\vspace{-20pt}
\centering
\begin{tabular}{cc}
\parbox{1.5in}{Original image:} &
\parbox{0.6\textwidth}{\includegraphics[width=0.6\textwidth]{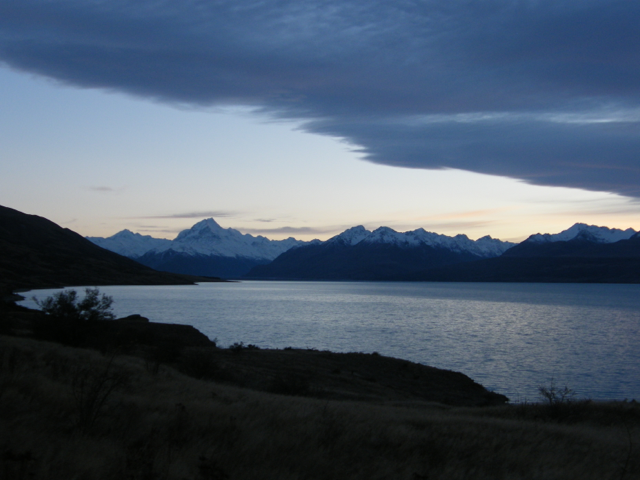}}
\\ &\\
\parbox{1.5in}{Noisy version:} &
\parbox{0.6\textwidth}{\includegraphics[width=0.6\textwidth]{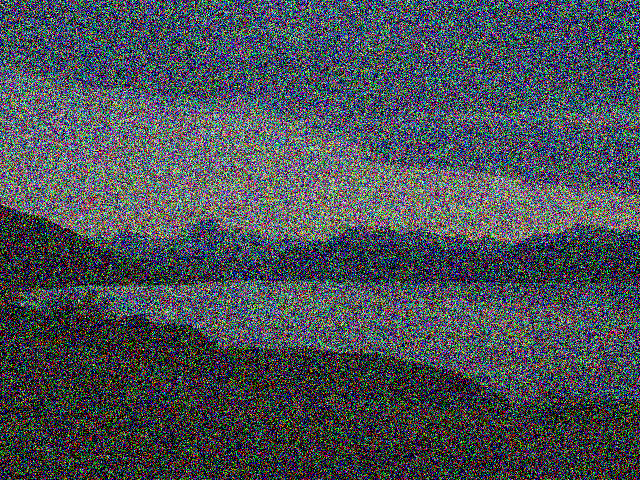}}
\\ & \\
\parbox{1.5in}{Stagewise, $\epsilon=0.001$, \\ 650 steps: 
\vspace{10pt} \\ (computed in \\ 21.34 seconds)} &
\parbox{0.6\textwidth}{\includegraphics[width=0.6\textwidth]{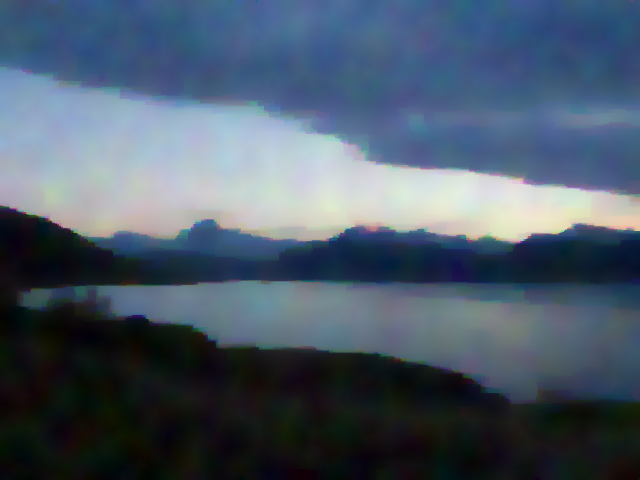}}
\end{tabular}
\vspace{5pt} \\
\caption{\it A more realistic image denoising example using 
  stagewise.  We began with a $640 \times 480$ photograph
  of Lake 
  Pukaki and Mount Cook, in New Zealand, shown at the top.  Working
  with each color channel separately, we added noise to form the
  middle noisy image, and ran the stagewise algorithm to eventually
  yield the bottom image, a nice reconstruction.}  
\label{fig:big_image_cook} 
\end{figure}

The second example considers the stagewise algorithm for a larger-scale 
image denoising task, based on a real $640 \times 480$ image, of Lake 
Pukaki in front of Mount Cook, New Zealand.  We worked with each 
color channel---red, green, 
blue---separately, and the pixel values were scaled to lie between 0 
and 1.  For each of these three images, we added independent
$N(0,0.5)$ noise to the pixel values, and ran the stagewise algorithm
with $\epsilon=0.005$ for 650 steps.  We chose this number of steps
because the achieved mean squared error (averaged over the three color 
channels) roughly began to rise after this point.  We then recombined
the three denoised images---on the red, green, blue color
channels---to form a single image.  See Figure
\ref{fig:big_image_cook}.  Visually, the reconstructed image is
remarkably close to the original one, especially considering
the input noisy image on which it is computed. The stagewise algorithm 
took a total of around 21 seconds to produce this result; recall,
though, that in this time it actually produced $650 \times 3 = 1950$
fused lasso estimates (650 steps in three different image denoising
tasks, one for each color).  

\subsection{Choice of step size}
\label{sec:stepsize}

We discuss a main practical issue when running the stagewise
algorithm: choice of the step size $\epsilon$.  Of course, when
$\epsilon$ is too small, the algorithm is less efficient, and when
$\epsilon$ is too large, the stagewise estimates can fail to span
the full regularization path (or a sizeable portion of it).  Our
heuristic suggestion therefore is to start with a large step size
$\epsilon$, and plot the progress of the achieved loss $f(x^{(k)})$
and regularizer $g(x^{(k)})$ function values across steps
$k=1,2,3,\ldots$ of the algorithm.  With a proper choice of
$\epsilon$, note that we should see $f(x^{(k)})$
monotone decreasing with $k$, as well as $g(x^{(k)})$ 
monotone increasing with $k$ (this is true of
\smash{$f(\hx(t))$} and \smash{$g(\hx(t))$} as we
increase the regularization 
parameter $t$, in the exact solution computation).  If $\epsilon$ is
too large, then it seems to be the tendency in practice that the
achieved values $f(x^{(k)})$ and $g(x^{(k)})$, $k=1,2,3,\ldots$ stop   
their monotone progress at some point, and alternate back and forth. 
Figure \ref{fig:toobig} illustrates this behavior.  Once encountered,
an appropriate response would be decrease $\epsilon$ (say, halve it), 
and continue the stagewise algorithm from the last step before this
alternating pattern surfaced.    

\begin{figure}[htb]
\centering
\includegraphics[width=0.475\textwidth]{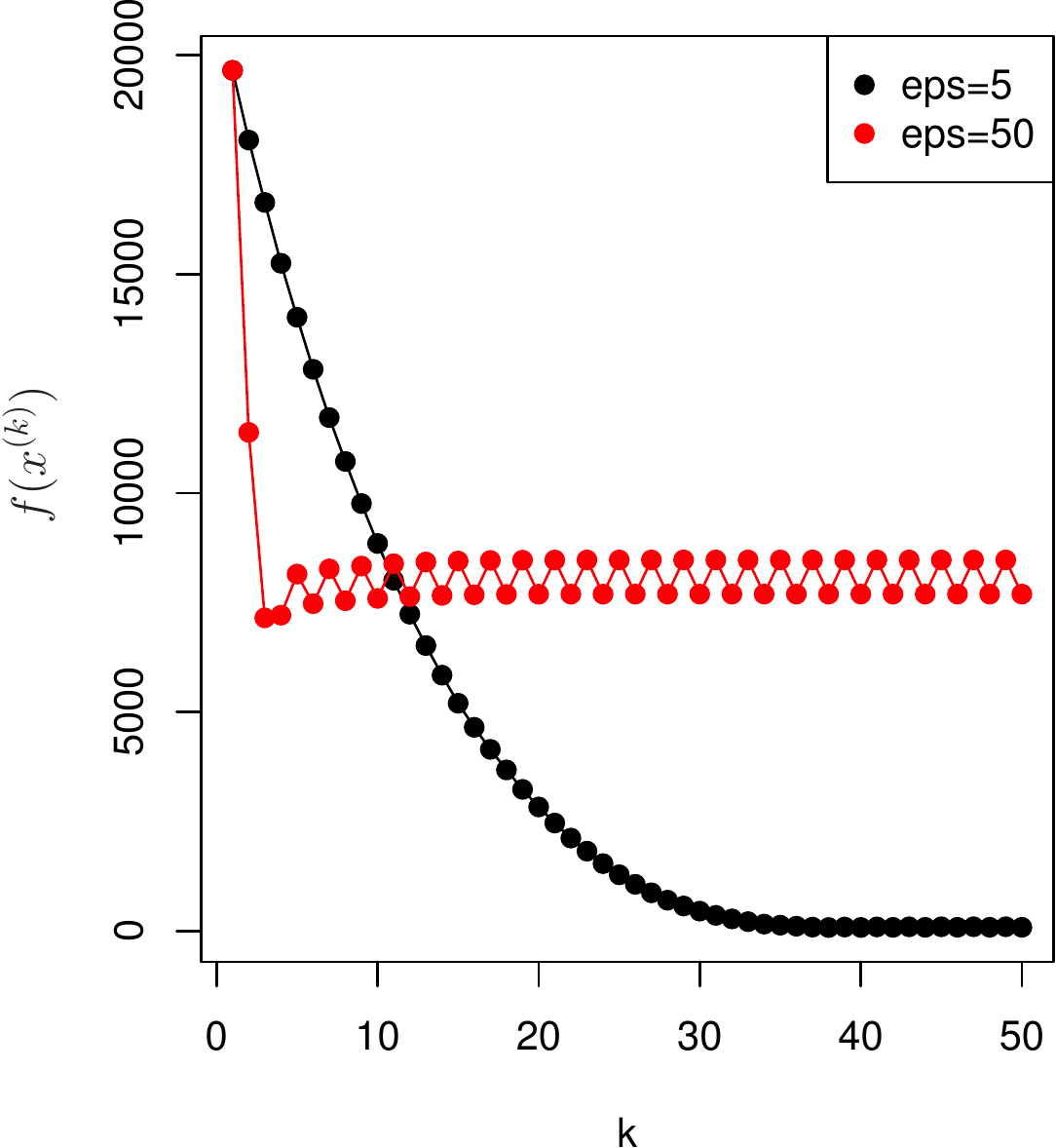}
\hspace{3pt}
\includegraphics[width=0.475\textwidth]{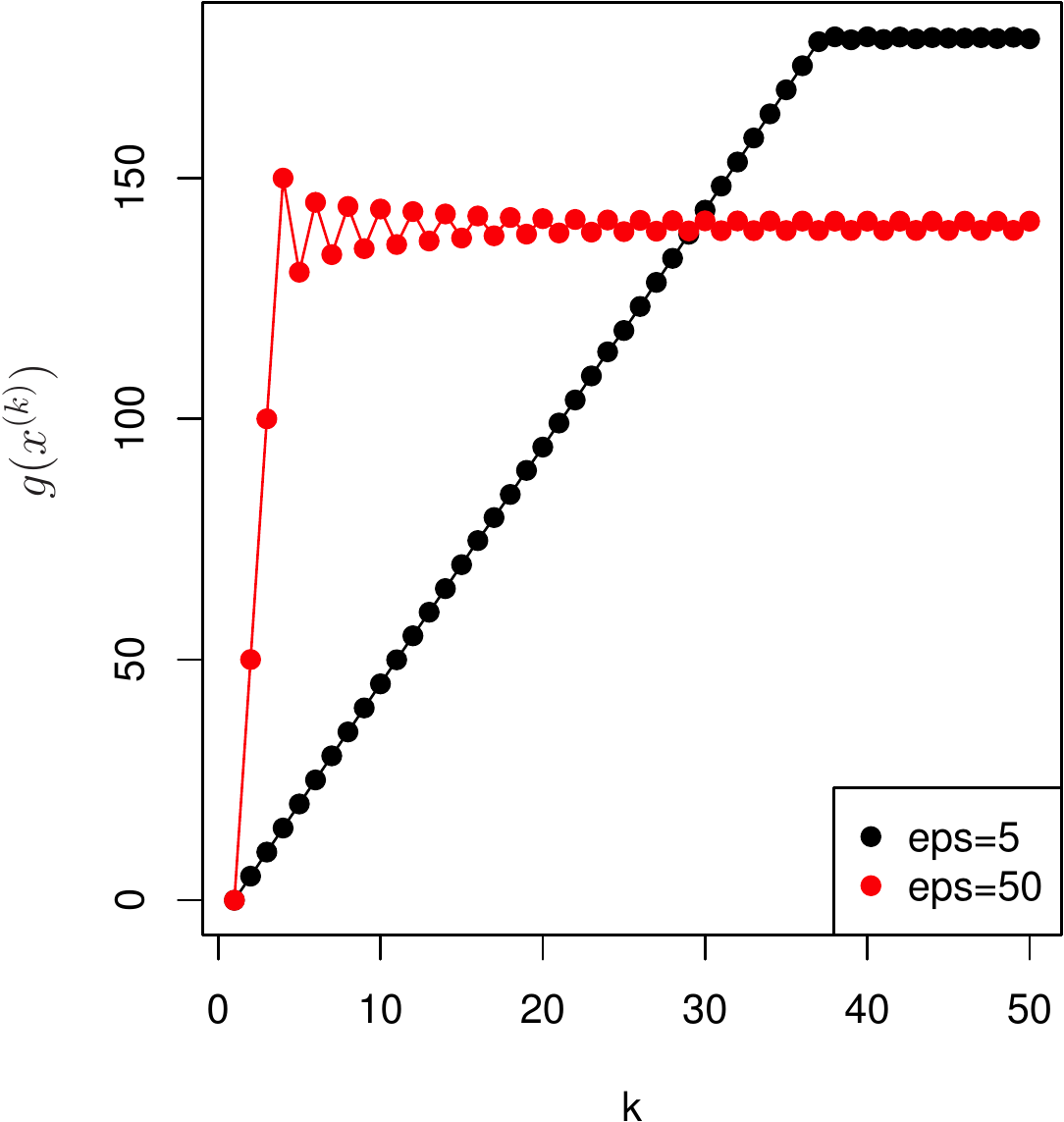}
\caption{\it An example displaying a common tendency of stagewise 
  estimates under a choice of step size $\epsilon$ that is too large.
  We used the group lasso regression data setup from Figure
\ref{fig:big_grouplasso} (uncorrelated case). Both the achieved loss
$f(x^{(k)})$ (left plot) and regularizer $g(x^{(k)})$ (right
plot) function values should be monotonic across steps
$k=1,2,3,\ldots$.  We see that for the larger step size $\epsilon=50$
(in red), progress halts and an alternating pattern begins, with both
sequences; for the smaller step size $\epsilon=5$ (in black), progress 
continues all the way until the end of the path.}   
\label{fig:toobig}
\end{figure}

The heuristic guideline above attempts to produce the largest step 
size $\epsilon$ that still produces an expansive regularization path
of stagewise estimates.  This ignores the subtlety that a larger choice
$\epsilon$ may offer suboptimal statistical performance,
even if the corresponding estimates span the full path.  This was seen
in some examples of Section \ref{sec:bigexamples} (e.g., matrix
completion, in Figure \ref{fig:big_matcomp}), but not in others (e.g.,
group lasso regression, in Figure \ref{fig:big_grouplasso}).  The
issue of tuning $\epsilon$ for optimal statistical performance is 
more complex and problem dependent. 
Although it is clearly important, we do not
study this task in the current
paper.  We mention the (somewhat obvious) point that strategies like  
cross-validation (if applicable, in the given problem setting) could
be helpful here.

\section{Suboptimality bounds for stagewise estimates} 
\label{sec:subopt}

\subsection{General stagewise suboptimality}
\label{sec:stagesub}

We present a suboptimality bound for estimates produced by the
general stagewise algorithm, restricting our attention to a norm 
regularizer $g$. 
The following result makes use of the dual norm $g^*$ of $g$ which, 
recall, is defined as $g^*(x)=\max_{g(z) \leq 1} x^T z$.  Its proof is
based on recursively tracking a duality gap for the general problem 
\eqref{eq:genprob}, and is deferred until Appendix \ref{app:stagethm}.  

\begin{theorem}
\label{thm:stagewise}
Consider the general problem \eqref{eq:genprob}, assuming that
$f$ is differentiable and convex, and $g$ is a norm.  Assume also
that $\nabla f$ is Lipschitz with respect to the pair $g^*,g$ with
constant $L$, i.e.,
\begin{equation*}
g^* \big( \nabla f(x) - \nabla f(y) \big) \leq L \cdot g(x-y), \;\;\; 
\text{all}\;\, x,y.
\end{equation*}
Fix a regularization parameter value $t$ of interest, and consider
running the general stagewise algorithm, Algorithm \ref{alg:genstage}, 
from $x^{(0)}=\hx(t_0)$, a solution in \eqref{eq:genprob} at  
a parameter value $t_0 \leq t$.  Suppose that we run the
algorithm for $k$ steps, with step size $\epsilon$, such that
$t_k = t_0 + k\epsilon = t$.  The resulting stagewise estimate 
$x^{(k)}$ satisfies
\begin{equation*}
f(x^{(k)}) - f(\hx(t)) \leq L(t^2-t_0^2) + L(t-t_0)\epsilon.
\end{equation*}
Therefore, if we consider the limiting stagewise estimate 
at the parameter value $t$, denoted 
by $\tilde{x}(t)$, as the step size $\epsilon \rightarrow 0$, then
such an estimate satisfies 
\begin{equation*}
f(\tx(t)) - f(\hx(t)) \leq L(t^2-t_0^2).
\end{equation*}
\end{theorem}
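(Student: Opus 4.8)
The plan is to prove the stronger (first) inequality of the theorem, $f(x^{(k)}) - f(\hx(t)) \le L(t^2 - t_0^2) + L(t - t_0)\epsilon$, and then deduce the final statement by letting $\epsilon \to 0$ along step sizes with $k\epsilon = t - t_0$ held fixed: by definition $x^{(k)} \to \tx(t)$, so by continuity of $f$ the left side tends to $f(\tx(t)) - f(\hx(t))$ while the right side tends to $L(t^2 - t_0^2)$. Thus all the real work is in the discrete bound.

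For the discrete bound I would track, along the iterates, a duality gap for problem \eqref{eq:genprob} at the ``running'' parameter $t_k = t_0 + k\epsilon$, namely
\[
\gamma_k = \langle \nabla f(x^{(k)}), x^{(k)} \rangle + t_k\, g^*\big(\nabla f(x^{(k)})\big).
\]
Two observations get things started. First, by convexity of $f$ together with the dual-norm inequality $\langle a,b\rangle \ge -g^*(a)g(b)$, one gets $f(x) \ge f(x^{(k)}) - \gamma_k$ for every $x$ with $g(x) \le t_k$; applying this at $x = \hx(t_k)$ shows $\gamma_k \ge f(x^{(k)}) - f(\hx(t_k))$, and for $k$ with $t_k = t$ this is exactly the quantity we want to bound. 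Second, at $k=0$ we have $x^{(0)} = \hx(t_0)$, and the KKT conditions for \eqref{eq:genprob} at $t_0$ (the norm constraint active, with multiplier $\lambda_0 = g^*(\nabla f(\hx(t_0)))$ and $\langle \nabla f(\hx(t_0)), \hx(t_0)\rangle = -\lambda_0 t_0$) give $\gamma_0 = 0$; the degenerate cases $t_0 = 0$ or $\nabla f(\hx(t_0)) = 0$ also give $\gamma_0 = 0$ immediately.

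The crux is a one-step recursion $\gamma_k \le \gamma_{k-1} + 2L\epsilon\, t_k$. To obtain it, expand $\langle \nabla f(x^{(k)}), x^{(k)}\rangle$ about $x^{(k-1)}$ using $x^{(k)} = x^{(k-1)} + \Delta$: the inner product $\langle \nabla f(x^{(k-1)}), \Delta\rangle$ equals $-\epsilon\, g^*(\nabla f(x^{(k-1)}))$ because $\Delta$ is the stagewise direction (cf.\ \eqref{eq:dualdir}); the leftover term $\langle \nabla f(x^{(k)}) - \nabla f(x^{(k-1)}), x^{(k)}\rangle$ is at most $L\, g(\Delta)\, g(x^{(k)}) \le L\epsilon t_k$ by the Lipschitz hypothesis, the dual-norm inequality, and $g(x^{(k)}) \le t_k$ (which holds since $g(x^{(0)}) \le t_0$ and $g$, being a norm, grows by at most $\epsilon$ per step); and $g^*(\nabla f(x^{(k)})) \le g^*(\nabla f(x^{(k-1)})) + L\epsilon$ by subadditivity of $g^*$ and the Lipschitz bound. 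Feeding these three estimates into the definition of $\gamma_k$ and using $t_k - \epsilon = t_{k-1}$, the $g^*(\nabla f(x^{(k-1)}))$ contributions reassemble into $\gamma_{k-1}$ and the residual error is $2L\epsilon\, t_k$. Summing from $k=1$ to $K$ with $\gamma_0 = 0$, and then substituting $t_k = t_0 + k\epsilon$ and $K\epsilon = t - t_0$ into the resulting arithmetic sum, collapses everything to $\gamma_K \le L(t^2-t_0^2) + L(t-t_0)\epsilon$; combined with $f(x^{(K)}) - f(\hx(t)) \le \gamma_K$, this is the discrete bound.

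I expect the main obstacle to be getting the recursion tight enough — in particular keeping the bound $g(x^{(k)}) \le t_k$ honest (it relies on $g$ being subadditive) and being careful not to discard the $-\epsilon\, g^*(\nabla f(x^{(k-1)}))$ term prematurely, since that is precisely what lets the $g^*$ pieces telescope into $\gamma_{k-1}$ rather than leaving an unbounded residue. A secondary, more routine point is the $\gamma_0 = 0$ computation, which needs the Lagrangian/KKT description of $\hx(t_0)$ and a quick check of the boundary cases where the constraint is slack.
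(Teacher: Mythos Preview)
Your proposal is correct and follows essentially the same route as the paper: the quantity $\gamma_k$ is exactly the paper's duality gap $h_{t_k}(x^{(k)})$, your one-step recursion $\gamma_k \le \gamma_{k-1} + 2L\epsilon t_k$ matches the paper's identity $h_{t_k}(x^{(k)}) = h_{t_{k-1}}(x^{(k-1)}) + \langle \nabla f(x^{(k)}) - \nabla f(x^{(k-1)}), x^{(k)}\rangle + t_k[g^*(\nabla f(x^{(k)})) - g^*(\nabla f(x^{(k-1)}))]$ after applying the same H\"older and Lipschitz bounds, and the final arithmetic is identical. Your justification of $\gamma_0 = 0$ via KKT is a bit more explicit than the paper's (which simply cites the fact that the gap vanishes at a solution), but otherwise the arguments coincide.
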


\noindent
{\it Remark 1.} In the theorem, the $k$th stagewise estimate $x^{(k)}$
is taken to be an approximate solution at the static regularization  
parameter value $t_k=t_0+k\epsilon$, not at the dynamic
value $t_k=g(x^{(k)})$, as we have been considering so far.  It is
easy to see that with the static choice $t_k=t_0+k\epsilon$, we have 
$g(x^{(k)}) \leq t_k$, so that $x^{(k)}$ is still feasible at the
parameter $t_k$. Furthermore, this choice simplifies the analysis, and 
would also simplify running the algorithm in practice (when
$g$ is expensive to compute, e.g., in the trace norm setting).   

\smallskip\smallskip
\noindent
{\it Remark 2.} The assumptions that $f$ is differentiable and that its
gradient $\nabla f$ is Lipschitz continuous are fairly standard
in the analysis of optimization algorithms; usually the Lipschitz
assumption is made with respect to a prespecified pair of primal and
dual norms, but here instead we utilize the pair naturally suggested
by the problem \eqref{eq:genprob}, namely, $g,g^*$.  For example, in
the least squares setting, \smash{$f(\beta)=\half\|y-X\beta\|_2^2$},
with an arbitrary norm $g$ as the regularizer, the Lipschitz constant
of $\nabla f$ is
\begin{equation*}
L = \max_{u \not=0} \; \frac{g^*(X^T X u)}{g(u)},
\end{equation*}
which we might write as $L=\|X^T X\|_{g,g^*}$ in the spirit of matrix
norms. 

\smallskip\smallskip
\noindent
{\it Remark 3.} The theorem can be extended to the case when $g$ is a
seminorm regularizer. As written, the Lipschitz constant $L$ would be 
infinite if $g$ has a nontrivial null space $N_g$ that overlaps with
$\nabla f$, as made precise in \eqref{eq:nullg}.  However, we could
$g^*$ redefine as    
\begin{equation*}
g^*(x) = \max_{z \in N_g^\perp, \, g(z) \leq 1} \, x^T z,
\end{equation*}
and one can then check that, under the same conditions, the proof of 
Theorem \ref{thm:stagewise} goes through just as before, but now
the bounds apply to the modified stagewise estimates in
\eqref{eq:stageupmod}, \eqref{eq:stagedirmod}.  

\subsection{Shrunken stagewise framework}
\label{sec:shrunkstage}

For reasons that will become apparent, we introduce a shrunken version
of the stagewise estimates. 

\begin{algorithm}[\textbf{Shrunken stagewise procedure}]
\label{alg:shrunkstage}
\hfill\par
\smallskip\smallskip
\noindent
Fix $\epsilon>0$, $\alpha \in (0,1)$, $t_0\in\R$. Set 
$x^{(0)}=\hx(t_0)$, a solution in \eqref{eq:genprob} at
$t=t_0$. Repeat, for $k=1,2,3,\ldots$, 
\begin{gather}
\label{eq:shrunkup}
x^{(k)} = \alpha x^{(k-1)} + \Delta, \\ 
\label{eq:shrunkdir}
\text{where}\;\,
\Delta \in \argmin_{z\in\R^n} \,\,
\langle \nabla f(x^{(k-1)}), z \rangle
\;\,\st\;\, g(z) \leq \epsilon.
\end{gather}
\end{algorithm}

The only difference between Algorithm \ref{alg:shrunkstage} and the
existing stagewise proposal in Algorithm \ref{alg:genstage} is that
the update step in \eqref{eq:shrunkup} shrinks the current iterate
$x^{(k-1)}$ by a constant amount $\alpha < 1$, before adding the
direction $\Delta$.  Note that in the case of unbounded
stagewise updates, we would replace \eqref{eq:shrunkdir} by the
subspace constrained version \eqref{eq:stagedirmod}, as explained 
in Section \ref{sec:basicprops}.  

Before we give examples or theory, we motivate the study of the
shrunken stagewise algorithm from a conceptual point of view.  It 
helps to think about lasso regression in particular, with
$f(\beta)=\half\|y-X\beta\|_2^2$ and $g(\beta)=\|\beta\|_1$. Recall
that in this case, the general stagewise procedure reduces to
classical 
forward stagewise regression, in Algorithm \ref{alg:fsr}.  A step $k$, 
forward stagewise updates the component $i$ of the estimate
\smash{$\beta^{(k-1)}$} such that the variable $X_i$ has the largest 
absolute inner product with 
the residual \smash{$y-X\beta^{(k-1)}$}; further, it moves
\smash{$\beta_i^{(k-1)}$} in a direction given by the sign of this
inner product.  It is intuitively clear why such a procedure 
generally yields monotone component paths: if $X_i$ has a   
large positive inner product with the residual, and we add a small 
amount $\epsilon$ to the $i$th coefficient, then in the next step,
$X_i$ will still have a large positive inner product with the
residual. This inner product will have been slightly decremented due
to the change in $i$th coefficient, but we will continue to increment
the $i$th coefficient by $\epsilon$ (decrement the $i$th inner
product) until another variable attains a comparable inner product
with the residual. In other words, the $i$th component path computed
by forward stagewise will increase monotonically, and eventually
flatten out.   

So how does nonmonoticity occur in stagewise paths?  Keeping with the
above thought experiment, in order for the $i$th coefficient path to
decrease at some point, the variable $X_i$ must achieve a 
{\it negative} inner product with the residual, and this must be largest in 
magnitude compared 
to the inner products from all other variables.  Given that $X_i$ had
a large positive inner product with the residual in previous 
iterations,
this seems highly unlikely, 
especially in a high-dimensional setting with many variables in total.   
But we know from many examples that the components of the exact lasso
solution path can exhibit many nonmonoticities, even very early on in
the regularization path, and even in high-dimensional
settings. To recover the exact path with a stagewise-like
algorithm, therefore, some change needs to be made to counteract the
momentum gathered over successive updates.  \citet{stagewiselasso} 
do just this, as discussed in the introduction, by adding an
explicit backward step to the stagewise routine in which coefficients
are driven towards zero as long as this decreases the loss by a
significant amount.     

An arguably simpler way to achieve a roughly similar effect is to
shrink all coefficients towards zero at each step.  This is
what is done by the shrunken stagewise method, in Algorithm
\ref{alg:shrunkstage}, via the parameter   
$\alpha<1$.  In shrunken stagewise for lasso regression, the
importance of each variable wanes over steps of the algorithm.
Thus, in the absence of attention from the stagewise update mechanism,
a coefficient path slides towards zero, 
instead of leveling off; for a coefficient path to depart from zero, or
even remain at a constant level, it
must regain the attention of the update mechanism by repeatedly  
achieving the maximal absolute inner product.  This
actually represents a fairly different philosophy from the pure
stagewise approach (with $\alpha=1$) and the two can be crudely
constrasted as follows:  
pure stagewise keeps coefficients at constant levels, unless there is 
good reason to move them away from zero; shrunken stagewise drives
coefficients to zero, unless there is good reason to keep them
on  their current trajectories. 

We give a small example of shrunken stagewise applied to lasso
regression, with $n=20$ observations and $p=10$ variables.
The rows of the predictor matrix 
$X \in \R^{20\times 10}$ were drawn independently from a Gaussian 
distribution with mean zero, and a covariance matrix having unit
diagonals and constant off-diagonals $\rho=0.8$.  The underlying
coefficient vector $\beta^* \in \R^{10}$ had dense support, with all
entries drawn from $N(0,1)$, and the observations $y$ were formed by
adding independent $N(0,1)$ noise to $X\beta^*$.  Figure
\ref{fig:shrunkstage123} shows the exact lasso solution path on the
left panel, the stagewise path in the middle panel, and the shrunkage 
stagewise path on the right.  We can see that, at various points,
components of the exact lasso path become nonmonotone, and as
expected, the corresponding the stagewise component paths ignore this
trend and level out.   The shrunken stagewise component paths
pick up on the nonmonoticities and actually mimick the exact ones
quite closely.  We note that the stagewise and shrunken stagewise
algorithms were not run here for efficiency, but were run at fine
resolution to reveal their limiting behaviors; both used a small step
size $\epsilon=0.0001$, and the latter used a shrinkage factor
$\alpha=\epsilon/10$.  The two required 100,000 and 500,000 steps,
respectively.

\begin{figure}[htb]
\centering
\includegraphics[width=0.325\textwidth]{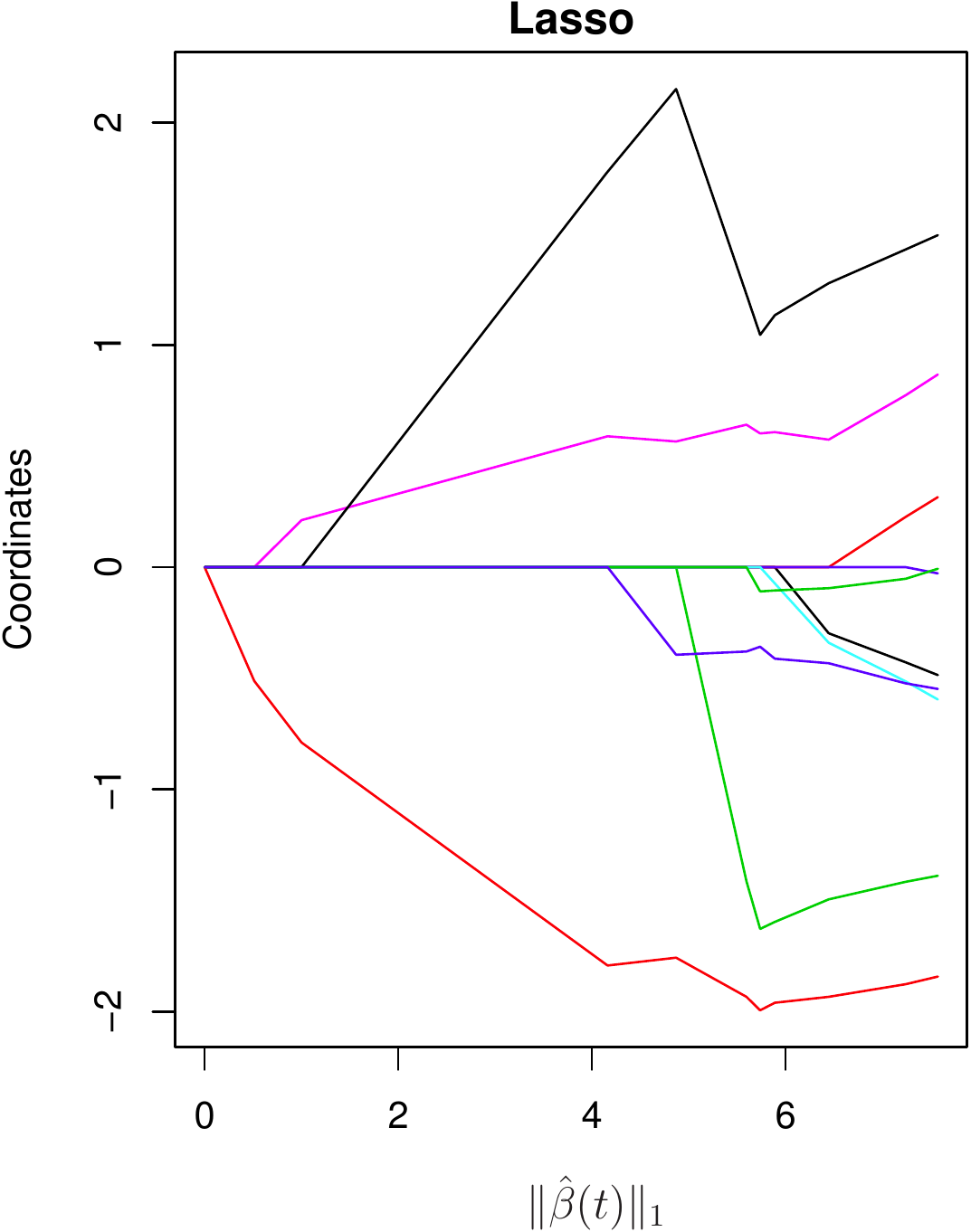}
\includegraphics[width=0.325\textwidth]{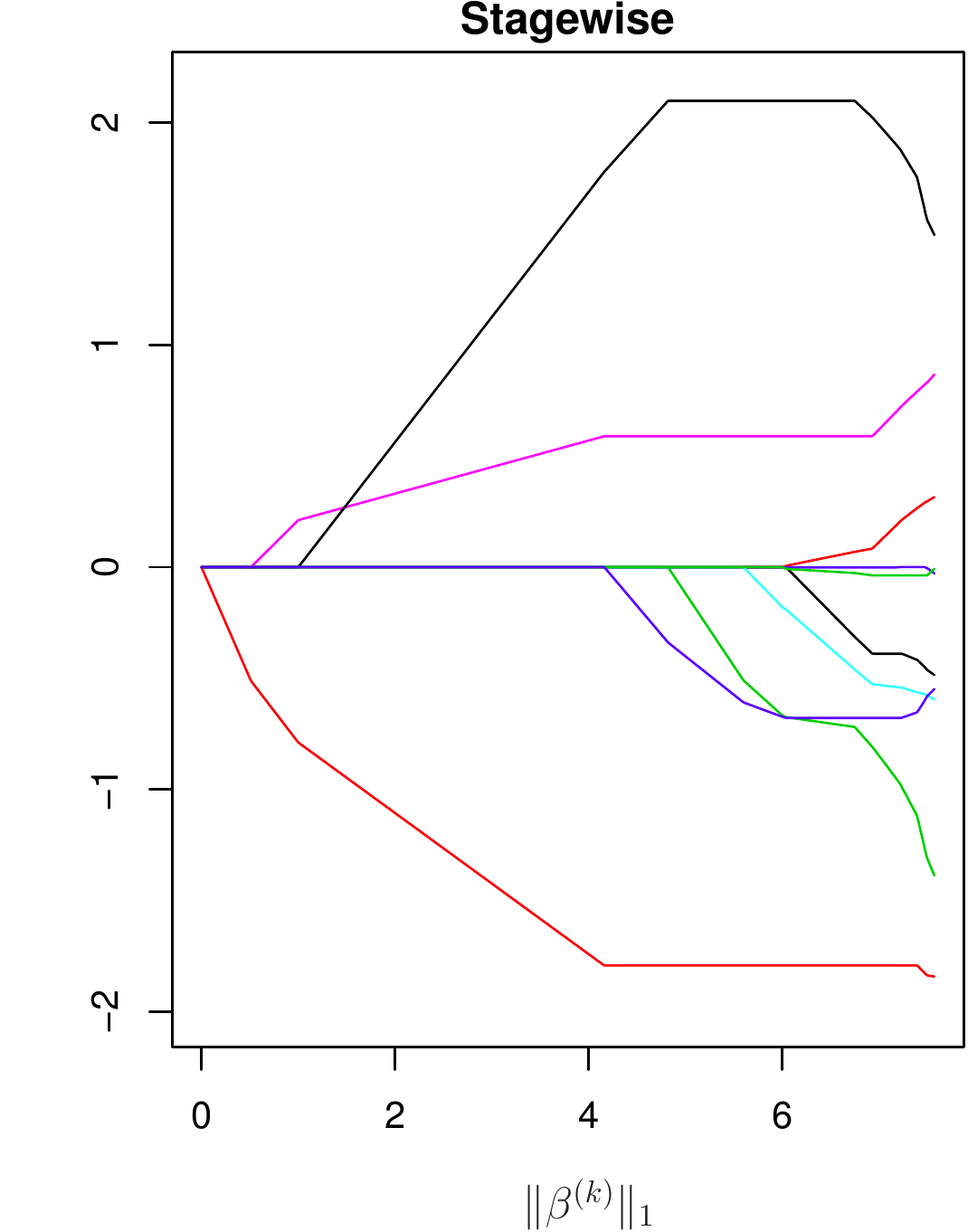}
\includegraphics[width=0.325\textwidth]{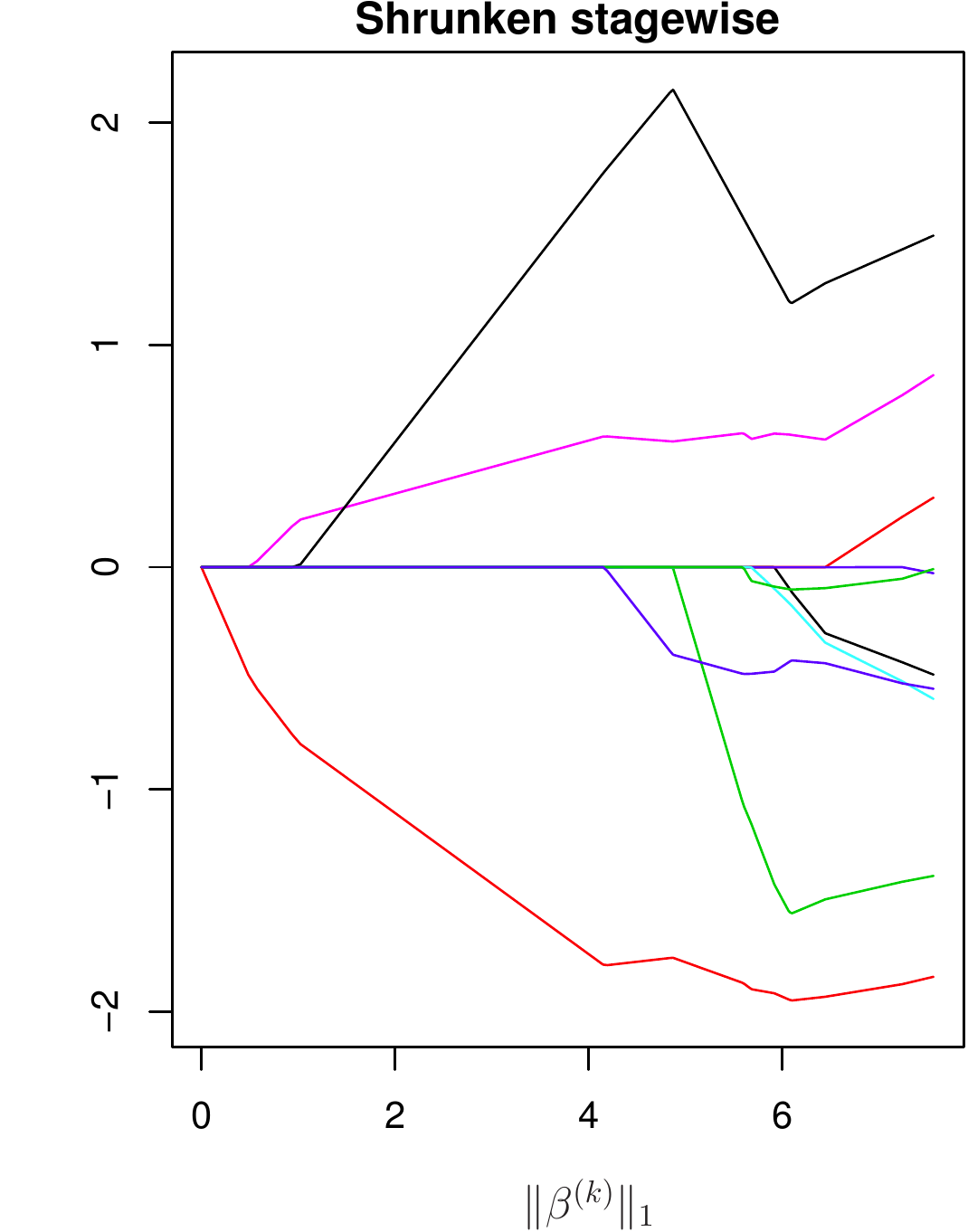}
\caption{\it Exact, stagewise, and shrunken stagewise paths for a
  small lasso regression problem with $n=20$ observations, and $p=10$ 
  correlated predictors.  When components of the lasso solution path
  become nonmonotone (e.g., top black path, and bottom red path), the 
  corresponding stagewise ones are more stable and remain at a 
  constant level, but shrunken stagewise matches the
  nonmonotonicities.} 
\label{fig:shrunkstage123}
\end{figure}

To be upfront, we remark that the shrunken stagewise method is
{\it not computationally efficient approach}, and we do not advocate
its use in practice.   
The stagewise algorithm in the above example could have been run, 
e.g., with $\epsilon=0.01$ and for 100 steps, and this would have
yielded a sequence of estimates with effectively the same pattern.
But to capture the nonmonotonicities present in the exact
solution path, larger step sizes do not suffice for shrunken
stagewise, and the algorithm needs to be run with $\epsilon=0.0001$
and for 500,000 steps---this is clearly not desirable for such a small
example with $n=20$ and $p=10$, and it does not bode well for
scalability. 
We will see in what follows that the shrunken stagewise estimates
provide a bridge between pure stagewise estimates and exact solutions 
in the general convex regularization problem \eqref{eq:genprob}.
Hence we view the shrunken stagewise estimates as interesting and 
worthwhile {\it because} they provides this connection. 

The main reason we choose to study the shrinkage strategy in
Algorithm \ref{alg:shrunkstage}, as opposed to, say, backward steps,
is that the shrinkage approach applies outside of the lasso
regularization setting; as far as we can tell, there is 
no natural analog of backwards steps beyond the sparse setting.  In
fact, in the general problem setup, the shrinkage factor $\alpha$ in
Algorithm \ref{alg:shrunkstage} somewhat roughly mirrors what is done
by Frank-Wolfe (this is really a different strategy, but still, it is one
that computes exact solutions; compare equations \eqref{eq:shrunkdir}
and \eqref{eq:fwdir} from Appendix \ref{app:frankwolfe}).  A general
interpretation of the shrinkage operation in \eqref{eq:shrunkdir} is
that it lessens the dependence of the stagewise estimates on the
computed history, i.e., decreases the stability of the computed
stagewise component paths, and implicitly allows for more weight to be
placed on the local update directions.  Empirical examples with, e.g., group
lasso regression or matrix completion confirm that shrunken
stagewise estimates can be tuned to track the exact solution path even
when the pure stagewise path deviates from it.  We do not examine
these cases here but instead turn to theoretical development.

\subsection{Shrunken stagewise suboptimality}

As in Section \ref{sec:stagesub}, we assume that $g$ is a norm, and  
write $g^*$ for its dual norm.  We also consider the $k$th
shrunken stagewise estimate $x^{(k)}$ as an approximate solution in
the general problem \eqref{eq:genprob} at a static value of the
regularization parameter, defined recursively as $t_k=\alpha t_{k-1} +
\epsilon$.  A straightforward inductive argument shows that
$g(x^{(k)}) \leq t_k$, i.e., the estimate $x^{(k)}$ is feasible for
the problem \eqref{eq:genprob} at $t=t_k$.  
Under this setup, the same limiting suboptimality bound as in Theorem
\ref{thm:stagewise} can be established for the shrunken stagewise
estimates.  For the sake of space, we do not present this result.
Instead we show that, under additional conditions, the shrunken
stagewise estimates overcome the stability inherent to
stagewise, and achieve the idealized behavior suggested by Figure
\ref{fig:shrunkstage123}, i.e., they converge to exact solutions along
the path.  See Appendix \ref{app:shrunkthm} for the proof.   

\begin{theorem}
\label{thm:shrunkstage}
Consider the general problem \eqref{eq:genprob}. 
Assume, as in Theorem \ref{thm:stagewise}, that the loss function $f$  
is differentiable and convex, the regularizer $g$ is a norm, and
$\nabla f$ is Lipschitz with respect to $g^*,g$, having Lipschitz
constant $L$. Fix a parameter value $t$, and consider running
the shrunken stagewise algorithm, Algorithm \ref{alg:shrunkstage},
from $x^{(0)}=\hx(t_0)$, a solution in \eqref{eq:genprob} at a
parameter value $t_0 \leq t$.  Consider the limiting estimate $\tx(t)$
at the parameter value $t$, as both $\epsilon \rightarrow 0$ and
$\alpha \rightarrow 1$.  Suppose that 
\begin{equation*}
\frac{1-\alpha}{\epsilon} \rightarrow 0 
\;\;\; \text{and} \;\;\;
\frac{1-\alpha}{\epsilon^2} \rightarrow \infty.
\end{equation*}
Let $k=k(\epsilon,\alpha)$ denote the number of steps taken by the  
shrunken stagewise algorithm to reach the parameter value
$t_k=t$; note that $k \rightarrow \infty$ as 
$\epsilon \rightarrow 0$, $\alpha \rightarrow 1$.  Define the  
effective Lagrange parameters 
\smash{$\lambda_i = g^*(\nabla f(x^{(i)}))$}, 
$i=1,\ldots k$, and assume that these parameters exhibit a weak 
type of decay: 
\begin{equation}
\label{eq:decay}
\begin{aligned}
\lambda_i/t_i &\geq CL, \;\;\; i=1,\ldots r-1, \\
\lambda_r/t_r &\leq \frac{(C+1) \theta^2 - 2}{2}L,
\end{aligned}
\end{equation}
for some $r < k$, with $r/k \rightarrow \theta \in (0,1)$, and some
constant $C$. Then the limiting shrunken stagewise 
estimate $\tx(t)$ at the parameter value $t$, as 
$\epsilon \rightarrow 0$ and $\alpha \rightarrow 1$, satisfies 
\begin{equation*}
f(\tx(t)) = f(\hx(t)),  
\end{equation*}
i.e., $\tx(t)$ is a solution in \eqref{eq:genprob} at the parameter
value $t$. 
\end{theorem}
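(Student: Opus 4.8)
The plan is to follow the strategy behind Theorem~\ref{thm:stagewise}, namely to track a duality gap for \eqref{eq:genprob} along the shrunken stagewise iterates, but now to use the decay condition \eqref{eq:decay} to show that this gap collapses to $0$ in the limit. For any feasible $x$ (that is, $g(x)\le t$), convexity of $f$ together with the generalized Cauchy--Schwarz inequality $\langle v,z\rangle\ge-g^*(v)g(z)$ gives
\begin{equation*}
0 \;\le\; f(x)-f(\hx(t)) \;\le\; \langle\nabla f(x),\,x-\hx(t)\rangle \;\le\; \langle\nabla f(x),x\rangle + t\,g^*(\nabla f(x)).
\end{equation*}
Writing $\lambda_k=g^*(\nabla f(x^{(k)}))$ and $\gamma_k=\langle\nabla f(x^{(k)}),x^{(k)}\rangle + t_k\lambda_k$ (the gap evaluated at the running budget $t_k$, which satisfies $g(x^{(k)})\le t_k$), one checks that $\gamma_0=0$ because $x^{(0)}=\hx(t_0)$ is exactly optimal at $t_0$, that $\gamma_k\ge0$, and that $\gamma_k\ge f(x^{(k)})-f(\hx(t_k))$. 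Since $t_k\to t$, it suffices to prove $\gamma_k\to0$ as $\epsilon\to0$, $\alpha\to1$ under the stated rate conditions and \eqref{eq:decay}; by continuity of $f$ this yields $f(\tx(t))=f(\hx(t))$.

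First I would derive a one-step recursion for $\gamma_k$. The dual form \eqref{eq:dualdir} of the update gives $\langle\nabla f(x^{(k-1)}),\Delta\rangle=-\epsilon\lambda_{k-1}$ and $g(\Delta)\le\epsilon$; hence $g(x^{(k)}-x^{(k-1)})\le(1-\alpha)g(x^{(k-1)})+\epsilon\le(1-\alpha)t_{k-1}+\epsilon=:\rho_k$, and the Lipschitz hypothesis gives $g^*(\nabla f(x^{(k)})-\nabla f(x^{(k-1)}))\le L\rho_k$, so $\lambda_k\le\lambda_{k-1}+L\rho_k$. Expanding $\langle\nabla f(x^{(k)}),x^{(k)}\rangle$ with $x^{(k)}=\alpha x^{(k-1)}+\Delta$, bounding the two cross terms carrying $\nabla f(x^{(k)})-\nabla f(x^{(k-1)})$ by the $g^*,g$ pair, and using $t_k=\alpha t_{k-1}+\epsilon$, the $\langle\nabla f(x^{(k-1)}),x^{(k-1)}\rangle$ and $t_{k-1}\lambda_{k-1}$ contributions reassemble into $\alpha\gamma_{k-1}$ and one is left with
\begin{equation*}
\gamma_k \;\le\; \alpha\,\gamma_{k-1} \;+\; t_k(\lambda_k-\lambda_{k-1}) \;+\; L\,t_k\,\rho_k .
\end{equation*}
Here $t_k(\lambda_k-\lambda_{k-1})$ is a progress term (the effective multipliers $\lambda_k$ should decrease along a good path) and $Lt_k\rho_k=O(Lt\epsilon)$ is the first-order Taylor error.

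The decisive step is to unroll this recursion and cancel the accumulated error using \eqref{eq:decay}. Because $(1-\alpha)/\epsilon\to0$ forces $k(1-\alpha)\to0$, we have $\alpha^{k-j}\to1$ uniformly in $j$, so unrolling from $\gamma_0=0$ reduces essentially to $\gamma_k\lesssim\sum_{j=1}^k\big[t_j(\lambda_j-\lambda_{j-1})+Lt_j\rho_j\big]$. Abel summation turns $\sum_j t_j(\lambda_j-\lambda_{j-1})$ into $t_k\lambda_k - t_1\lambda_0 - \sum_j(t_{j+1}-t_j)\lambda_j$ with $t_{j+1}-t_j\approx\epsilon$, and the error sum $L\sum_j t_j\rho_j\approx\epsilon L\sum_j t_j$ combines with it, so that up to order-one boundary terms $\gamma_k$ is governed by $\epsilon\sum_j(Lt_j-\lambda_j)$. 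Now the two halves of \eqref{eq:decay} act: for $j\le r-1$ the bound $\lambda_j\ge CLt_j$ makes $Lt_j-\lambda_j\le-(C-1)Lt_j<0$, a definite surplus of progress; for $j\ge r$ the bound $\lambda_r/t_r\le\frac{(C+1)\theta^2-2}{2}L$, together with $\lambda_j\ge\lambda_r-O(L(k-r)\epsilon)$ and $r/k\to\theta$, caps how much error can re-accumulate over the remaining $(1-\theta)$-fraction of steps. The threshold $\frac{(C+1)\theta^2-2}{2}L$ is precisely the value at which the phase-1 surplus (proportional to $\theta$ and $C$) balances the worst-case phase-2 deficit (proportional to $1-\theta$), which is why it is quadratic in $\theta$. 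Letting $\epsilon\to0$ and $\alpha\to1$ with $(1-\alpha)/\epsilon\to0$ and $(1-\alpha)/\epsilon^2\to\infty$ then kills all remainders and gives $\gamma_k\to0$.

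The main obstacle is exactly this last balancing act. Unlike in standard steepest-descent or Frank--Wolfe analyses, the gap is \emph{not} damped by the multiplicative factor (here $\alpha^k\to1$), so convergence is driven entirely by the additive term $t_k(\lambda_k-\lambda_{k-1})$; showing this outpaces the first-order error --- which, bounded crudely, gives only an $O(Lt^2)$ suboptimality matching Theorem~\ref{thm:stagewise} --- requires the full two-sided strength of \eqref{eq:decay} and careful bookkeeping of the Abel-summation boundary terms. The two rate conditions enter for complementary reasons: $(1-\alpha)/\epsilon\to0$ keeps the cumulative shrinkage small (so the limiting path is not over-contracted) while still leaving the progress term with the right sign, and $(1-\alpha)/\epsilon^2\to\infty$ is what makes the shrinkage-induced drift dominate the $O(\epsilon)$-per-step Taylor error summed over the $\sim t/\epsilon$ iterations. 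I would relegate the explicit constant-chasing in the two-phase estimate, and the verification that \eqref{eq:decay} makes the two phases match, to the appendix.
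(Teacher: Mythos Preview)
Your overall strategy—track the duality gap $\gamma_k=\langle\nabla f(x^{(k)}),x^{(k)}\rangle+t_k\lambda_k$, derive the recursion $\gamma_k\le\alpha\gamma_{k-1}+t_k(\lambda_k-\lambda_{k-1})+Lt_k\rho_k$, unroll, and use \eqref{eq:decay}—matches the paper's, and your recursion is correct. The gap is in how you deploy the Abel summation and the decay condition.

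You Abel-sum the \emph{entire} progress term $\sum_{j=1}^k\alpha^{k-j}t_j(\lambda_j-\lambda_{j-1})$, which produces the boundary term $t_k\lambda_k$. But \eqref{eq:decay} gives you an upper bound only on $\lambda_r$, not on $\lambda_k$. You try to fix this in phase~2 by writing $\lambda_j\ge\lambda_r-O(L(j-r)\epsilon)$ and then invoking the upper bound on $\lambda_r$; that goes the wrong direction. To bound $\epsilon\sum_{j\ge r}(Lt_j-\lambda_j)$ from above you need a \emph{lower} bound on $\lambda_j$, hence a lower bound on $\lambda_r$, and \eqref{eq:decay} supplies none. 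If instead you use the trivial bound $\lambda_j\ge0$ in phase~2 and control $t_k\lambda_k$ via $\lambda_k\le\lambda_r+L(1-\theta)t$, the limiting bound becomes $\frac{Lt^2}{2}\big[(C+1)\theta^3-C\theta^2-4\theta+3\big]$, which at the minimal admissible $C=2/\theta^2-1$ equals $\frac{Lt^2}{2}(1-\theta)^2>0$. So the constants do not close, and the threshold $\frac{(C+1)\theta^2-2}{2}$ in \eqref{eq:decay} is not the one your argument would need.

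The paper avoids this by splitting the progress sum at $r$: the tail $\sum_{i>r}\alpha^{k-i}t_i(\lambda_i-\lambda_{i-1})$ is \emph{not} Abel-summed but bounded crudely via $|\lambda_i-\lambda_{i-1}|\le L\rho_i$ (exactly as you bound the error term), contributing $(1-\theta^2)Lt^2/2$ in the limit; only the head $\sum_{i\le r}$ is Abel-summed, yielding boundary term $\alpha^{k-r}t_r\lambda_r$—now the upper bound on $\lambda_r$ applies with the correct sign—and the inner sum $-\epsilon\sum_{i<r}\alpha^{k-1-i}\lambda_i$, where the lower bound $\lambda_i\ge CLt_i$ applies. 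These three pieces limit to $Lt^2/2$, $(1-\theta^2)Lt^2/2$, and $(\theta^2-2)Lt^2/2$, summing to exactly zero; the form of the threshold in \eqref{eq:decay} is reverse-engineered from this decomposition. Finally, the condition $(1-\alpha)/\epsilon^2\to\infty$ is not a generic ``kills remainders'' device: it enters at one specific point in the head calculation, to annihilate a term of order $M^2(1-\alpha)=\epsilon^2/((1-\alpha)t^2)$ that survives the limit computations (carried out in the paper via two auxiliary lemmas on the asymptotics of $\epsilon k$ and $M(1-(1-1/M)^\varphi)$).
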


\noindent
{\it Remark 1.} The result above can be extended to the case when $g$
is a seminorm.  We simply need to redefine $g^*$ and the updates in
order to accomodate the possibly nontrivial null space $N_g$ of $g$,
as discussed in the third remark following Theorem
\ref{thm:stagewise}.

\smallskip\smallskip
\noindent
{\it Remark 2.} The assumption in \eqref{eq:decay} of Theorem
\ref{thm:shrunkstage} stands out as technical assumption that is hard
to interpret.  This condition is used in the proof to control 
a term in the duality gap expansion that involves
differences of \smash{$g^*(\nabla f(x^{(i)}))$} across successive
iterations $i,i+1$.  The theorem refers to such a quantity, 
\smash{$\lambda_i = g^*(\nabla f(x^{(i)}))$}, as the ``effective
Lagrange parameter'' at $x^{(i)}$. To explain this, consider the
stationarity condition for  
the problem \eqref{eq:genprob}, 
\begin{equation*}
\nabla f(x) + \lambda v = 0,
\end{equation*}
where $v \in \partial g(x) = \argmax_{g^*(z) \leq 1} \,
x^T z$.  This implies that $\nabla f(x) = -\lambda v$, or
$g^*(\nabla f(x)) = \lambda g^*(v) = \lambda$, which gives an
expression for the Lagrange parameter associated with a solution  
of the constrained problem \eqref{eq:genprob}.  As $x^{(i)}$
is not a solution, but an approximate one, we call
\smash{$\lambda_i=g^*(\nabla f(x^{(i)}))$} its effective Lagrange
parameter. 

The condition \eqref{eq:decay} says that until some number of steps
$r$ along the path, the ratio of effective Lagrange parameters
$\lambda_i$ to bound parameters $t_i$ must not be too small, and then
at step $r$ it must not be too large. This is a formulation of a type
of weak decay of $\lambda_i/t_i$, $i=1,2,3,\ldots$.  It is not
intuitively clear to us when (i.e., in what 
kinds of problems) we should expect this condition to be satisfied.  
We can, however, inspect it emprically.  For the example lasso problem 
in Figure \ref{fig:shrunkstage123} (where, recall, the
shrunken stagewise path appears to approach the exact solution path), 
we plot the ratio $\lambda_i/t_i$, $i=1,2,3,\ldots$ in Figure
\ref{fig:shrunkstage4}.  This ratio displays a sharp decay across
steps of the algorithm, and so, at least empirically, the assumption
\eqref{eq:decay} seems reasonable.  We suspect that in general, the 
two hard bounds in \eqref{eq:decay} can be replaced by a more natural
decay condition, and furthermore, there are characterizable problem
classes with sharp decays of the Lagrange to bound parameter ratios.
These are topics for future work.

\begin{figure}[htb]
\centering
\includegraphics[width=0.475\textwidth]{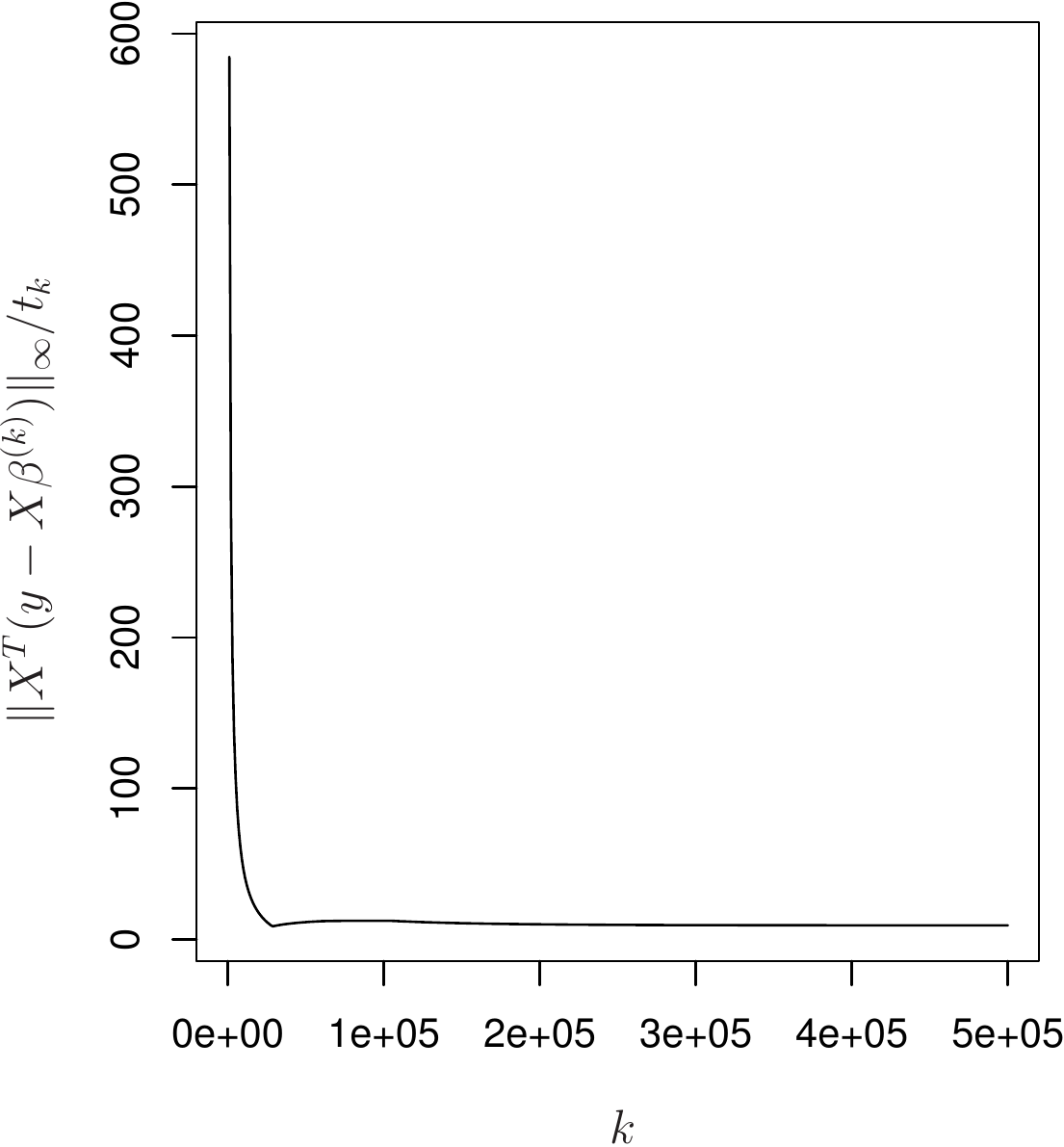}
\caption{\it A plot of
  \smash{$\lambda_k/t_k=\|X^T(y-X\beta^{(k)})\|_\infty/t_k$} across
  steps $k$ of the shrunken stagewise algorithm, for the lasso data
  set of Figure \ref{fig:shrunkstage123}.  This decay 
  roughly verifies the condition \eqref{eq:decay}
  of Theorem~\ref{thm:shrunkstage}, needed to ensure the
  convergence of shrunken stagewise estimates to exact solutions.} 
\label{fig:shrunkstage4}
\end{figure}




\section{Discussion}
\label{sec:discussion}

We presented a framework for computing incremental stagewise
paths in a general regularized estimation setting, defined by
minimizing a differentiable convex loss function subject to a convex
constraint.  The stagewise estimates are explicitly and efficiently
computable for a wide variety of problems, and they
provide an approximate solution path for the underlying convex
problem of interest, but exhibit generally more stability as the
regularization parameter changes.  In some situations this
approximation (i.e., the discrepancy between stagewise estimates and
solutions) appears empirically to be quite tight, and in others it
does not.  All in all, however, we have found that the
stagewise estimates essentially always offer competitive statistical
performance (as measured, e.g., by test error) with that of exact
solutions. This suggests that they should be a point of study, even
apart from their ability to approximate solution paths of convex
problems, and a rigorous (theoretical) characterization of the 
statistical properties of stagewise estimates is an important
direction to pursue in the future.  There are many other potential
topics for future work, as alluded to throughout the paper.  It is our
hope that other researchers will take an interest too, and that this
paper marks the beginning of a deeper understanding of stagewise 
capabilities. 



\subsubsection*{Acknowledgements}

This work was motivated by an attempt to explain the intuitive
connection between forward stagewise regression and the lasso, in
preparing lectures for a graduate class on optimization at Carnegie
Mellon University. We thank co-teacher Geoff
Gordon and the students of this class for early inspiring
conversations.  We also thank Rob Tibshirani, Jerry
Friedman, Jonathan Taylor, Jacob Bien, and Lester 
Mackey for their helpful feedback.  We are
grateful to Jacob Bien for his understanding and patience
throughout our (unusually slow) writing process, and to Lester
Mackey for enlightening discussion on the Frank-Wolfe connection.
Lastly, we would like to thank the editors and referees who reviewed
this paper, as they provided extremely helpful and constructive
reports. 

\newpage
\appendix
\allowdisplaybreaks

\section{Appendix}

\subsection{Comparison to Frank-Wolfe}
\label{app:frankwolfe}

We compare our general stagewise procedure
to the Frank-Wolfe algorithm for the general convex minimization
problem \eqref{eq:genprob}.
At any fixed value of $t$, the Frank-Wolfe algorithm begins with
$\tx^{(0)} \in \R^n$ such that $g(\tx^{(0)}) \leq t$, and repeats the
following steps \citep{frankwolfe,jaggi}:   
\begin{gather}
\label{eq:fwup}
\tx^{(k)} = (1-\gamma) \tx^{(k-1)} + \gamma \tilde{\Delta}, \\
\text{where}\;\,
\label{eq:fwdir}
\tilde{\Delta} \in \argmin_{z \in \R^n} \,
 \langle \nabla f(\tx^{(k-1)}), z \rangle
\;\,\st\;\, g(z) \leq t, \\
\label{eq:fwshrink}
\text{and}\;\, \gamma = 2/(k+1),
\end{gather}
for $k=1,2,3,\ldots$.
The Frank-Wolfe steps can be seen as iteratively minimizing local
linear approximations of the loss function $f$ over the constraint
set $\{x : g(x) \leq t\}$, as is done in \eqref{eq:fwdir}.  The actual
updates performed in \eqref{eq:fwup} take successively smaller and 
smaller steps in the direction of these local minimizers.
Under fairly weak conditions, the Frank-Wolfe iterates satisfy
$f(\tx^{(k)}) \rightarrow f(\hx(t))$ as $k\rightarrow \infty$; in
fact, as shown in, e.g., \citet{jaggi}, the error
$f(\tx^{(k)})-f(\hx(t))$ is $O(1/k)$. \citet{jaggi} also shows how to
use the Frank-Wolfe iterates to easily 
compute a duality gap for the problem \eqref{eq:genprob}, so in 
practice we could stop iterating when this duality gap is sufficiently
small. 

At face value, the Frank-Wolfe steps
\eqref{eq:fwup}, \eqref{eq:fwdir}, \eqref{eq:fwshrink} and the
stagewise steps \eqref{eq:stageup}, \eqref{eq:stagedir} appear 
very similar.  One apparent difference is that the
former steps are iterated to ultimately yield a single estimate at
a given value of the regularization parameter $t$, whereas the
latter steps are iterated to yield several estimates (one 
per iteration) that form a regularization path. We make 
more substantial and informative comparisons between the two 
methods below.

First, consider a setting in which we run the Frank-Wolfe
algorithm multiple times, in order to compute estimates at multiple
values of the regularization parameter $t$; a typical strategy
would be to run the Frank-Wolfe algorithm until convergence at each
desired value of $t$, using ``warm starts'' (i.e., at the end of 
each run, we would use the newly computed estimate as the initial
guess $\tx^{(0)}$ in the Frank-Wolfe algorithm at the next 
parameter value). With this in mind, it may be
tempting to compare our stagewise algorithm to something like a 
1-step Frank-Wolfe algorithm, where at each regularization parameter
value, we perform a single Frank-Wolfe update to 
construct our estimate, rather than iterating the algorithm until
convergence.  But a more careful examination shows that these two
approaches, the stagewise and 1-step Frank-Wolfe approaches, are
actually quite different.  To make the comparison as direct as
possible, assume that the 1-step Frank-Wolfe procedure starts with
$x^{(0)}=\hx(t_0)$, a solution in \eqref{eq:genprob} at
$t=t_0$. It would then compute estimates $x^{(k)}$, $k=1,2,3,\ldots$
at the regularization parameter values $t_k=t_{k-1}+\epsilon$,
$k=1,2,3,\ldots$ via 
\begin{gather}
\label{eq:fwup2}
x^{(k)} = \tilde{\Delta}, \\
\label{eq:fwdir2}
\text{where}\;\, \tilde{\Delta} \in \argmin_{z\in\R^n} \, 
\langle \nabla f(x^{(k-1)}), z \rangle \;\, \st \;\, g(z) \leq t_k,
\end{gather}
which is just a single step of the Frank-Wolfe algorithm at $t=t_k$,
taking as the initial guess $x^{(k-1)}$. We can see that  
both the 1-step Frank-Wolfe \eqref{eq:fwup2}, \eqref{eq:fwdir2} 
and stagewise \eqref{eq:stageup}, \eqref{eq:stagedir} updates utilize
a local linearization of $f$ around previous estimate, and minimize
this linear function over a sublevel set of $g$, but they do so in
subtly different ways.  
The 1-step Frank-Wolfe approach takes $x^{(k)}$ to be a minimizer of 
$\langle \nabla f(x^{(k-1)}), z \rangle$ over the full constraint set
$\{z : g(z) \leq t_k\}$; the stagewise approach computes a minimizer
of $\langle \nabla f(x^{(k-1)}), z \rangle$ over a highly restricted
constraint set $\{z : g(z) \leq \epsilon\}$, and adds this to the
last estimate $x^{(k-1)}$ to form $x^{(k)}$.  In both cases, the
estimate $x^{(k)}$ is a feasible point for problem
\eqref{eq:genprob} at $t=t_k$. See Figure \ref{fig:fw} for an
illustration.   

\begin{figure}[htb]
\centering
\includegraphics[height=0.485\textwidth]{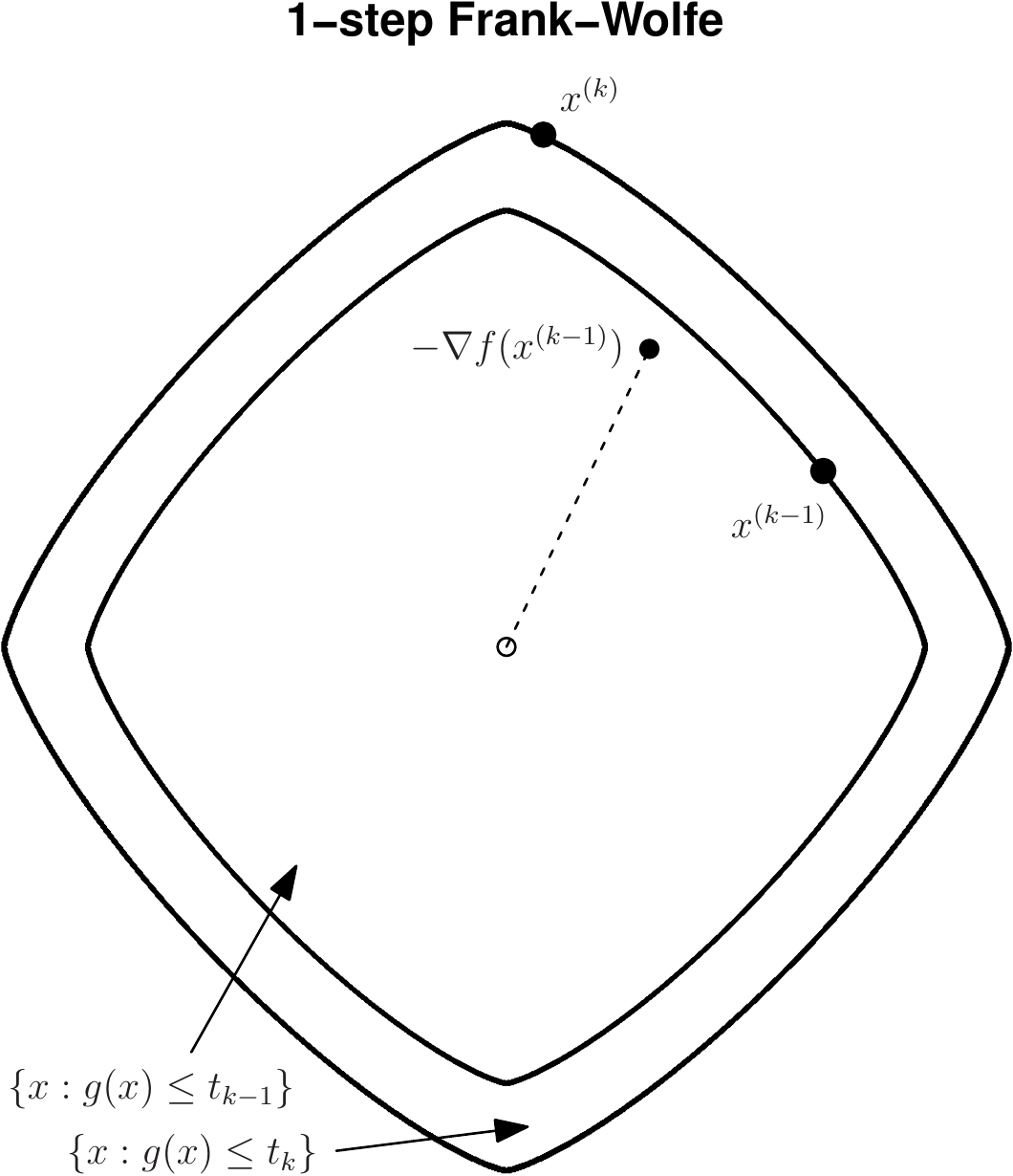}
\includegraphics[height=0.485\textwidth]{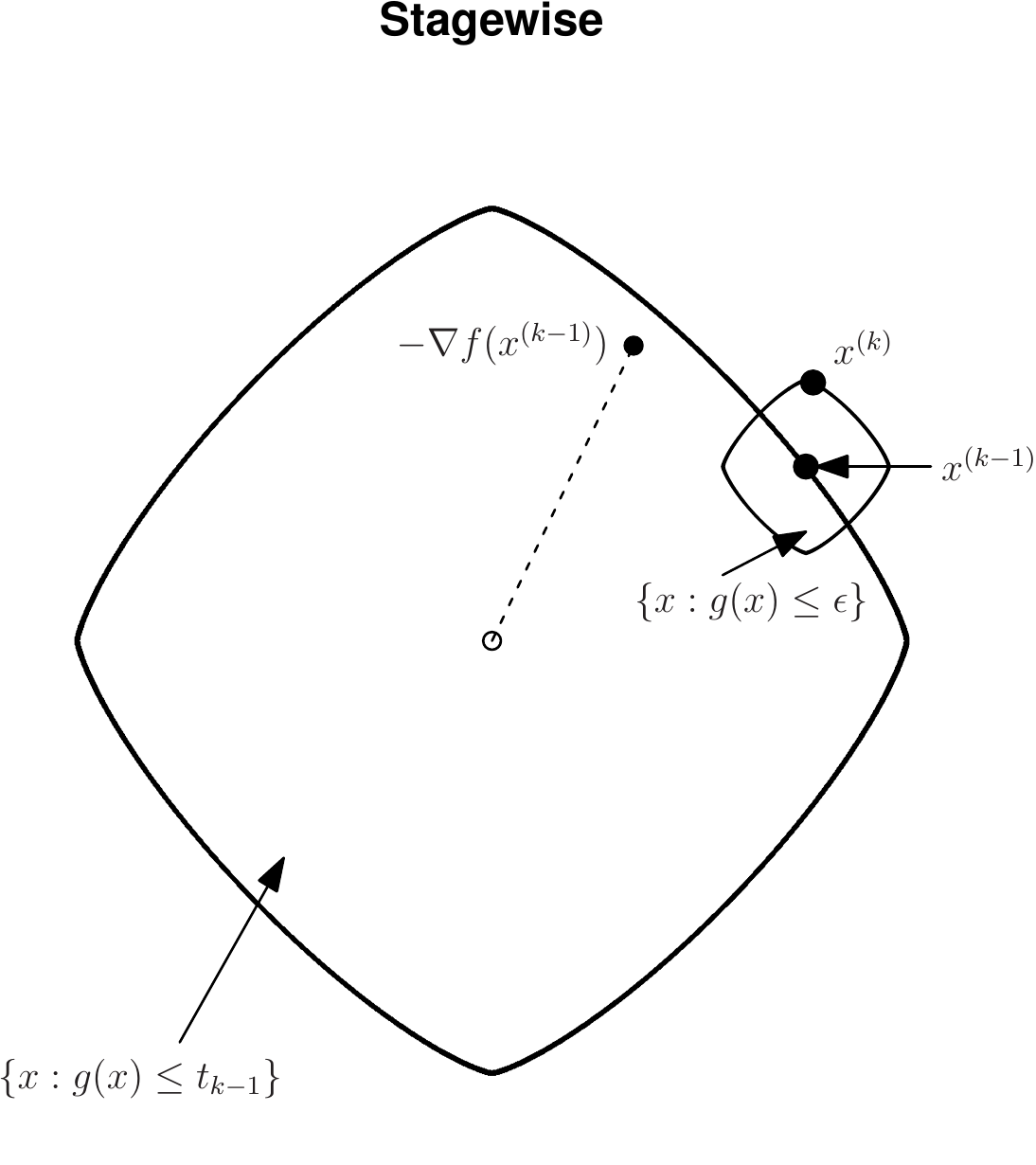}
\caption{\it Illustration of the 1-step Frank-Wolfe and stagewise
  methods.  Each starts with an estimate $x^{(k-1)}$ at a
  regularization parameter value $t_{k-1}$, satisfying $g(x^{(k-1)})
  \leq t_{k-1}$, i.e., a feasible point for the
  problem \eqref{eq:genprob} (but not necessarily optimal).  At
  a larger parameter value $t_k=t_{k-1}+\epsilon$, the 1-step
  Frank-Wolfe strategy inflates the constraint set to $\{x: g(x) \leq
  t_k\}$, and chooses its estimate $x^{(k)}$ to be the point 
  most aligned with $-\nabla f(x^{(k-1)})$ over this new constraint
  set.  Note that this means $x^{(k)}$ can be far away from the
  previous estimate $x^{(k-1)}$ at $t_{k-1}$.  (Note also that the 
  typical Frank-Wolfe strategy, as opposed to the 1-step strategy,
  would not settle for such a point $x^{(k)}$ as its estimate at
  $t_k$, but would continue iterating from $x^{(k)}$ by repeatedly
  minimizing linear approximations of $f$ over 
$\{x: g(x) \leq t_k\}$ until convergence.)
  The stagewise strategy instead builds
  a shrunken constraint set $\{x: g(x) \leq \epsilon\}$ around 
  $x^{(k-1)}$, and considers only the points in this small region as
  candidates for its next estimate.  It then constructs $x^{(k)}$
  using the same logic as above, by 
  finding the point maximally aligned with $-\nabla f(x^{(k-1)})$ over
  the new constraint region. Such differences (between the
  Frank-Wolfe and stagewise strategies) may not seem drastic, 
  but they have big implications.} 
\label{fig:fw}
\end{figure}

Though seemingly similar, these two strategies
result in entirely different paths of estimates.  Generally
speaking, the 1-step Frank-Wolfe strategy \eqref{eq:fwup2},
\eqref{eq:fwdir2} is not very useful,
since its update steps discard too much information from previous 
estimates.  Consider, e.g., the $\ell_1$ regularization setting, 
where $g(x)=\|x\|_1$: here each estimate from the 1-step
Frank-Wolfe algorithm would have only one nonzero component,
corresponding to the maximum absolute entry of the gradient vector
evaluated at the previous estimate.\footnote{Strictly speaking, if
there are ties between the absolute components of the gradient vector 
at $t_k$, then the estimate can be taken to be any convex combination 
$t_k \cdot \sum_{i \in \cI} \alpha_i e_i$, where $\cI$
is the set of maximizing indices, and each $\alpha_i \geq 0$ with  
$\sum_{i \in \cI} \alpha_i=1$.  We do not maintain this distinction
throughout our discussion in this section.}
Hence, instead of producing a sequence of models that become
progressively more and more dense as the regularization
parameter increases, as with the
stagewise algorithm, the 1-step Frank-Wolfe algorithm produces a
sequence of trivial models, each with just one active variable.
Similar conclusions can be drawn by looking at settings like
group-structured regularization, trace norm regularization, etc.

We would likely never use the 1-step Frank-Wolfe procedure in practice
to compute an (approximate) regularization path, but the insights
gained from studying this algorithm carry over to
the more common use case introduced initially: the typical
Frank-Wolfe strategy, in which we run the Frank-Wolfe algorithm until 
convergence across a sequence of regularization parameter values $t_k$,
$k=1,2,3,\ldots$ with warm starts, discards a lot of information
about previously computed estimates.   
At a parameter value $t_k$, the only information used by
the Frank-Wolfe algorithm about the previously
computed estimate $x^{(k-1)}$ is the gradient of $f$ at
$x^{(k-1)}$.  In particular, in its first step at $t_k$, it chooses
the first iterate to minimize the inner product with $\nabla
f(x^{(k-1)})$ over all feasible points.  
If this minimizer is far 
from $x^{(k-1)}$, then, assuming that the solutions at $t_{k-1}$ and
$t_k$ are close, the Frank-Wolfe algorithm basically wastes
iterations bringing itself back to where it was at the end
of its run for $t_{k-1}$. 
Interestingly, the iterations {\it within} a run of Frank-Wolfe 
at a fixed parameter value $t_k$ prevent the algorithm from deviating  
too far from previous iterates, by means of the shrinkage factor
$\gamma$ in \eqref{eq:fwshrink}; however, no such control 
takes place {\it between} runs of the Frank-Wolfe algorithm at
successive parameter values, $t_{k-1}$ and $t_k$, using warm 
starts. The stagewise algorithm \eqref{eq:stageup},
\eqref{eq:stagedir}, on the other hand, shares a great deal of
information between estimates at successive iterations (recall that by
definition, the stagewise estimates
$x^{(k)}$ and $x^{(k-1)}$ differ by an amount $\Delta$, 
controlled to be small under $g$), and in this sense, it makes a much
more efficient use of its history.   

It helps to think about an example.
Returning to the $\ell_1$ regularization setting, suppose that
we have computed an estimate $x^{(k-1)}$ with, say, 50 nonzero
components out of 1000 at some value of the regularization parameter
$t_{k-1}$.  At a slightly larger parameter value $t_k$,  
the stagewise algorithm retains essentially all of the information in 
$x^{(k-1)}$---information about which variables are active,
and the values of their coefficients---and
increments (or decrements) another component of $x^{(k-1)}$
in order to form $x^{(k)}$.  By comparison, the
Frank-Wolfe algorithm uses $x^{(k-1)}$ as a warm start for its run at
$t_k$, meaning that for its first step, it constructs an iterate 
with only one nonzero component, corresponding to the maximal
entry of $\nabla f(x^{(k-1)})$ in absolute value.  In subsequent
steps, only one component of the iterate is adjusted at a time.  Said
in words, the Frank-Wolfe algorithm at $t_k$
has to ``relearn'' the entire set of active variables (and their 
coefficient values), starting from the empty set.  This seems like a
markedly inefficient use of its computational history, certainly in 
comparison to the strategy taken by the stagewise
algorithm.\footnote{For completeness, we should also mention a variant
of the Frank-Wolfe algorithm proposed by \citet{jaggi}, in which the 
shrinkage parameter $\gamma$ in the update step \eqref{eq:fwup}    
is chosen by exact line search, as opposed to the default
(nonadaptive) value given in \eqref{eq:fwshrink}.  This version of 
Frank-Wolfe has the potential to use more of its history,
depending on how large it sets $\gamma$ (especially in its first
step).  Still, a key distinction remains: the adaptive Frank-Wolfe
strategy can choose to use more or less of its history, but for
the stagewise algorithm, relying strongly on the computed history is a 
set decision, not one that is adaptively made over its course.}

Of course, a crucial difference to note 
is that the estimates $x^{(k)}$, $k=1,2,3,\ldots$ from the Frank-Wolfe
strategy are guaranteed to be solutions in \eqref{eq:genprob}   
(up to an arbitrarily small level of tolerance) at $t=t_k$,
$k=1,2,3,\ldots$, but the stagewise estimates $x^{(k)}$,
$k=1,2,3,\ldots$ are not, even as the spacings between the
parameter values $t_k$, $k=1,2,3,\ldots$ goes to zero.
One can also argue that the Frank-Wolfe algorithm was not 
designed to be a path following method, and so comparing to stagewise
by simply applying it sequentially with warm starts is unfair.  Some
authors have in fact considered a specialized Frank-Wolfe strategy
for path following \citep{fwpath1,fwpath2,fwpath3}.  The general
goal of this work is to construct an approximate solution 
path in \eqref{eq:genprob} with a provable approximation  
guarantee (in terms of the achieved criterion value); this is done by 
continuously controlling a duality gap for problem
\eqref{eq:genprob} as the parameter $t$ varies, a strategy 
that does not depend on the Frank-Wolfe algorithm per se,
but can be easily combined with the Frank-Wolfe algorithm because its
iterates readily admit such a duality gap.  

An implementation of this idea is described in Appendix
\ref{app:fwpath}, as its details are not important for the current
discussion.  This path following algorithm can be setup to ensure a 
$\gamma$-suboptimal regularization path, for any given $\gamma>0$, and
operationally it boils down to running Frank-Wolfe with warm starts
over a sequence of adaptively chosen parameter values $t_k$, 
$k=1,2,3,\ldots$ (rather than a given fixed sequence).
This adaptive sequence tends to be quite dense for reasonably small 
choices of $\gamma$ (much more dense than a typical fixed
sequence of parameter values); for larger values of $\gamma$, the
adaptive sequence is more spread out, but then it takes many
iterations at each parameter value to converge 
(especially towards the unregularized end of the path). 
Altogether, the previous comparison between the two methods can be  
drawn here: the Frank-Wolfe path following strategy does not utilize 
its history nearly as efficiently as the stagewise algorithm.

The arguments in this subsection were based on high level
reasoning, but they are empirically supported by the examples in 
Section \ref{sec:bigexamples} and Appendix \ref{app:fwpath}, where we 
run stagewise and Frank-Wolfe across a variety of
scenarios.  We can summarize the comparisons drawn, as follows:   
\begin{itemize}
\item the Frank-Wolfe algorithm, run over a (fixed or
  adaptively chosen) sequence of regularization parameter values 
  with warm starts, does not make an efficient use of its
  computational history (i.e., the information contained in
  previously computed estimates), however, it is guaranteed to produce 
  solutions in \eqref{eq:genprob};
\item the stagewise algorithm is comparatively much more efficient at
  using its history of estimates, but is not guaranteed to produce
  solutions in \eqref{eq:genprob}.
\end{itemize}
The fact that the Frank-Wolfe algorithm relinquishes so much
information about previously computed estimates may actually be
the reason, roughly speaking, that it is able to produce solutions in   
\eqref{eq:genprob}.  After all, the solution
path of the convex regularization problem \eqref{eq:genprob} can 
be highly variable (e.g., in a high-dimensional 
lasso problem with correlated predictors, the components of
the solution path can be very wiggly, as predictors can enter and
leave the active set many times), and therefore, by not constraining 
itself to adhere strongly to its computational past, the Frank-Wolfe
algorithm gives itself the freedom to fit each individual estimate
(along a sequence of parameter values $t_k$,
$k=1,2,3,\ldots$) as appropriate.  In contrast, the stagewise
algorithm is constrained to closely follow its path of previously
computed estimates, by construction.  One can even look at this
constrained nature of fitting as an additional type of regularization.    
Except in special circumstances (e.g., monotone component paths in the
lasso problem), the stagewise algorithm does not produce exact
solutions in \eqref{eq:genprob}, a seemingly necessary feature of any
estimation method that follows its computational history so carefully.  
But this is not the end of the story; recall that a main theme of this
paper (the third point in Section \ref{sec:summary}) is that the
stagewise estimates are statistically useful in their own right, in
spite of their (sometimes extreme) differences 
to solutions in \eqref{eq:genprob}.  If we view the momentum that the
stagewise method places on past estimates as an added level of
regularization, then such a claim is perhaps not too surprising.

\subsection{Path following with Frank-Wolfe}
\label{app:fwpath}

Assume that $g$ is a norm, and $g^*(x)=\max_{g(z)\leq 1} x^T z$ is its 
dual norm.  We propose below a path following strategy to
compute an approximate regularization path with Frank-Wolfe.

\begin{algorithm}[\textbf{Path following with Frank-Wolfe}]
\label{alg:fwpath}
\hfill\par
\smallskip\smallskip
\noindent
Fix $\gamma,m>0$, and $t_0 \in \R$.  Set
$\tx(t_0)=\hx(t_0)$, a solution in \eqref{eq:genprob} at
$t=t_0$. Repeat, for $k=1,2,3,\ldots$:
\begin{itemize}
\item Calculate
\begin{equation*}
t_k = t_{k-1} + \frac{(1-1/m)\gamma} 
{g^*\big(\nabla f\big(\tx(t_{k-1})\big)\big)},
\end{equation*}
and set $\tx(t) = \tx(t_{k-1})$ for all $t \in [t_{k-1},t_k)$.
\item Use Frank-Wolfe to compute $\tx(t_k)$, an
  (approximate) solution in \eqref{eq:genprob} at $t=t_k$,
  having duality gap at most $\gamma/m$. 
\end{itemize}
\end{algorithm}

One might notice that the above algorithm 
differs somewhat from the path following algorithms
in \citet{fwpath1,fwpath2,fwpath3} (specifically, in the way that 
it handles the regularization parameter $t$ in
\eqref{eq:genprob}); we make modifications that we feel simplify
the path following algorithm in the current
setting, but really the main idea follows entirely the work of these
authors. Algorithm \ref{alg:fwpath} constructs 
a piecewise constant regularization path, $\tx(t)$, $t \geq t_0$.  
It begins by computing a solution in \eqref{eq:genprob} at some 
initial value $t=t_0$ (and at a higher level of
accuracy than the standard set for the overall path), increases $t$
until the computed solution no longer meets the standard of accuracy
as measured by the duality gap, recomputes a solution at this new
value of the parameter, increases $t$, and so on.

It is easy to verify that the path output by this algorithm is
feasible for \eqref{eq:genprob} at all visited values of the parameter
$t$; moreover, the path has the approximation property  
\begin{equation}
\label{eq:approxprop}
f(\tx(t)) - f(\hx(t)) \leq \gamma, \;\;\; \text{for all $t$}.
\end{equation}
(By all $t$, in the above, we mean all values of $t \geq t_0$ 
visited by the path algorithm.)  To show this, 
we begin by remarking, as in \citet{jaggi}, that the quantity 
\begin{equation*}
h_t(x) = \max_{g(z) \leq t} \, \langle \nabla f(x), x-z \rangle,
\end{equation*}
serves as a valid duality gap for problem \eqref{eq:genprob},
in that for all feasible $x$, 
\begin{equation*}
f(x) - f(\hx(t)) \leq h_t(x),
\end{equation*}
It will be helpful to use an equivalent representation of
the duality gap:
\begin{equation}
\label{eq:dualgap}
h_t(x) = \langle \nabla f(x), x \rangle + t \cdot
\max_{g(z) \leq 1} \, \langle \nabla f(x), z \rangle = 
\langle \nabla f(x), x \rangle + t \cdot g^*\big(\nabla f(x)\big),  
\end{equation}
where we have used the fact that $g(az)=|a|g(z)$, and the definition
of the dual norm $g^*$.  

Now the argument for \eqref{eq:approxprop} is straightforward. By
construction, at step $k$, we compute $\tx(t_k)$ to be an approximate
solution with the property that 
\smash{$h_{t_k}(\tx(t_k)) \leq \gamma/m$}. 
Since the algorithm assigns $\tx(t)=\tx(t_k)$ for all $t \leq 
t_{k+1}$, we must check that 
\smash{$h_t(\tx(t_k)) \leq \gamma$} for all 
$t \leq t_{k+1}$.  Directly from \eqref{eq:dualgap},
\begin{align*}
h_t(\tx(t_k)) &= \langle \nabla f (\tx(t_k)), \tx(t_k) \rangle + t
\cdot g^*\big(\nabla f (\tx(t_k))\big) \\
&\leq \gamma/m + (t-t_k) \cdot g^*\big(\nabla f (\tx(t_k))\big).
\end{align*}
As $t_{k+1}-t_k = (\gamma-\gamma/m)/g^*(\nabla f (\tx(t_k)))$, the
result follows.

We now report on an example of Frank-Wolfe path following in group
lasso regression, with the data setup as in Figure
\ref{fig:big_grouplasso} (in the case of uncorrelated predictors).
We chose $\gamma=250$, hand-tuned to be the largest possible value of
the duality gap bound so that resulting Frank-Wolfe path estimates
differed no more in mean squared error from the exact solutions than
the stagewise ones did (with a step size $\epsilon=1$).  This was
measured by the maximum discrepancy in mean squared error over the 100
regularization parameter values at which exact solutions were
computed; linear interpolation was used to compute mean squared errors
for Frank-Wolfe and stagewise at these parameter values.  See the left
panel in Figure \ref{fig:fwpath} for mean squared error
curves. Under this large value of $\gamma$, and $m=5$ (the results
did not really change by varying $m$), the path
following strategy produced an adaptive sequence of only 102
regularization parameter values spanning the full path
range. However, 
it took many iterations at each parameter value (beyond the start of
the path) to meet the required duality gap, as shown in the right
panel of Figure \ref{fig:fwpath}.  The total number of iterations used
by the Frank-Wolfe path following method was over 14,000, which is
extremely inefficient, especially 
viewed next to the 250 iterations needed by stagewise.  To emphasize:
the comparison here is quite clear-cut, because iterations of
stagewise and Frank-Wolfe are computationally equivalent, and the two
methods have been tuned to yield the same mean squared error
performance.  

\begin{figure}[htb]
\centering
\includegraphics[width=0.475\textwidth]{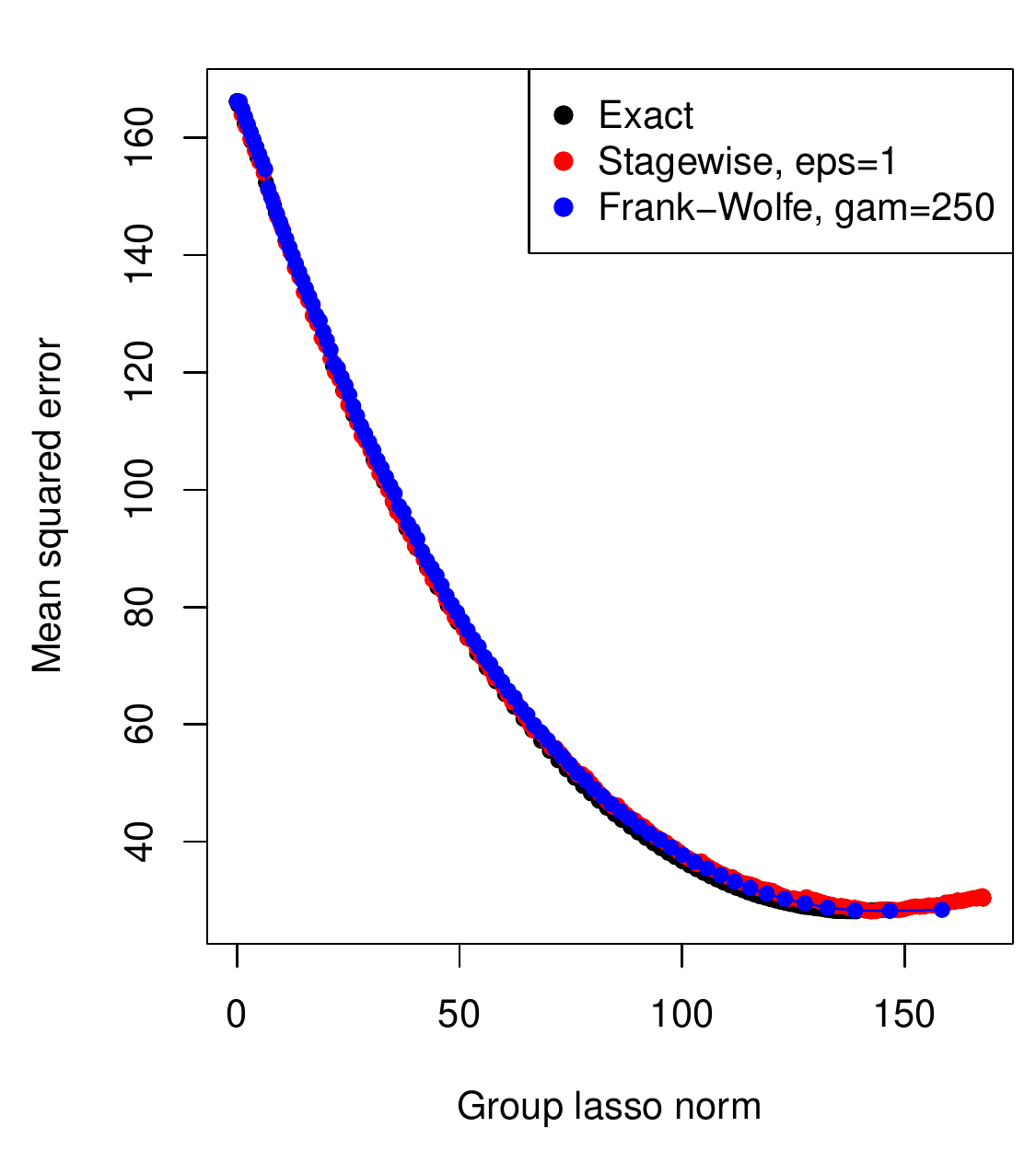}
\includegraphics[width=0.475\textwidth]{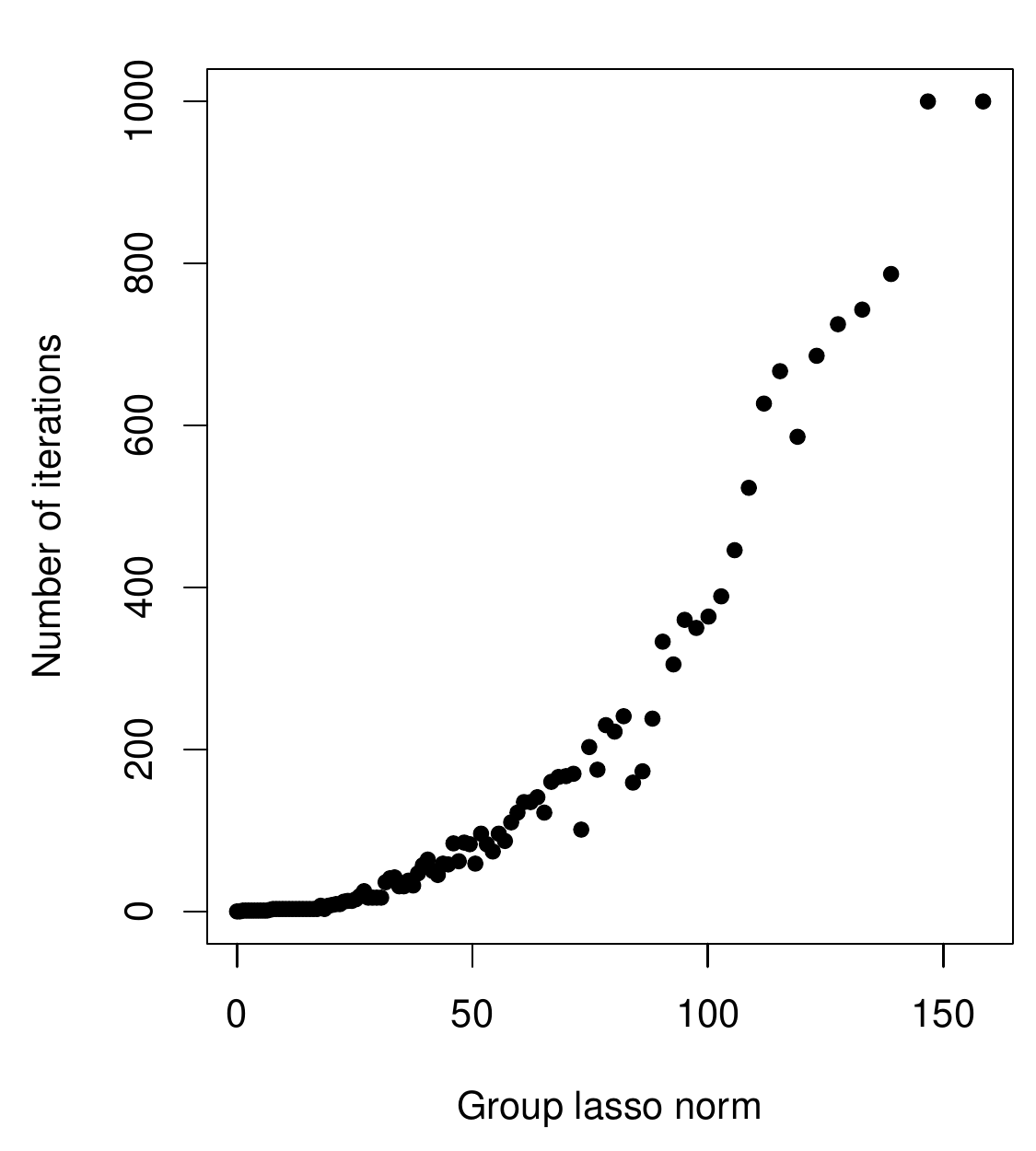}
\caption{\it For the same group lasso setup as in Figure
  \ref{fig:big_grouplasso} (in the uncorrelated predictors case), we
  ran the Frank-Wolfe path following strategy with $\gamma=250$ and
  $m=5$.  The bound $\gamma$ was chosen to be as large as possible so
  that the Frank-Wolfe estimates have competitive mean squared errors
  with the stagewise estimates and exact solutions, confirmed by the
  plot on the left.  The right plot shows the number of iterations
  needed by Frank-Wolfe to converge at the required duality gap of
  $\gamma/m$, as a function of the visited regularization parameter
  value.  The maximum number of iterations was 1000 (hence the
  algorithm did not converge for the largest two regularization
  parameter values.)  We can see that a huge number of iterations
  are needed past the start of the path.}
\label{fig:fwpath} 
\end{figure}

\subsection{Small example: fused lasso signal approximation}  
\label{app:fusedlasso}

For a small 1d fused lasso example, in the Gaussian
signal approximator setup with $n=20$, we generated a   
piecewise constant underlying sequence $\beta^* \in  \R^{20}$ with 5
segments (the levels were drawn  
uniformly at random between 1 and 10), and we added
$N(0,1)$ noise to form the observations $y \in \R^{20}$.  Figure
\ref{fig:fusedlasso12} displays the 1d fused lasso solution path 
on the left, and the stagewise path on the right, constructed from 900
steps using $\epsilon=0.01$.  The two paths look basically the same.
In a rough sense, this is not too surprising, because the 1d fused
lasso problem can be rewritten as a lasso problem with a predictor
matrix $X$ equal to the lower triangular matrix of 1s, and for this 
design, it is known that the limiting stagewise path (as
$\epsilon \rightarrow 0$) is the exact solution path. (Here $X$
satisfies the ``positive cone 
condition'', which ensures the lasso components paths are monotone,
see \citet{lars}, \citet{monotonelasso}.) 
But to be precise, this latter convergence result refers to the
stagewise algorithm applied to the lasso parametrization directly,
and this is not the same as applying the stagewise method to (the 
dual of) the 1d fused lasso parametrization, as was 
done in Figure \ref{fig:fusedlasso12}.  It may be interesting to
compare these two stagewise implementations, with the former 
iteratively adding step functions together, and the latter iteratively 
shrinking adjacent components towards each other.   
It may also be possible to prove a limiting equivalence between the
latter stagewise method and the exact solution path,
from arguments that rely on the monotonicity of the estimated
differences, but we do not pursue these ideas in this paper. 

\begin{figure}[p]
\centering

\begin{subfigure}{\textwidth}
\includegraphics[width=0.475\textwidth]{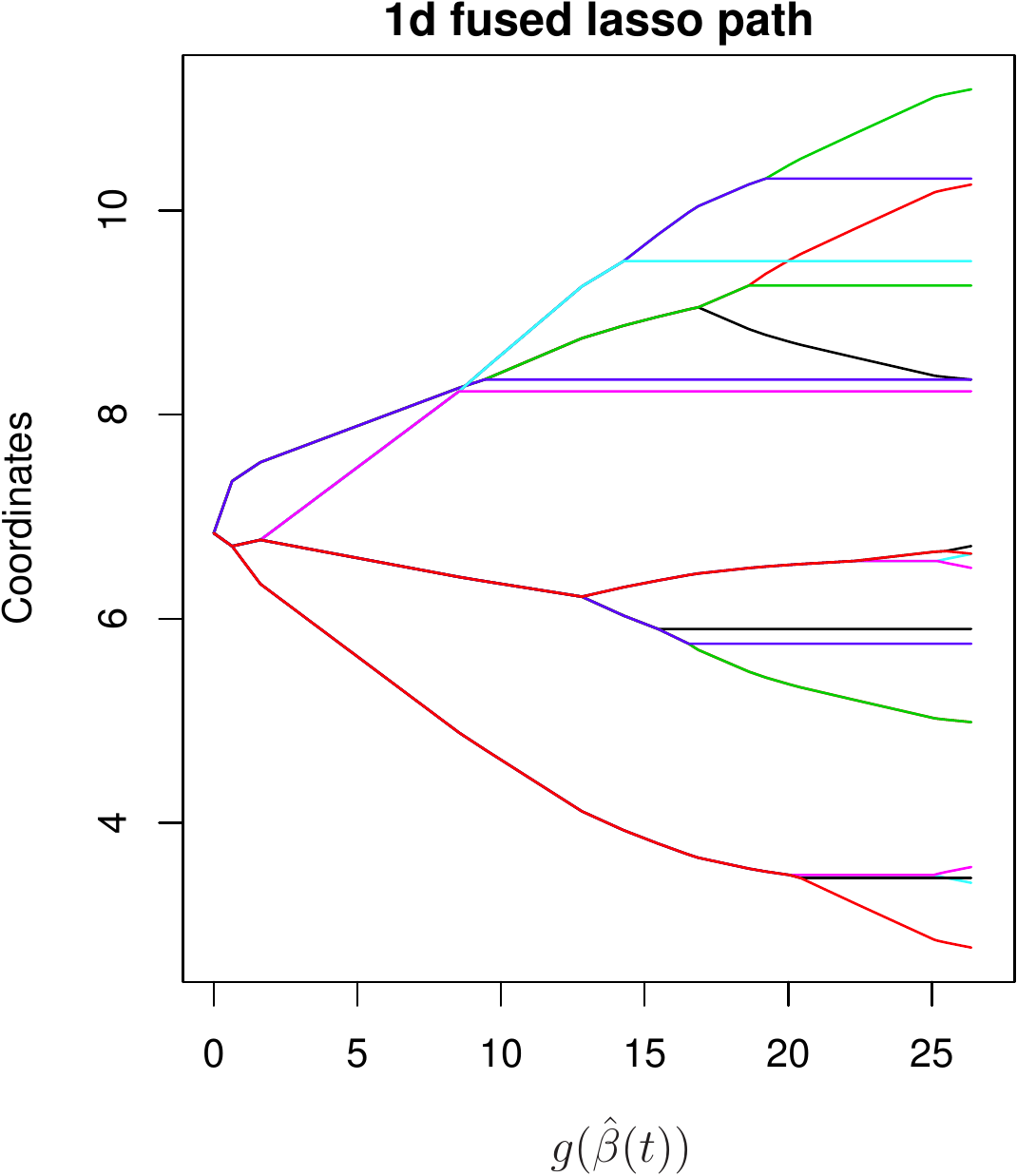} 
\includegraphics[width=0.475\textwidth]{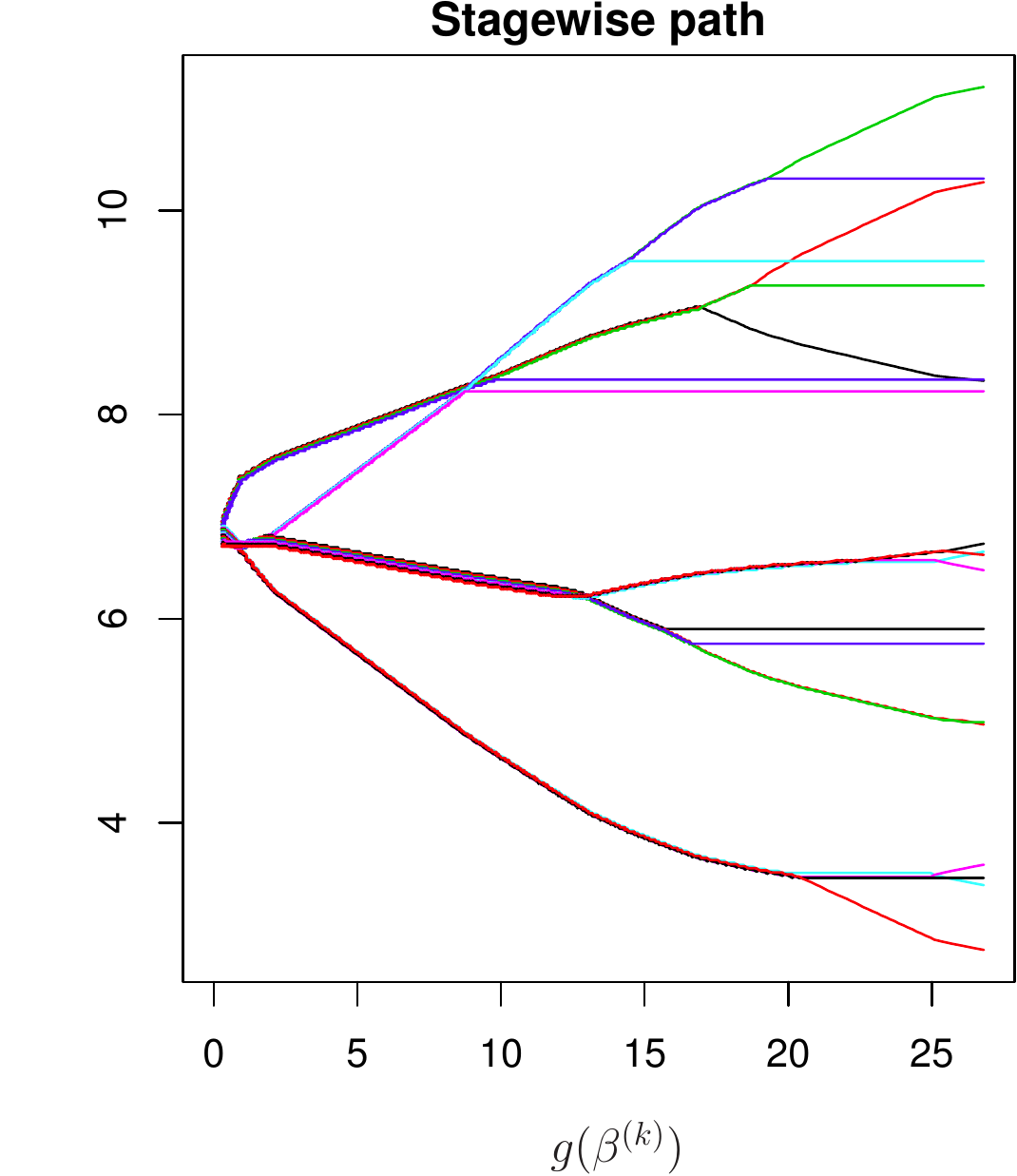} 
\caption{\it A small 1d fused lasso example with $n=20$ points. The
  left plot shows the exact solution path, and the right plot shows
  the stagewise approximation, which is essentially identical.} 
\label{fig:fusedlasso12}
\end{subfigure}

\bigskip
\bigskip
\begin{subfigure}{\textwidth}
\includegraphics[width=0.475\textwidth]{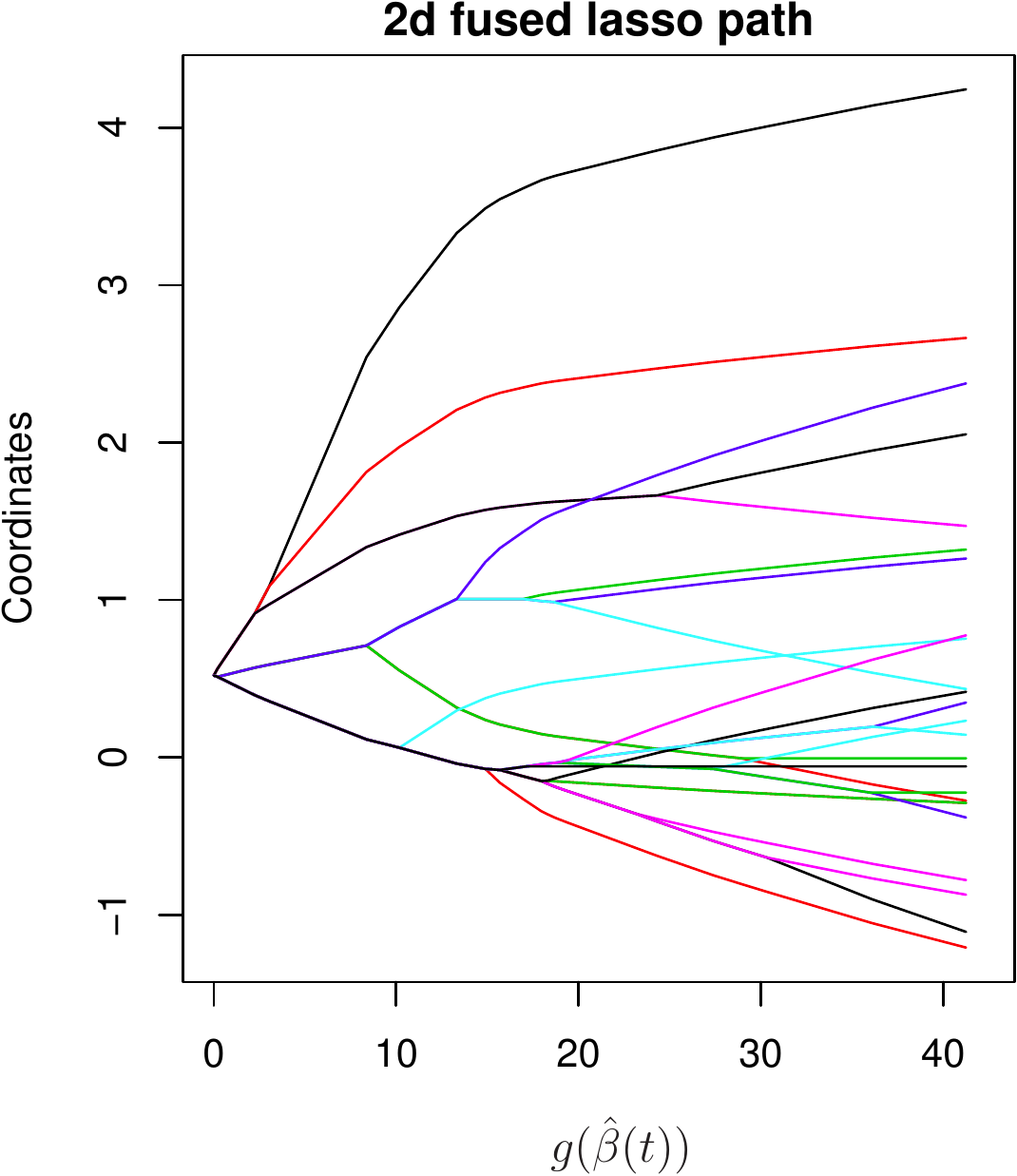} 
\includegraphics[width=0.475\textwidth]{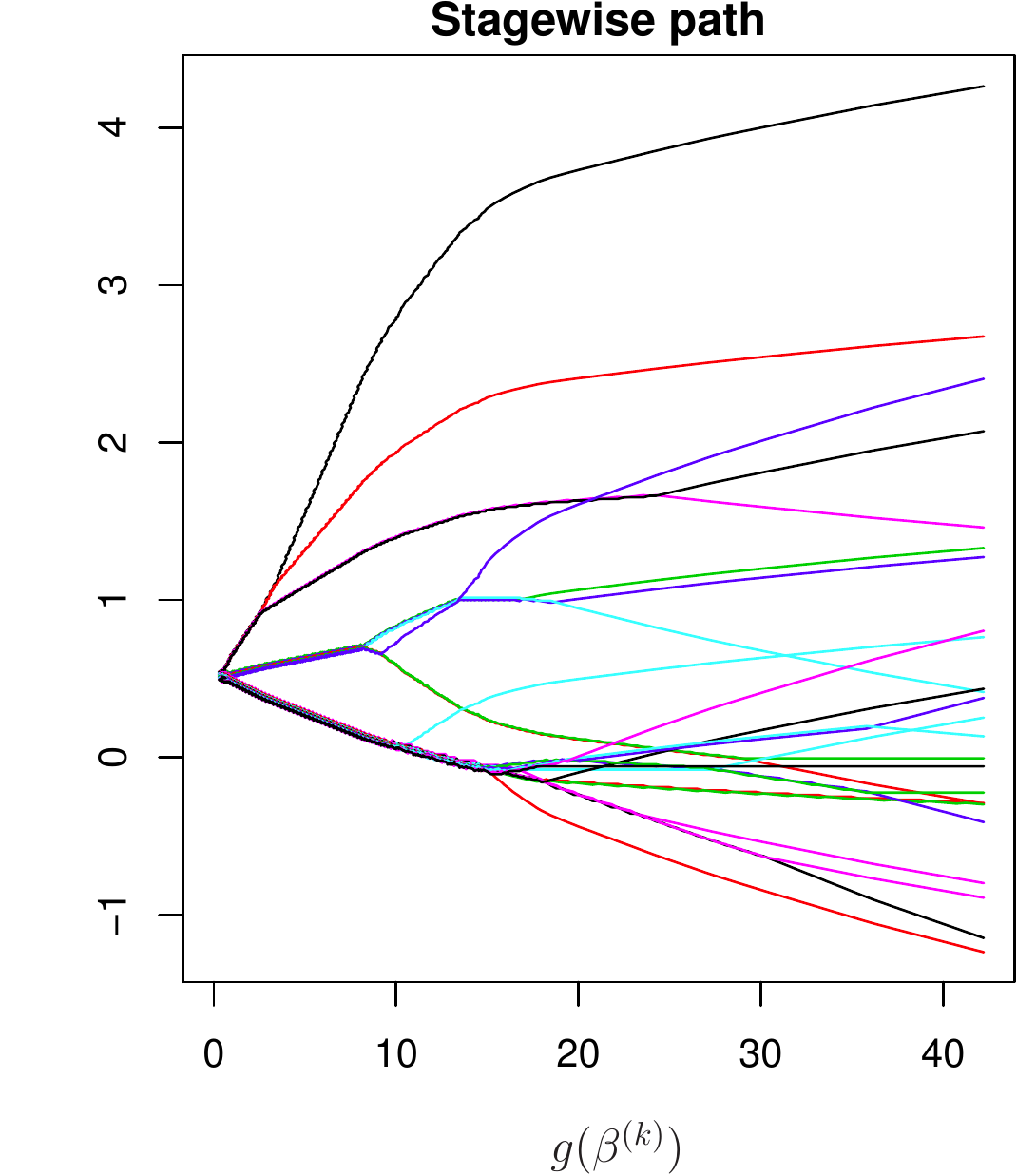} 
\caption{\it A small 2d fused lasso example, using a $5 \times 5$
  image (so that $n=25$). The solution path on the left and stagewise
  path on the right are only slightly different towards the
  regularized end of the paths.}
\label{fig:fusedlasso34}
\end{subfigure}

\caption{\it Fused lasso examples in 1d and 2d.}
\label{fig:fusedlasso}
\end{figure}

For a 2d fused lasso example, still in the Gaussian signal
approximator setup, we took $n=25$, and $\beta^* \in \R^{25}$ to be an
unraveled version of a piecewise constant $5\times 5$ image.  Pixels
in the lower $2 \times 2$ corner of the image were assigned a common
value of 3, and all other pixel values were zero.  We formed $y \in
\R^{25}$ by adding independent $N(0,1)$ noise to $\beta^*$.  
The 2d fused lasso regularizer 
uses a 2-dimensional grid graph over the optimization variable 
$\beta \in \R^{25}$ i.e., this graph joins components of $\beta$ that
correspond to vertically or horizontally adjacent pixels in the
image format.  In Figure \ref{fig:fusedlasso34}, we show the
exact 2d fused lasso solution path on the left, and the stagewise
path on the right, from 500 steps with $\epsilon=0.005$.  It is  
not easy to spot many differences between the two
(one difference can be seen when the 2d fused lasso norm is
about 10), and altogether the stagewise path appears to be a very good  
approximation. 

Finally, we emphasize that the stagewise steps in Figures 
\ref{fig:fusedlasso12} and \ref{fig:fusedlasso34} were derived from 
the dual, so the construction of paths proceeded  
{\it from right to left} in the stagewise plots (i.e., the stagewise
paths were built for decreasing  values of the regularization
parameter $t$ in \eqref{eq:genlasso}), contrary to all other
stagewise examples outside of the generalized lasso setting. 
The stagewise approximation is 
hence most accurate at the right end of the plot, and its component   
paths become more choppy at the left end. In most foreseeable
applications---fused lasso applications or otherwise---this is
unfortunately not a desirable feature.  Usually regularized
estimates are of primary concern,  
so we would not want our iterative algorithm to reach these last,   
and certainly not with a lower measure of accuracy.  
An exception is the case of image denoising (under 2d fused lasso 
regularization): here estimates at low levels of regularization are
usually interesting, as the underlying (noiseless) image itself is
usually complex, especially for real, large images.  The dual
stagewise algorithm thrives in this case, as we saw on the large image
denoising examples in Section \ref{sec:bigimg}.
 
\subsection{Large example: ridge logistic regression} 
\label{app:biglog}

{\bf Overview.}
We investigate two simulated examples of ridge regularized logistic 
regression. The ridge logistic regression solutions 
were computed with the {\tt glmnet} R package, available on CRAN,
which offers a highly optimized coordinate descent implementation 
\citep{pco,glmnet}.  The {\tt glmnet} package allows for lasso, ridge,
and mixed (elastic net) regularization, and is actually more efficient
in the presence of lasso regularization (because it utilizes an
active set approach, which takes advantage of sparsity).  However, it
is still fairly efficient for pure ridge regularization if the number of
variables $p$ is not too large compared to the number of
observations $n$, and the solutions to be computed correspond to large
or moderate amounts of regularization (i.e., it does not
need to compute solutions too close to the unregularized end of the
path).  Therefore we chose the example setups to meet these rough 
guidelines. 

Recall, as described in Section \ref{sec:quadreg}, that the stagewise
procedure for ridge regularization is very simple, and iteratively
updates the estimate small amounts in the direction of the negative 
gradient (here, the gradient of 
the logistic loss function).  Our C++ implementation of this stagewise
routine, for the 
examples in the current section, is less than 25 lines of code.
Meanwhile, the {\tt glmnet} package uses a sophisticated, nuanced
Fortran implementation of coordinate descent, which totals thousands
of lines of code. (To be fair, the {\tt glmnet} Fortran code
is multipurpose, in that it solves more than just ridge regularized
logistic regression: it handles elastic net regularized
generalized linear models. Still, the broad comparison stands,
between the complexities of the two implementations.)

\bigskip
\noindent
{\bf Examples and comparisons.}  Both simulation setups used  
$n=8000$ observations and $p=500$ predictor variables.
The binary inputs $y \in \R^{8000}$ were drawn independently
according to the logistic probabilities 
\begin{equation}
\label{eq:probs}
p_i^* =
\frac{1}{1+\exp(-[X \beta^*]_i)}, \;\;\; i=1,\ldots 8000,
\end{equation}
where the true coefficient vector $\beta^* \in \R^{500}$ had 50
nonzero components drawn independently from $N(0,1)$, and the
predictor matrix $X \in \R^{8000\times 500}$ was constructed
differently in the two setups. In the first,
the entries of $X$ were drawn independently from $N(0,1)$, and in the
second, the rows of $X$ were drawn independently from $N(0,\Sigma)$,
where $\Sigma \in \R^{500 \times 500}$ had unit diagonals and all
off-diagonal elements equal to $\rho=0.8$.  In other words, the first 
setup used uncorrelated predictors and the second 
used highly positively correlated predictors.

In both cases, we ran {\tt glmnet} over 100 regularization parameter
values (starting from the regularized end of the path, using warm 
starts).  We also ran the stagewise algorithm with two choices of step
size, $\epsilon=0.0025$ and $\epsilon=0.25$.  The results are shown
in Figure \ref{fig:big_logridge}.  Looking at the uncorrelated case,
in the left plot, first: we can see that, averaged over 10 simulated
draws of the observations $y$ (with fixed $X,\beta^*$), both stagewise
sequences achieve a competitive minimum misclassification rate to that
of the exact solution path (recorded with respect to independently
drawn test inputs drawn from \eqref{eq:probs}).  In the early stages
of the path, the exact solutions exhibit a better misclassification
rate than the stagewise estimates with $\epsilon=0.0025$, which in
turn exhibit a better error rate than the stagewise estimates with
$\epsilon=0.25$, but all estimates end up at the same minimum
misclassification rate later in the path.  The table in the bottom row
of Figure \ref{fig:big_logridge} shares the computation times for
these methods (averaged over 10 draws of the observations, and
recorded on a desktop computer).  The {\tt glmnet} coordinate descent
implementation took an average of 12 seconds to compute its 100 
solutions; stagewise with $\epsilon=0.0025$ took about 3 seconds to
compute 150 estimates; stagewise with $\epsilon=0.25$ took 0.3
seconds to compute its 15 estimates.

\begin{figure}[htb!]
\centering
\includegraphics[width=0.475\textwidth]{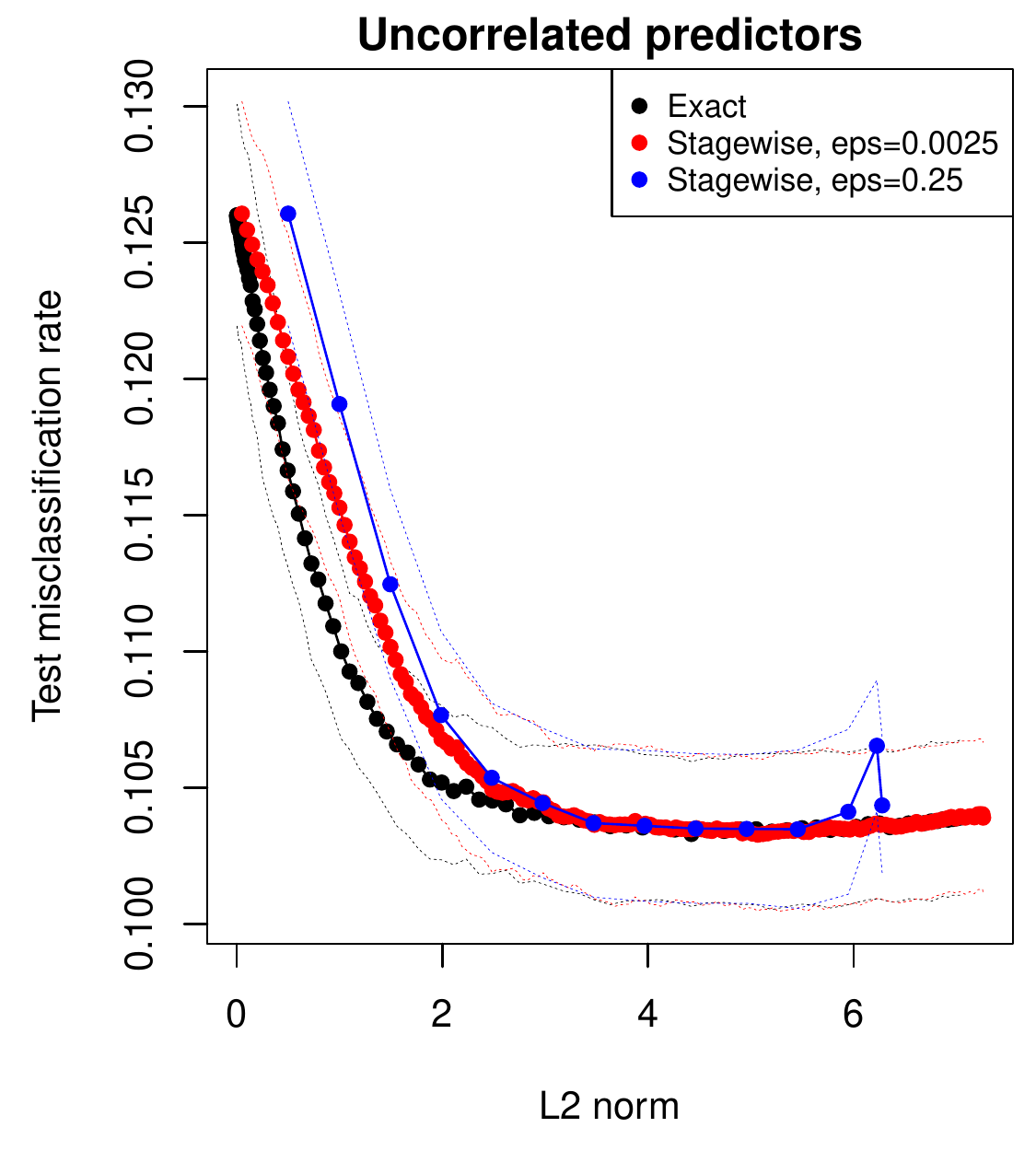}
\includegraphics[width=0.475\textwidth]{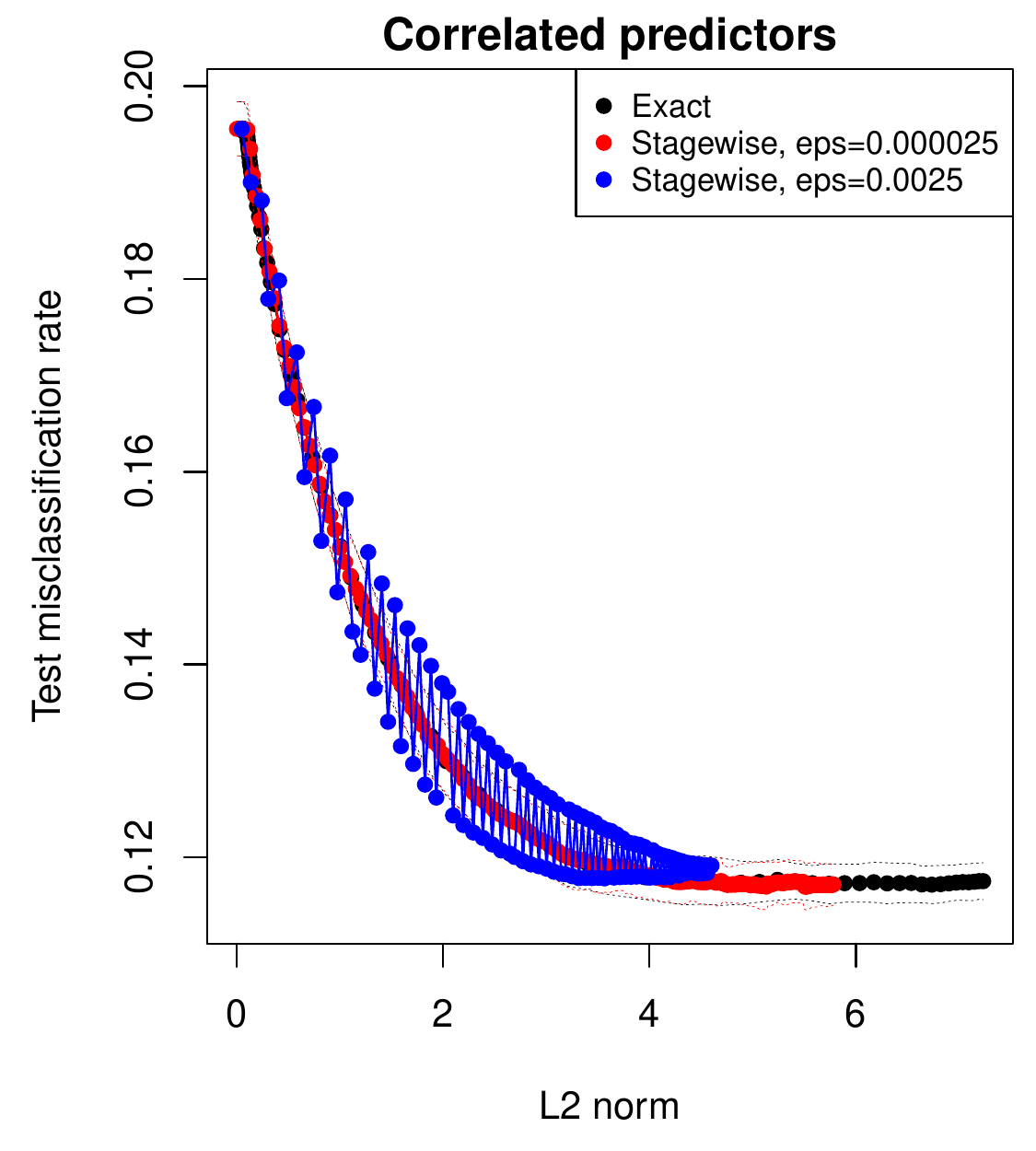} 
\vspace{5pt} \\
{\small
\begin{tabular}{|c|r|}
\hline
\multicolumn{2}{|c|}{Algorithm timings} \\
\hline
\hline
Method & Uncorrelated case \\
\hline
Exact: coordinate descent, 100 solutions & 12.12 (0.12) \\  
Stagewise: $\epsilon=0.0025$, 150 estimates & 3.01 (0.01) \\
Stagewise: $\epsilon=0.25$, 15 estimates & 0.30 (0.01) \\
\hline
Method & Correlated case \\
\hline
Exact: coordinate descent, 100 solutions & 11.32 (0.31) \\ 
Stagewise: $\epsilon=0.000025$, 2000 estimates & 40.21 (0.24) \\
Stagewise: $\epsilon=0.0025$, 1000 estimates & 20.10 (0.18) \\
\hline
\end{tabular}}
\caption{\it Comparisons between exact and stagewise estimates for
  ridge regularized logistic regression, with $n=8000$ and
  $p=500$. The top two plots show test  
  misclassification errors committed by solutions and stagewise
  estimates in two different scenarios, one with uncorrelated
  predictors on the left, and one with highly correlated predictors on
  the right. The bottom table gives timings for the {\tt glmnet}
  coordinate descent algorithm in computing exact solutions and the
  stagewise algorithms.  (All test errors and
  timings were averaged over 10 repetitions from the simulation model;
  the dotted lines in the plots show standard deviations, as do the
  parentheses in the table.) The stagewise algorithm performs ideally
  in the uncorrelated scenario, delivering statistically accurate
  estimates at very low computational cost; on the other hand, it
  seriously struggles in the correlated setup, requiring even
  smaller step sizes and far more steps to produce statistically
  meaningful estimates. (In the right plot, only 10\% of the points
  along the stagewise error curves are drawn, and the standard
  deviations are withheld from the $\epsilon=0.0025$ curve, for
  visibility.)}
\label{fig:big_logridge} 
\end{figure}

In terms of the performance of the stagewise algorithm, the correlated
problem setup stands in stark contrast to the uncorrelated one.  
In fact, this correlated case represents the closest incident to a
failure for stagewise in this paper---to be perfectly 
clear, though, the ``failure'' here is entirely computational.
Using step sizes $\epsilon=0.000025$ and
$\epsilon=0.0025$, the stagewise method needed disproportionately 
more steps to cover a comparable part of the regularization path.
This meant 2000 
and 1000 steps when $\epsilon=0.000025$ and $\epsilon=0.0025$,
respectively. Apparently the effective step length in this
problem is greatly contracted, and the runtimes for computing a full 
stagewise regularization path are significantly inflated, as reported
in the table in Figure \ref{fig:big_logridge}.  With the smaller step
size, $\epsilon=0.000025$, the right plot in Figure
\ref{fig:big_logridge} shows that the stagewise
estimates track the test misclassification rates of the exact
solutions very closely; with the larger step size, $\epsilon=0.0025$,
the estimates display an odd trend in which their test
errors bounce around the solution test errors. 

This behavior, and 
the unusually slow stagewise progress, can be explained by the
following rough geometric perspective.  The contours of the logistic
loss function $f(\beta)$ lie close to a tilted and very thin ellipse
in $\R^{500}$, due to the highly positively correlated predictor 
variables $X$.  (This contours are not exactly elliptical because the
Hessian of $f$ is not constant, but locally they are approximately
so.) Starting from the origin, the stagewise algorithm
repeatedly adds updates in the direction of the negative gradient of
$f$.  Because the ellipse is so thin, the updates will often pass
``through'' the ellipse and make little progress in advancing the
$\ell_2$ norm of the iterates. That is, the $\ell_2$ norm
of the iterates $\beta^{(k)}$, $k=1,2,3,\ldots $ does not consistently
increase, unless the step size $\epsilon$ is very small;
otherwise progress it alternates back and forth, some steps advancing
the $\ell_2$ norm than others.  Hence the stagewise 
coefficients plots, even with the fairly small step size
$\epsilon=0.0025$, display a distinct 
zigzag pattern; see Figure \ref{fig:big_logridge_cor}.
This pattern is only exacerbated by multiplication by $X$, and the
achieved error rates, which are based on the fitted values
$X\beta^{(k)}$, $k=1,2,3,\ldots$, jump around wildly.  
(It may be interesting to note that, despite this zigzag behavior, the
mean squared error cruve between the stagewise estimates $\beta^{(k)}$ 
under $\epsilon=0.0025$ and the true parameter $\beta^*$ is actually
still competitive, which is probably not surprising when staring at
the strong similarities between coefficient paths in the top row of 
Figure \ref{fig:big_logridge_cor}.)
With a small enough step size, $\epsilon=0.000025$, 
this issue disappears, but of course the downside is that it now takes
2000 steps to compute a full stagewise path. 
A more thorough understanding of this problem and (hopefully) a
computational remedy are important topics for future work. 



\begin{figure}[htb]
\centering
\includegraphics[width=0.325\textwidth]{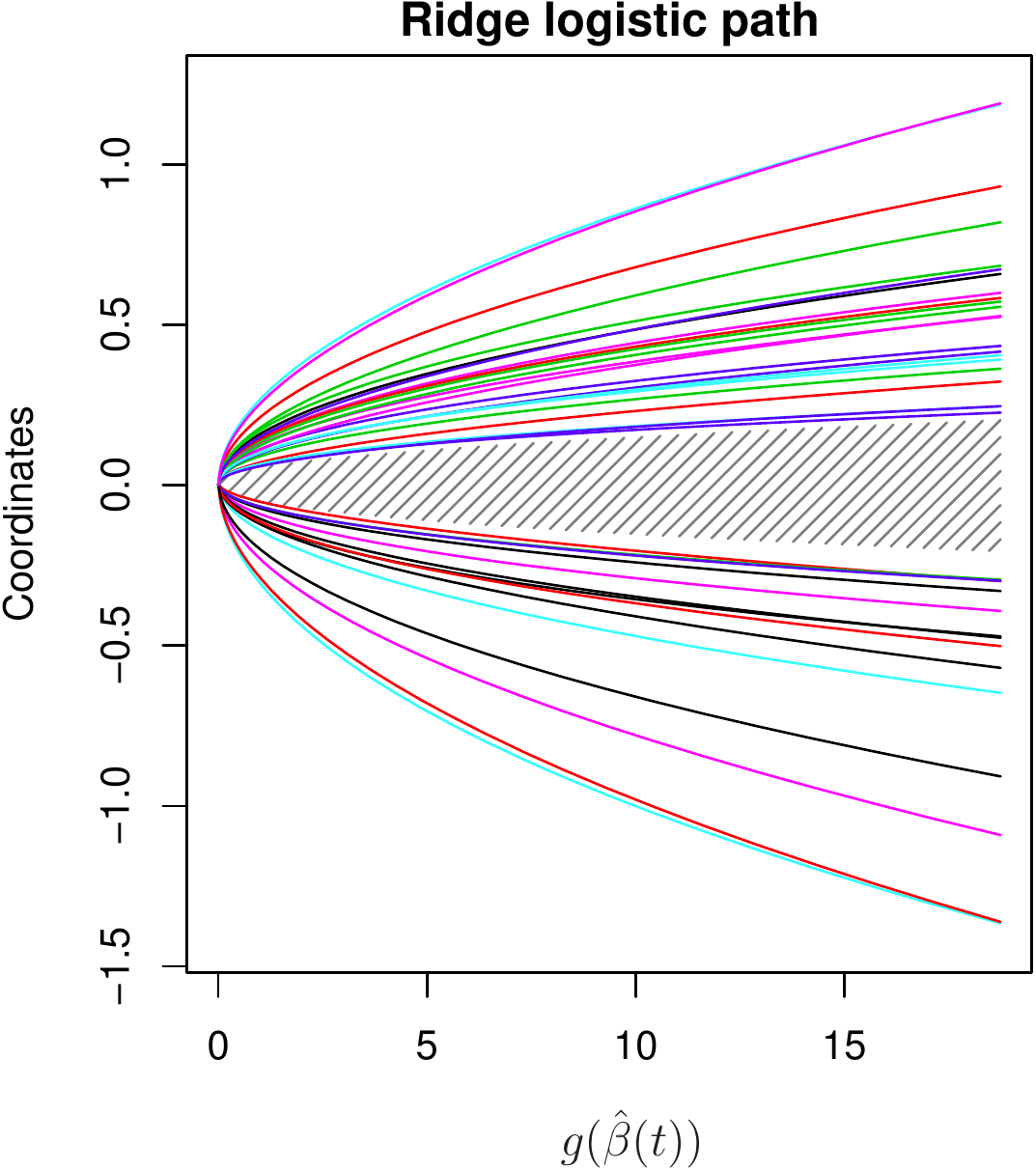}
\includegraphics[width=0.325\textwidth]{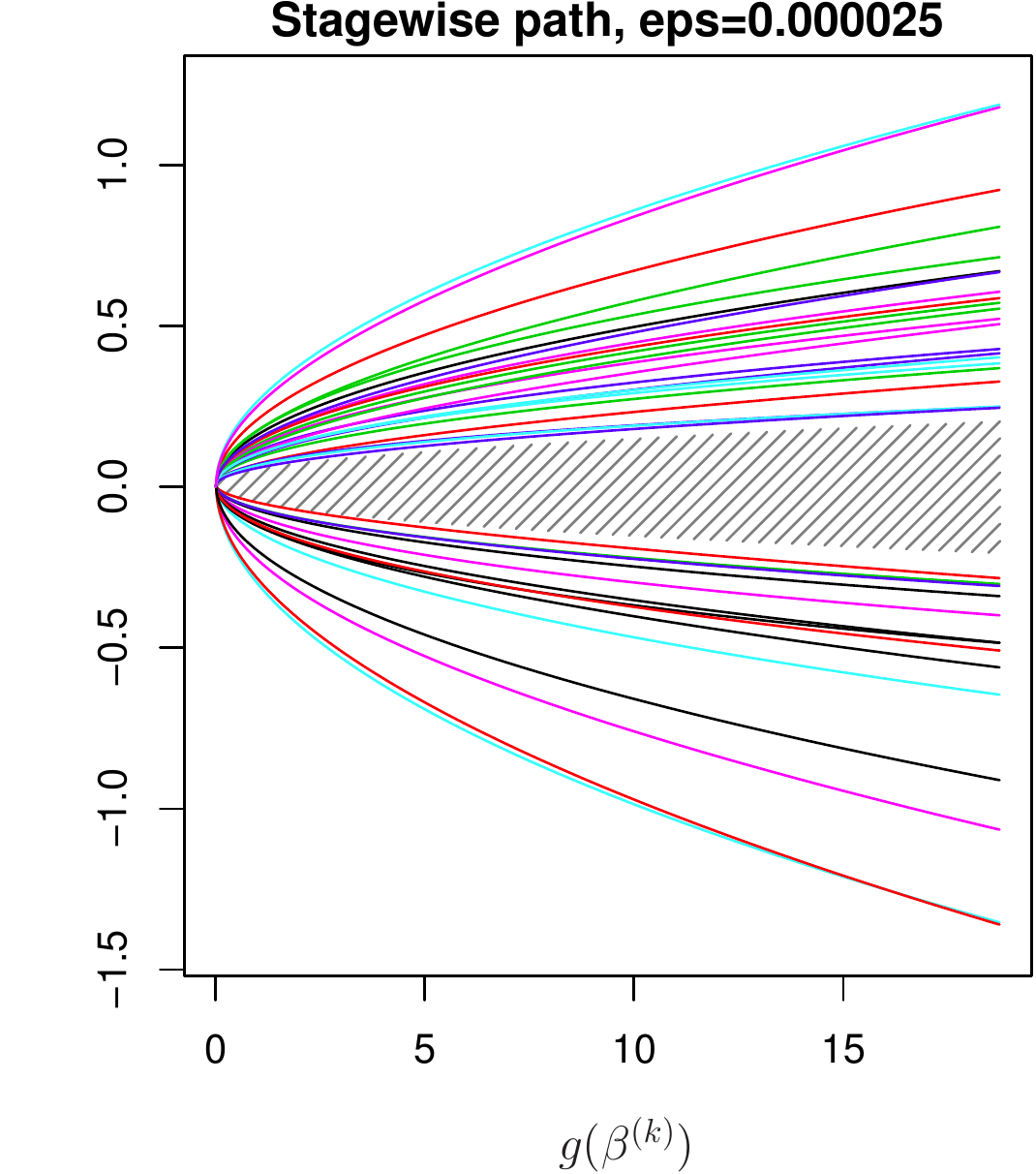} 
\includegraphics[width=0.325\textwidth]{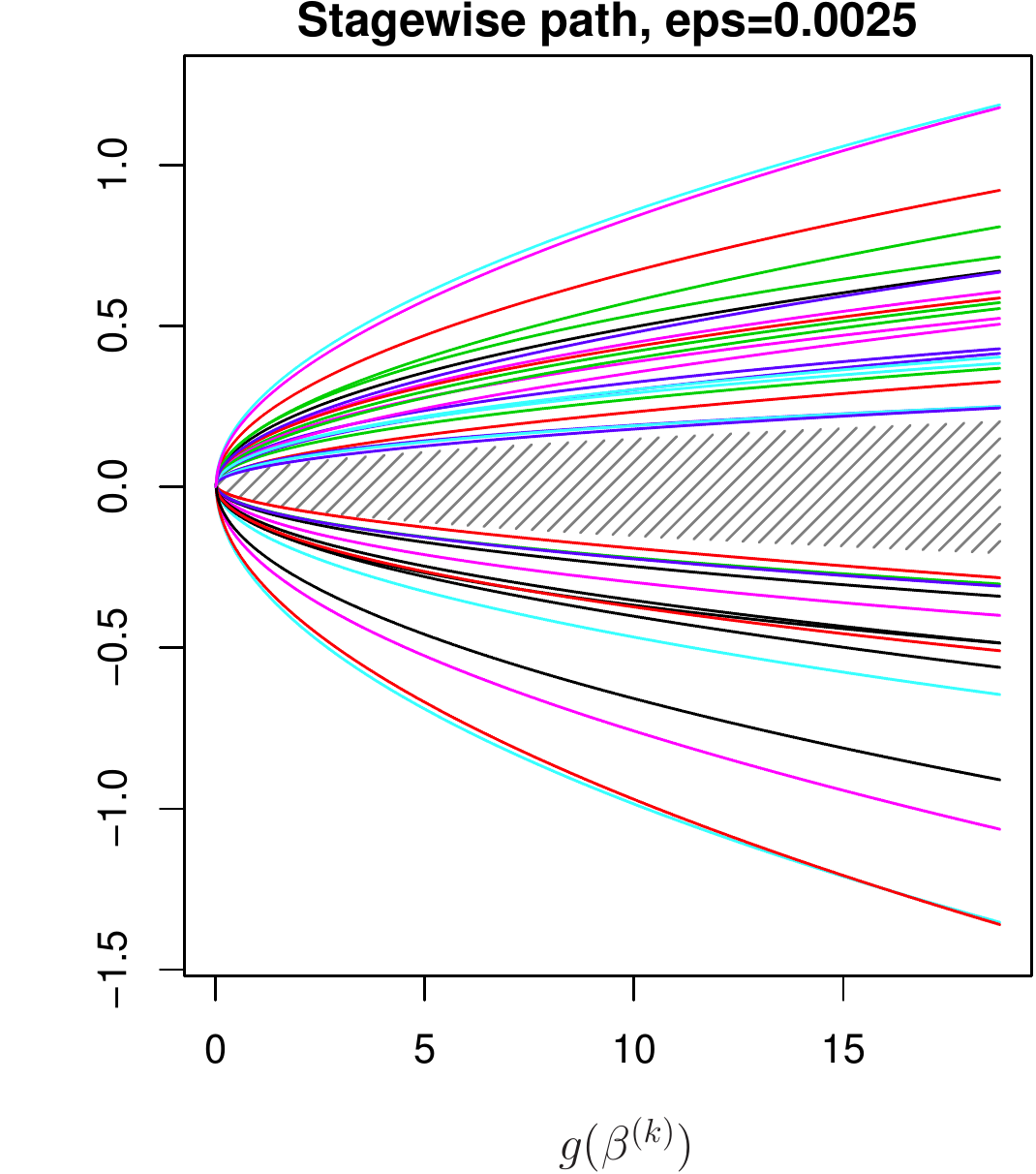}
\vspace{10pt} \\
\includegraphics[width=0.325\textwidth]{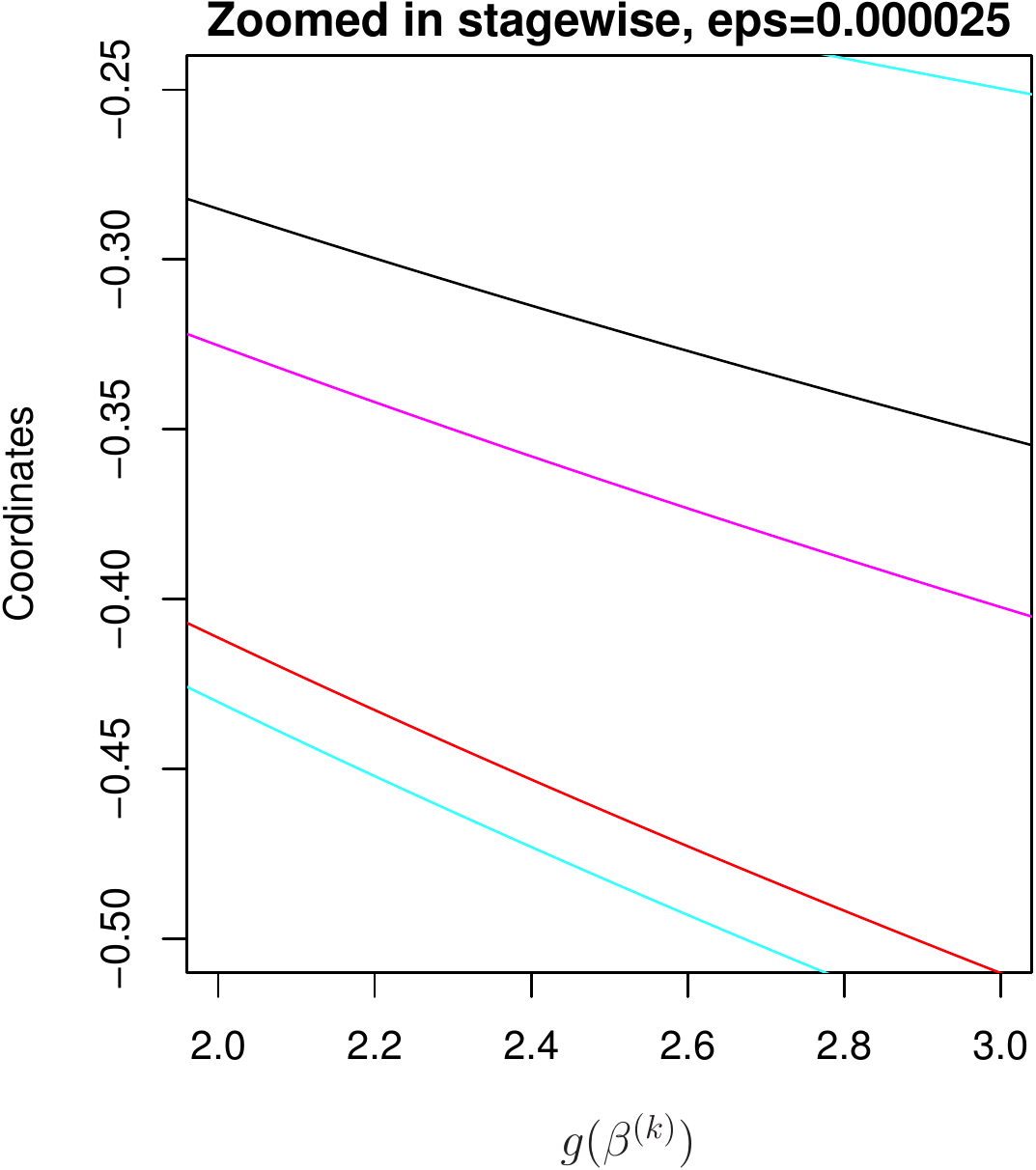} 
\includegraphics[width=0.325\textwidth]{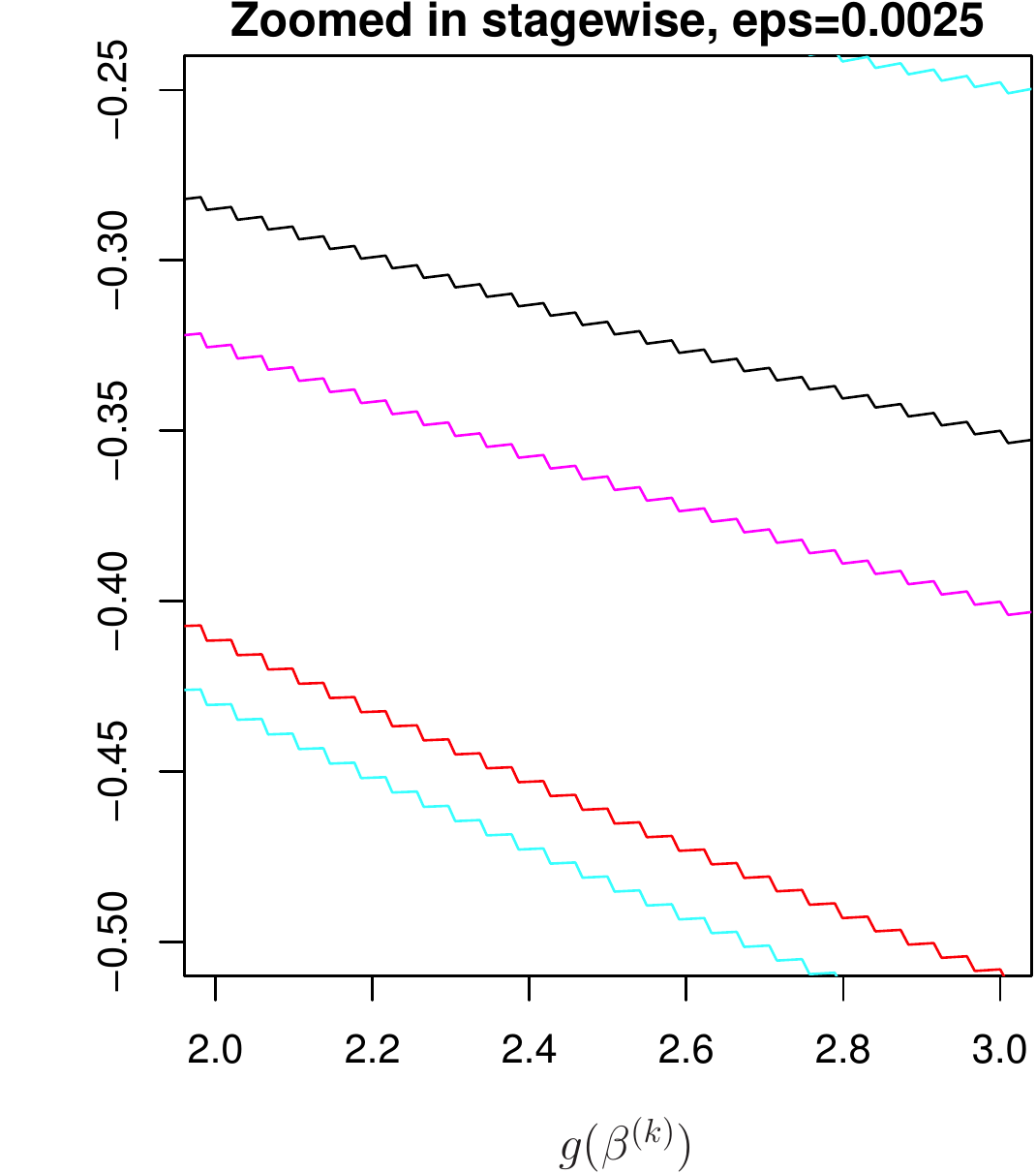}
\includegraphics[width=0.325\textwidth]{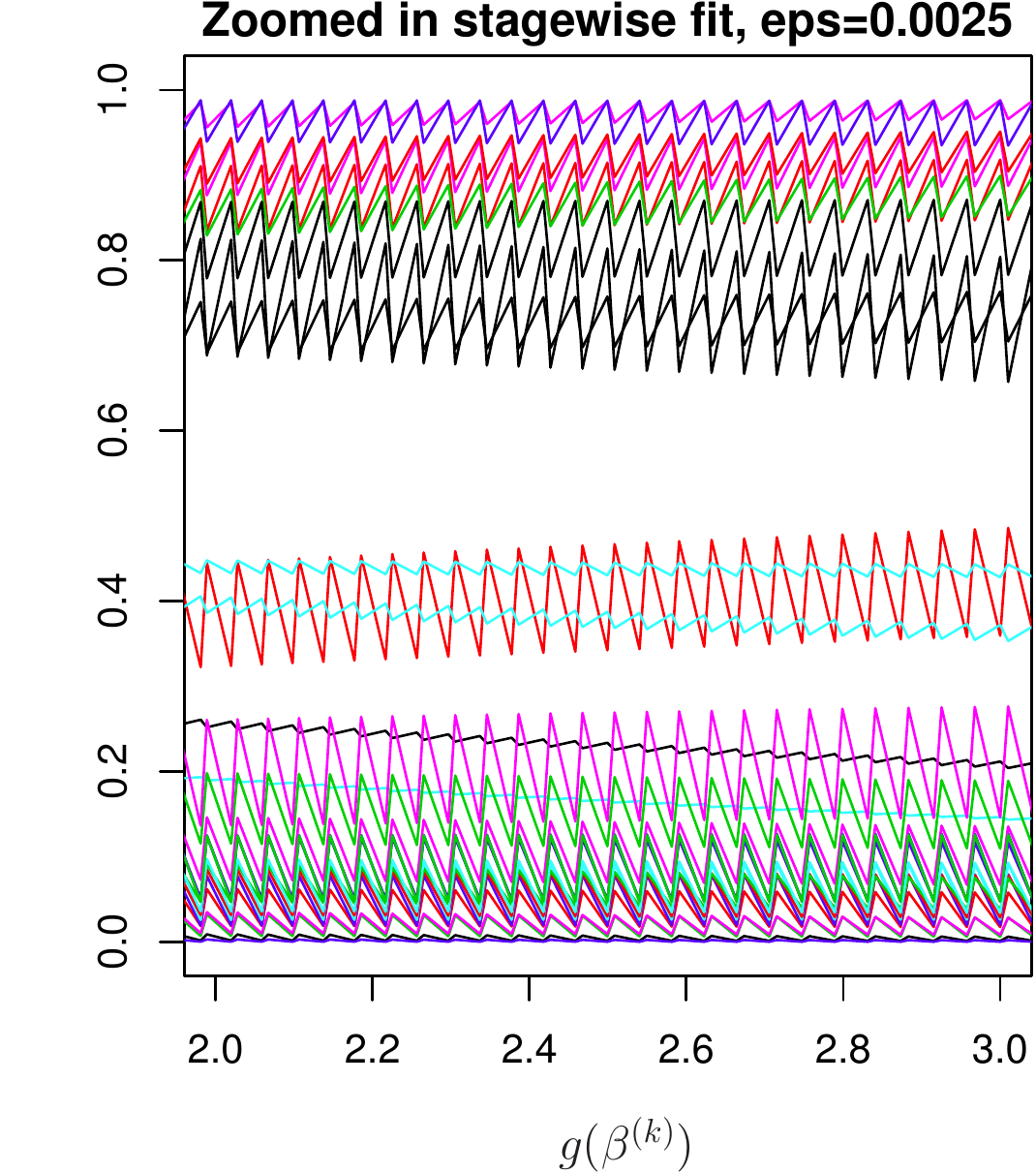} 
\caption{\it Exact and stagewise coefficient plots for one simulated
  problem under the ridge regularized logistic regression setup, with
  correlated predictors. In the top row we excluded many of the
  coefficient paths that were close to zero (dashed gray region), and
  in the bottom row we zoomed in on a select number of coefficient
  paths, for visualization purposes.} 
\label{fig:big_logridge_cor} 
\end{figure}

\subsection{Proof of Theorem \ref{thm:stagewise}}
\label{app:stagethm}

The suboptimality bound in this theorem is based on the quantity  
\begin{equation*}
h_t(x) = \max_{g(z) \leq t} \, \langle \nabla f(x), x-z \rangle.
\end{equation*}
As remarked in Appendix \ref{app:fwpath}, this serves as a valid
duality gap for the problem \eqref{eq:genprob}, in that for all 
feasible $x$, we have $f(x) - f(\hx(t)) \leq h_t(x)$, with $\hx(t)$
being a solution in \eqref{eq:genprob}.  This property follows
directly from the first order condition for convexity applied to $f$. 
We note that if $x$ is a solution in \eqref{eq:genprob}, then
$h_t(x)=0$, because in this case $\langle \nabla f(x) , x \rangle
\leq \langle \nabla f(x) , z \rangle$ for all feasible $z$.  Also, for
the proof of the theorem, it will be helpful to rewrite $h_t(x)$ in
the equivalent form: 
\begin{equation*}
h_t(x) = \langle \nabla f(x), x \rangle + t \cdot
\max_{g(z) \leq 1} \, \langle \nabla f(x), z \rangle = 
\langle \nabla f(x), x \rangle + t \cdot g^*\big(\nabla f(x)\big),
\end{equation*}
as in Appendix \ref{app:fwpath}, relying on the fact that $g$ is a
norm, and $g^*$ its dual norm. Lastly, it will be helpful to
rewrite the stagewise updates \eqref{eq:stageup}, \eqref{eq:stagedir} 
as 
\begin{gather*}
x^{(k)} = x^{(k-1)} - \epsilon \delta^{(k-1)}, \\ 
\text{where}\;\,
\delta^{(k-1)} \in \argmax_{z\in\R^n} \,\,
\langle \nabla f(x^{(k-1)}), z \rangle
\;\,\st\;\, g(z) \leq 1.
\end{gather*}

\begin{proof}[Proof of Theorem \ref{thm:stagewise}]  At any arbitrary
  step $k$, we compute
\begin{align*}
h_{t_k}(x^{(k)}) &= \langle \nabla f(x^{(k)}), x^{(k)} \rangle + 
t_k g^*\big (\nabla f(x^{(k)}) \big) \\ 
&= \langle \nabla f(x^{(k)}), x^{(k-1)} \rangle -
\epsilon \langle \nabla f(x^{(k)}), \delta^{(k-1)} \rangle + 
t_k g^*\big( \nabla f(x^{(k)}) \big).
\end{align*}
Now we add and subtract terms in order to express the right-hand side
in terms of $h_{t_{k-1}}(x^{(k-1)})$,
\begin{align*}
h_{t_k}(x^{(k)}) &= \langle \nabla f(x^{(k-1)}), x^{(k-1)}
\rangle + t_{k-1} g^*\big( \nabla f(x^{(k-1)}) \big)
 + \langle \nabla f(x^{(k)}) - \nabla f(x^{(k-1)}), x^{(k-1)} 
\rangle \\
&\qquad - \epsilon \langle \nabla f(x^{(k)}), \delta^{(k-1)} \rangle
+ t_k g^*\big(\nabla f(x^{(k)})\big) - t_{k-1} g^*\big(\nabla  
 f(x^{(k-1)})\big) \\ 
&= h_{t_{k-1}}(x^{(k-1)}) + 
 \langle \nabla f(x^{(k)}) - \nabla f(x^{(k-1)}), x^{(k-1)}
\rangle - \epsilon \langle \nabla f(x^{(k)}) - \nabla f(x^{(k-1)}),  
\delta^{(k-1)} \rangle \\
&\qquad - \epsilon \langle \nabla f(x^{(k-1)}), \delta^{(k-1)} \rangle
+ t_k g^*\big(\nabla f(x^{(k)})\big) - t_{k-1} g^*\big(\nabla
f(x^{(k-1)})\big) \\ 
&= h_{t_{k-1}}(x^{(k-1)}) + \langle \nabla f(x^{(k)}) -\nabla
f(x^{(k-1)}), x^{(k)} \rangle \\
&\qquad - \epsilon \langle \nabla f(x^{(k-1)}), \delta^{(k-1)} \rangle  
+ t_k g^*\big(\nabla f(x^{(k)})\big) - t_{k-1} g^*\big(\nabla
f(x^{(k-1)})\big) \\
&= h_{t_{k-1}}(x^{(k-1)}) + \langle \nabla f(x^{(k)}) -\nabla f(x^{(k-1)}),
x^{(k)} \rangle + t_k \big[ g^*\big(\nabla f(x^{(k)})\big) - 
g^*\big(\nabla f(x^{(k-1)})\big) \big].
\end{align*}
Recursing this, we obtain
\begin{equation*}
h_{t_k} (x^{(k)}) = \sum_{i=1}^k 
\langle \nabla f(x^{(i)}) -\nabla f(x^{(i-1)}), x^{(i)} \rangle
+  \sum_{i=1}^k t_i \big[ g^*\big(\nabla f(x^{(i)})\big) - 
g^*\big(\nabla f(x^{(i-1)})\big) \big].
\end{equation*}
where we used the fact that $h_{t_0}(x^{(0)}) = 0$.  For the
first term, we can apply H\"{o}lder's inequality over
the dual pair $g,g^*$ to each summand; for the second term, we use the
triange inequality $g^*(u-v) \geq g^*(u) - g^*(v)$.  This yields
\begin{align*}
h_{t_k} (x^{(k)}) &\leq \sum_{i=1}^k 
g^*\big(\nabla f(x^{(i)}) -\nabla f(x^{(i-1)})\big) g(x^{(i)})  
+  \sum_{i=1}^k t_i g^*\big(\nabla f(x^{(i)}) - \nabla
f(x^{(i-1)})\big) \\ 
&\leq 2 \sum_{i=1}^k t_i g^*\big(\nabla f(x^{(i)}) - 
\nabla f(x^{(i-1)})\big) \\
&\leq 2L \sum_{i=1}^k t_i g( x^{(i)} - x^{(i-1)}) \\
&\leq 2L\epsilon \sum_{i=1}^k t_i.
\end{align*}
In the second inequality, we used the fact that $g(x^{(i)}) \leq t_i$,
and in the third, we invoked the Lipschitz assumption on $\nabla f$. 
Since $t_i = t_0 + i \epsilon$, this upper bound is 
\begin{equation*}
h_{t_k} (x^{(k)}) \leq L\epsilon^2 k(k+1) + 2L\epsilon k t_0.
\end{equation*}
Finally, we recall that the number of steps $k$ is chosen so that $t_k
= t_0 + k\epsilon = t$, and hence
\begin{equation*}
h_{t_k} (x^{(k)}) \leq L(t-t_0)^2 + L(t-t_0)\epsilon + 2L(t-t_0)t_0 = 
L(t^2-t_0^2)+ L(t-t_0)\epsilon,
\end{equation*}
which completes the proof.
\end{proof}

\subsection{Proof of Theorem \ref{thm:shrunkstage}}
\label{app:shrunkthm}

This proof is similar to that of Theorem
\ref{thm:stagewise}, although it is a little more involved
technically.  We will rely on a few
limit calculations that are introduced and proved after the proof of  
the theorem, in Lemmas \ref{lem:lims} and \ref{lem:lims2}.  We will
also write the shrunken stagewise updates \eqref{eq:shrunkup},
\eqref{eq:shrunkdir} as  
\begin{gather*}
x^{(k)} = \alpha x^{(k-1)} - \epsilon \delta^{(k-1)}, \\ 
\text{where}\;\,
\delta^{(k-1)} \in \argmax_{z\in\R^n} \,\,
\langle \nabla f(x^{(k-1)}), z \rangle
\;\,\st\;\, g(z) \leq 1.
\end{gather*}

\begin{proof}[Proof of Theorem \ref{thm:shrunkstage}]
Assume without a loss of generality that $t_0=0$; the arguments needed
for the case of an arbitrary starting value $t_0$ are similar but more
tedious. As in the proof of Theorem \ref{thm:stagewise}, we compute
the duality gap at an arbirary step $k$ of the algorithm,  
\begin{align*}
h_{t_k}(x^{(k)}) &= \langle \nabla f(x^{(k)}), x^{(k)} \rangle + 
t_k g^*\big (\nabla f(x^{(k)}) \big) \\ 
&= \alpha \langle \nabla f(x^{(k)}), x^{(k-1)} \rangle -
\epsilon \langle \nabla f(x^{(k)}), \delta^{(k-1)} \rangle + 
t_k g^*\big( \nabla f(x^{(k)}) \big), \\
&=\alpha \langle \nabla f(x^{(k-1)}), x^{(k-1)}
\rangle + \alpha t_{k-1} g^*\big( \nabla f(x^{(k-1)}) \big)
 + \alpha \langle \nabla f(x^{(k)}) - \nabla f(x^{(k-1)}), x^{(k-1)} 
\rangle \\
&\qquad - \epsilon \langle \nabla f(x^{(k)}), \delta^{(k-1)} \rangle
+ t_k g^*\big(\nabla f(x^{(k)})\big) - \alpha t_{k-1} g^*\big(\nabla  
 f(x^{(k-1)})\big) \\ 
&= \alpha h_{t_{k-1}}(x^{(k-1)}) + 
\alpha \langle \nabla f(x^{(k)}) - \nabla f(x^{(k-1)}), x^{(k-1)}
\rangle - \epsilon \langle \nabla f(x^{(k)}) - \nabla f(x^{(k-1)}),  
\delta^{(k-1)} \rangle \\
&\qquad - \epsilon \langle \nabla f(x^{(k-1)}), \delta^{(k-1)} \rangle
+ t_k g^*\big(\nabla f(x^{(k)})\big) - \alpha t_{k-1} g^*\big(\nabla
f(x^{(k-1)})\big) \\ 
&=\alpha h_{t_{k-1}}(x^{(k-1)}) + \langle \nabla f(x^{(k)}) -\nabla
f(x^{(k-1)}), x^{(k)} \rangle \\
&\qquad - \epsilon \langle \nabla f(x^{(k-1)}), \delta^{(k-1)} \rangle  
+  t_k g^*\big(\nabla f(x^{(k)})\big) - \alpha t_{k-1} g^*\big(\nabla
f(x^{(k-1)})\big) \\
&=\alpha h_{t_{k-1}}(x^{(k-1)}) + \langle \nabla f(x^{(k)}) -\nabla f(x^{(k-1)}),
x^{(k)} \rangle + t_k \big[ g^*\big(\nabla f(x^{(k)})\big) - 
g^*\big(\nabla f(x^{(k-1)})\big) \big],
\end{align*}
and recursing this, we obtain
\begin{equation*}
h_{t_k} (x^{(k)}) = 
\underbrace{\sum_{i=1}^k \alpha^{k-i} \langle \nabla
f(x^{(i)}) -\nabla f(x^{(i-1)}), x^{(i)} \rangle}_{A}  \,+\,  
\underbrace{\sum_{i=1}^k \alpha^{k-i} t_i \big[ g^*\big(\nabla
  f(x^{(i)})\big) - g^*\big(\nabla f(x^{(i-1)})\big) \big]}_{B}. 
\end{equation*}
We proceed to bound terms $A$ and $B$ separately.

\bigskip
\noindent
{\it Term A.} We apply H\"{o}lder's inequality, and
then use the Lipschitz continuity of $\nabla f$,
\begin{align}
\nonumber
A &\leq \sum_{i=1}^k \alpha^{k-i} 
g^*\big(\nabla f(x^{(i)}) -\nabla f(x^{(i-1)})\big) g(x^{(i)}) \\ 
\nonumber
&\leq L \sum_{i=1}^k \alpha^{k-i} t_i g( x^{(i)} - x^{(i-1)}) \\
\label{eq:abd}
&\leq L \sum_{i=1}^k \alpha^{k-i} t_i 
\big((1-\alpha)t_{i-1}+\epsilon\big). 
\end{align}
By definition, $t_i = \alpha t_{i-1} + \epsilon$ for all 
$i=1,2,3,\ldots $, and a short inductive argument shows that   
\begin{equation*}
t_i = \alpha^i t_0 + (\alpha^{i-1} + \ldots + \alpha + 1)\epsilon =  
\alpha^i t_0 + \frac{1-\alpha^i}{1-\alpha} \epsilon.
\end{equation*}
Continuing from \eqref{eq:abd}, we can
rewrite the bound on term $A$ as 
\begin{align}
\nonumber
A &\leq \frac{L\epsilon^2}{1-\alpha}
\sum_{i=1}^k \alpha^{k-i} (1-\alpha^i)(2-\alpha^{i-1}) \\
\nonumber
&= \frac{L\epsilon^2}{1-\alpha}
\bigg( 2\sum_{i=1}^k \alpha^{k-i} - 2\sum_{i=1}^k \alpha^k -
\sum_{i=1}^k \alpha^{k-1} + \alpha^k 
\sum_{i=1}^k \alpha^{i-1} \bigg) \\ 
\nonumber
&= \frac{L\epsilon^2}{1-\alpha}
\bigg( 2 \frac{1-\alpha^k}{1-\alpha} - 2 \alpha^k k - 
\alpha^{k-1} k + \alpha^k \frac{1-\alpha^k}{1-\alpha} \bigg) \\ 
\label{eq:gapbd2}
&\leq \frac{L\epsilon^2}{(1-\alpha)^2}(1-\alpha^k)(2+\alpha^k) 
- \frac{3L\epsilon^2}{1-\alpha}\alpha^k k. 
\end{align}
Let $M=\epsilon/((1-\alpha)t)$.  Note that by the assumptions of the
theorem, $M \rightarrow \infty$ as $\epsilon \rightarrow 0$
and $\alpha \rightarrow 1$.  Hence, with
a slight reparametrization, we will express the limits $\epsilon
\rightarrow 0$, $\alpha \rightarrow 1$ as $\epsilon \rightarrow 0$,
$M \rightarrow \infty$.  Now we assume that the number of steps $k$ is 
chosen so that 
\begin{equation*}
t_k = \frac{1-\alpha^k}{1-\alpha}\epsilon = t,
\end{equation*}
i.e., $1-\alpha^k = 1/M$, or
\begin{equation*}
k = \frac{\log(1-1/M)}{\log\alpha}=  
\frac{\log(1-1/M)} {\log(1-\epsilon/(Mt))},
\end{equation*}
where we have used that $\alpha=1-\epsilon/(Mt)$.  Plugging in 
$\epsilon/(1-\alpha)=Mt$ and $1-\alpha^k=1/M$ into
\eqref{eq:gapbd2}, our bound is
\begin{align*}
A &\leq LMt^2 (3-1/M) - 3LMt(1-1/M)\epsilon k \\
&= \underbrace{3Lt M(t-\epsilon k)}_{a} \,-\, Lt^2 \,+\, 
\underbrace{3Lt \epsilon k}_{b}.
\end{align*}
By Lemma \ref{lem:lims}, we have
$a \rightarrow -3Lt^2/2$ as  
$\epsilon \rightarrow 0$, $M \rightarrow \infty$, and 
$b \rightarrow 3Lt^2$ as $\epsilon \rightarrow 0$,
$M \rightarrow \infty$.  Therefore, in the limit, we have
\begin{equation*}
A \leq -3Lt^2/2-Lt^2+3Lt^2 = Lt^2/2.
\end{equation*}

\bigskip
\noindent
{\it Term B.}  We decompose 
\begin{equation*}
B = \underbrace{\sum_{i=r+1}^k \alpha^{k-i} t_i \big[ g^*\big(\nabla 
  f(x^{(i)})\big) - g^*\big(\nabla f(x^{(i-1)})\big) \big]}_{c} + 
\underbrace{\sum_{i=1}^r \alpha^{k-i} t_i \big[ g^*\big(\nabla 
  f(x^{(i)})\big) - g^*\big(\nabla f(x^{(i-1)})\big) \big]}_{d},
\end{equation*}
and now consider each of $c,d$ in turn. 

\bigskip
\noindent
{\it Term $c$.}  Using the triangle inequality and the Lipschitz 
continuity of $\nabla f$, 
\begin{align*}
c &\leq \sum_{i=r+1}^k \alpha^{k-i} t_i g^*\big(\nabla 
  f(x^{(i)}) - \nabla f(x^{(i-1)})\big) \\
&\leq L \sum_{i=r+1}^k \alpha^{k-i} t_i g( x^{(i)} - x^{(i-1)}) \\
&= \frac{L\epsilon^2}{1-\alpha} \sum_{i=r+1}^k 
\alpha^{k-i} (1-\alpha^i)(2-\alpha^{i-1}) \\
&= \underbrace{\frac{L\epsilon^2}{1-\alpha} \sum_{i=1}^k  
\alpha^{k-i} (1-\alpha^i)(2-\alpha^{i-1})}_{c_1} \,-\,
\underbrace{\frac{L\epsilon^2}{1-\alpha} \sum_{i=1}^r 
\alpha^{k-i} (1-\alpha^i)(2-\alpha^{i-1})}_{c_2}.
\end{align*}
From the previous set of arguments, $c_1 \leq Lt^2/2$ as $\epsilon
\rightarrow 0$, $M \rightarrow \infty$.  Further, following these same 
arguments (but with $r$ in place of $k$), 
\begin{align*}
c_2 &= \alpha^{k-r}\frac{L\epsilon^2}{1-\alpha} 
\bigg( 2 \frac{1-\alpha^r}{1-\alpha} - 2 \alpha^r r - 
\alpha^{r-1} r + \alpha^r \frac{1-\alpha^r}{1-\alpha} \bigg) \\ 
&\geq \alpha^{k-r} \bigg( \frac{L\epsilon^2}{(1-\alpha)^2}
(1-\alpha^r)(2+\alpha^r) 
- \frac{3L\epsilon^2}{1-\alpha} \alpha^{r-1} r \bigg) \\
&= \alpha^{k-r} \bigg( \frac{L\epsilon^2}{(1-\alpha)^2}
(1-\alpha^{k \frac{r}{k}})(2+\alpha^{k \frac{r}{k}}) 
- \frac{3L\epsilon^2}{1-\alpha} \frac{1}{\alpha} \alpha^{k
  \frac{r}{k}} k \frac{r}{k} \bigg) \\
&= \alpha^{k-r} \bigg(  LM^2t^2 \big(1-(1-1/M)^{\frac{r}{k}}\big)
\big(2+(1-1/M)^{\frac{r}{k}}\big) - 
\frac{3LMt}{\alpha} (1-1/M)^{\frac{r}{k}} \epsilon k \frac{r}{k}
\bigg) \\
&= \alpha^{k-r}
\bigg(LM^2t^2\big(1-(1-1/M)^\varphi\big)
\big(2+(1-1/M)^\varphi\big) - 
\frac{3L Mt}{\alpha} (1-1/M)^\varphi \epsilon k \varphi
\bigg).
\end{align*}
In the last line above, we have abbreviated $\varphi=r/k$, and we 
recall that $\varphi \rightarrow \theta$ as $\epsilon
\rightarrow 0$, $M \rightarrow \infty$  by assumption.  We expand the 
last expression, and omit the leading term $\alpha^{k-r}$ since it  
converges to $1$: 
\begin{align*}
&LM^2t^2\big(1-(1-1/M)^\varphi\big)
\big(2+(1-1/M)^\varphi\big) - \frac{3L Mt}{\alpha} (1-1/M)^\varphi  
\epsilon k \varphi \\ 
&= 
3LM^2t^2\big(1-(1-1/M)^\varphi\big) - \frac{3LMt}{\alpha}
(1-1/M)^\varphi \epsilon k \varphi - 
LM^2 t^2 \big(1-(1-1/M)^\varphi\big)^2 \\
&= 3\varphi L t M(t-\epsilon k) + 
3Lt^2 M \Big( M \big(1-(1-1/M)^\varphi\big) - \varphi \Big) +  
\frac{3\varphi L t \epsilon k}{\alpha} M\big( 1-(1-1/M)^\varphi\big) 
 \\
 &\qquad  
-\smash{L t^2M^2 \big(1-(1-1/M)^\varphi\big)^2} 
+ 3\varphi L t M \epsilon k (1/\alpha-1)
\\
&\rightarrow -3 \theta Lt^2/2 +  3\theta(1-\theta) Lt^2/2 
+ 3\theta^2 Lt^2 -\theta^2 Lt^2 + 0 \\
&= \theta^2 Lt^2/2.
\end{align*}
The limits of the first four terms on the second to last line are due
to Lemmas \ref{lem:lims} and \ref{lem:lims2}; the last term converges
to 0 as $M(1-\alpha)=\epsilon/t \rightarrow 0$.
Hence, in the limit, we have $c_2 \geq \theta^2 Lt^2/2$, and
\begin{equation*}
c \leq c_1 - c_2 \leq Lt^2/2 - \theta^2 Lt^2/2 = (1-\theta^2) Lt^2/2.
\end{equation*}

\bigskip
\noindent
{\it Term $d$.}  We expand this term as
\begin{align*}
d &= \sum_{i=1}^{r-1} \alpha^{k-1-i}(\alpha t_i - t_{i+1})
g^*\big(\nabla f(x^{(i)})\big) +
\alpha^{k-r} t_r g^*\big(\nabla f(x^{(r)})\big) -
\alpha^{k-1} t_1 g^*\big(\nabla f(x^{(0)})\big)\\
&= -\epsilon \sum_{i=1}^{r-1} \alpha^{k-1-i} 
g^*\big(\nabla f(x^{(i)})\big) +
\alpha^{k-r} t_r g^*\big(\nabla f(x^{(r)})\big) -
\alpha^{k-1} t_1 g^*\big(\nabla f(x^{(0)})\big).
\end{align*}
We now apply the bounds from the decay condition \eqref{eq:decay} in
the theorem, 
which gives
\begin{align*}
d &\leq -CL \epsilon \sum_{i=1}^{r-1} \alpha^{k-1-i} t_i +
\frac{(C+1) \theta^2 - 2}{2}L \alpha^{k-r} t_r^2 \\
&= -\frac{CL \epsilon^2}{1-\alpha} \sum_{i=1}^{r-1} \alpha^{k-1-i}
(1-\alpha^i) + \frac{(C+1) \theta^2 - 2}{2}L \alpha^{k-r} t_r^2 \\ 
&= \frac{-CL\epsilon^2}{(1-\alpha)^2} \alpha^{k-r} (1-\alpha^{r-1})
+ \frac{CL\epsilon^2}{1-\alpha} \alpha^{k-1} (r-1)
+ \frac{(C+1) \theta^2 - 2}{2}L \alpha^{k-r} t_r^2 \\ 
&\leq \frac{-CL\epsilon^2}{(1-\alpha)^2} (\alpha^{k-r}-\alpha^{k-1}) 
+ \frac{CL\epsilon^2}{1-\alpha} \alpha^{k-1} r
+ \frac{(C+1) \theta^2 - 2}{2}L t^2 \\ 
&= -CL M^2 t^2 \Big((1-1/M)^{1-\varphi} - \frac{1}{\alpha}
(1-1/M)\Big) +\frac{CL Mt}{\alpha} (1-1/M) \epsilon k \varphi   
+\frac{(C+1) \theta^2 - 2}{2}L t^2.
\end{align*}
In the above, we have again written $\varphi=r/k$. Continuing with the
first part of this upper bound,
\begin{align*}
&-CL M^2 t^2 \Big((1-1/M)^{1-\varphi} - \frac{1}{\alpha}
(1-1/M)\Big) +\frac{CL Mt}{\alpha} (1-1/M) \epsilon k \varphi \\ 
&= CLM^2 t^2 \big(1/\alpha-(1-1/M)^{1-\varphi} \big) -
\frac{CLMt^2 }{\alpha} (1-\varphi) + \frac{\varphi CLt}{\alpha} 
M (\epsilon k - t) - \frac{\varphi CLt}{\alpha} \epsilon k \\
&= \frac{CLt^2}{\alpha} M \Big( M\big(1-(1-1/M)^{1-\varphi}\big) -
(1-\varphi)\Big) + \frac{\varphi CLt}{\alpha} 
M (\epsilon k - t) - \frac{\varphi CLt}{\alpha} \epsilon k \\
&\qquad + CLt^2 (1/\alpha-1) M^2 (1-1/M)^{1-\varphi}
\\
&\rightarrow \theta(1-\theta)CLt^2/2 + \theta CLt^2/2 - \theta CLt^2 +
0 \\
&= -\theta^2 CLt^2/2.
\end{align*}
In the second to last line, the first three terms converge according to
Lemmas \ref{lem:lims} and \ref{lem:lims2}.  The last one converges to
0 since $M^2(1-\alpha) = \epsilon^2/((1-\alpha) t^2) \rightarrow 0$ by
assumption.  Therefore, in the limit, we have 
\begin{equation*}
d \leq -\theta^2CLt^2/2
+ ((C+1)\theta^2-2) Lt^2/2 = (\theta^2-2)Lt^2/2.
\end{equation*}

\bigskip
\noindent
{\it Putting it all together.} To finish, we have shown that in the
limit as $\epsilon \rightarrow 0$, $M \rightarrow \infty$,
\begin{equation*}
h_t (\tx(t)) = A + B \leq Lt^2/2 + (1-\theta^2)Lt^2/2 +
(\theta^2-2)Lt^2/2 = 0.
\end{equation*}
This completes the proof.
\end{proof}

Below are two helper lemmas used in the proof of Theorem
\ref{thm:shrunkstage}. 

\begin{lemma}
\label{lem:lims}
For any fixed $t$, the following limits hold as 
$\epsilon \rightarrow 0$, $M \rightarrow \infty$:
\begin{gather*}
\epsilon \frac{\log(1-1/M)}{\log(1-\epsilon/(Mt))} 
\rightarrow t, \\
M \bigg(\epsilon \frac{\log(1-1/M)}{\log(1-\epsilon/(Mt))} 
- t \bigg) \rightarrow t/2.
\end{gather*}
\begin{proof}
Both limits can be verified using a Taylor expansion of  
$\log(1+x)$ around $x=0$.  For the first, note that
\begin{align*}
\epsilon \frac{\log(1-1/M)}{\log(1-\epsilon/(Mt))} &=
\frac{\epsilon/M + \epsilon/(2M^2) + \epsilon/(3M^3) + \ldots}
{\epsilon/(Mt) + \epsilon^2/(2M^2t^2) + \epsilon^3/(3M^3t^3) +
  \ldots} \\
&= \frac{1 + 1/(2M) + 1/(3M^2) + \ldots}
{1/t + \epsilon/(2Mt^2) + \epsilon^2/(3M^2t^3) + \ldots} \\
&\rightarrow t.
\end{align*}
And for the second,
\begin{align*}
M \bigg(\epsilon \frac{\log(1-1/M)}{\log(1-\epsilon/(Mt))} 
- t \bigg) &=
M \frac{1 + 1/(2M) + 1/(3M^2) + \ldots - 
t \big(1/t + \epsilon/(2Mt^2) + 
\ldots\big)}  
{1/t + \epsilon/(2Mt^2) + \epsilon^2/(3M^2t^3) + \ldots} \\
&= \frac{1/2 + 1/(3M) + \ldots - \big(\epsilon/(2t) + 
\epsilon^2/(3Mt^2) + \ldots\big)}
{1/t + \epsilon/(2Mt^2) + \epsilon^2/(3M^2t^3) + \ldots} \\
&\rightarrow t/2.
\end{align*}
\end{proof}
\end{lemma}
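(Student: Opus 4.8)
The plan is to reduce everything to the Mercator series $\log(1-u)=-\sum_{j\ge 1}u^j/j$, valid for $|u|<1$, which applies here because $1/M\to 0$ and $\epsilon/(Mt)\to 0$ so both arguments eventually lie in $(0,1)$. First I would factor the leading term out of each logarithm,
\begin{equation*}
\log(1-1/M) = -\frac{1}{M}\Big(1+\frac{1}{2M}+\frac{1}{3M^2}+\cdots\Big),\qquad
\log\big(1-\tfrac{\epsilon}{Mt}\big) = -\frac{\epsilon}{Mt}\Big(1+\frac{\epsilon}{2Mt}+\frac{\epsilon^2}{3M^2t^2}+\cdots\Big),
\end{equation*}
so that the prefactors produce the ratio $\tfrac{1/M}{\epsilon/(Mt)}=t/\epsilon$, against which the outer factor of $\epsilon$ in the statement cancels. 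This leaves
\begin{equation*}
\epsilon\,\frac{\log(1-1/M)}{\log(1-\epsilon/(Mt))}
= t\cdot\frac{1+\tfrac{1}{2M}+\tfrac{1}{3M^2}+\cdots}{1+\tfrac{\epsilon}{2Mt}+\tfrac{\epsilon^2}{3M^2t^2}+\cdots},
\end{equation*}
and the first limit is then immediate: the numerator tends to $1$ as $M\to\infty$ and the denominator tends to $1$ as $\epsilon\to 0$, $M\to\infty$.

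For the second limit I would work from the same representation, subtract $t$, multiply by $M$, and combine over the common denominator:
\begin{equation*}
M\Big(\epsilon\,\frac{\log(1-1/M)}{\log(1-\epsilon/(Mt))}-t\Big)
= t\cdot\frac{\big(\tfrac{1}{2}-\tfrac{\epsilon}{2t}\big)+\big(\tfrac{1}{3M}-\tfrac{\epsilon^2}{3Mt^2}\big)+\cdots}{1+\tfrac{\epsilon}{2Mt}+\tfrac{\epsilon^2}{3M^2t^2}+\cdots},
\end{equation*}
where the leading $1$'s in numerator and denominator of the previous display cancel and one factor of $M$ is absorbed into the $1/M^{\,j}$ appearing in each surviving term. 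Letting $\epsilon\to 0$, $M\to\infty$, the numerator tends to $1/2$ and the denominator to $1$, which gives the stated value $t/2$.

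The only point requiring genuine care — and what I would flag as the main (if minor) obstacle — is justifying that the trailing ``$\cdots$'' in these series may be passed to the limit. I would make this rigorous by bounding the tail of $\sum_{j\ge 1}u^j/j$ by a geometric series, e.g.\ $0\le u/2+u^2/3+\cdots\le \tfrac{u}{2(1-u)}$ for $u\in(0,1)$, applied with $u=1/M$ and with $u=\epsilon/(Mt)$; this shows the correction to each ``$1+\cdots$'' factor is $O(1/M)$, and that after the multiplication by $M$ in the second limit the remaining higher-order terms are still $O(1/M)$ and hence vanish. With these uniform bounds in hand, all of the claimed limits follow by inspection, so beyond spotting the $\epsilon$-cancellation the argument is purely a matter of bookkeeping.
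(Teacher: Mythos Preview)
Your proposal is correct and follows essentially the same approach as the paper: both expand $\log(1-u)$ via the Mercator series, isolate the leading $t/\epsilon$ ratio, and then read off the limits term by term. The only difference is cosmetic---you factor out the leading term up front and add an explicit geometric-series tail bound for rigor, whereas the paper writes the full expansions inline and leaves the tail control implicit.
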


\begin{lemma}
\label{lem:lims2}
Let $\varphi=\varphi(M)$ be such that
$\varphi \rightarrow \theta$ as $M \rightarrow \infty$.
The following limits hold, as $M \rightarrow \infty$:
\begin{gather*}
M \big(1- (1-1/M)^\varphi\big) \rightarrow \theta, \\
M \Big( M \big(1- (1-1/M)^\varphi\big) - \varphi \Big) \rightarrow 
\theta(1-\theta)/2. 
\end{gather*}
\end{lemma}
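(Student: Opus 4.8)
The plan is to reduce both limits to a single asymptotic expansion of $(1-1/M)^\varphi$ in powers of $1/M$, obtained by writing it as $\exp\!\big(\varphi\log(1-1/M)\big)$ and Taylor-expanding both the logarithm and the exponential. First I would record the elementary expansions, valid for $M$ large,
\begin{equation*}
-\log(1-1/M) = \frac{1}{M} + \frac{1}{2M^2} + O(1/M^3), \qquad \log^2(1-1/M) = \frac{1}{M^2} + O(1/M^3),
\end{equation*}
and note that since $\varphi \to \theta$ the sequence $\varphi=\varphi(M)$ is bounded for $M$ large, so the $O(\cdot)$ remainders below can be taken uniform.

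Next, writing $u = \varphi\log(1-1/M) \to 0$ and applying $1 - e^u = -u - \tfrac12 u^2 + O(u^3)$, I would obtain
\begin{equation*}
1 - (1-1/M)^\varphi = -\varphi\log(1-1/M) - \frac{\varphi^2}{2}\log^2(1-1/M) + O(1/M^3) = \frac{\varphi}{M} + \frac{\varphi-\varphi^2}{2M^2} + O(1/M^3).
\end{equation*}
The first limit is then immediate: multiplying by $M$ gives $\varphi + \frac{\varphi-\varphi^2}{2M} + O(1/M^2) \to \theta$, using $\varphi \to \theta$. For the second limit I would subtract $\varphi$ and multiply by $M$ once more,
\begin{equation*}
M\Big(M\big(1-(1-1/M)^\varphi\big) - \varphi\Big) = \frac{\varphi-\varphi^2}{2} + O(1/M) \longrightarrow \frac{\theta-\theta^2}{2} = \frac{\theta(1-\theta)}{2},
\end{equation*}
again using $\varphi \to \theta$ (hence $\varphi^2 \to \theta^2$).

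The computation is entirely routine, so there is no real obstacle; the only point worth a sentence of care is that $\varphi$ is not constant but only convergent, which is handled by noting $\varphi$ is eventually bounded (say $|\varphi|\le\theta+1$) so the hidden constants in the error terms are uniform, and that the $O(1/M)$ and $O(1/M^2)$ tails vanish regardless of how $\varphi(M)$ approaches $\theta$. No regularity of $\varphi(M)$ beyond convergence is used.
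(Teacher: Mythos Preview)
Your proof is correct and follows essentially the same idea as the paper's: obtain a second-order expansion of $(1-1/M)^\varphi$ in powers of $1/M$ and read off the two limits. The only mechanical difference is that the paper uses the generalized binomial series $(1+x)^a=\sum_i\binom{a}{i}x^i$ directly, whereas you factor through $\exp\!\big(\varphi\log(1-1/M)\big)$ and expand log and exp separately; both routes yield the same coefficients $\varphi$ and $(\varphi-\varphi^2)/2$. Your version is slightly more explicit in justifying the uniformity of the $O(\cdot)$ remainders via boundedness of $\varphi$, a point the paper leaves implicit.
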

\begin{proof}
We use a Taylor expansion of $(1+x)^a$ around $x=0$.  In particular, 
\begin{align*}
M \big(1 - (1-1/M)^\varphi \big) &=
M \bigg[ {\varphi \choose 1} \frac{1}{M} - 
{\varphi \choose 2} \frac{1}{M^2} + 
{\varphi \choose 3} \frac{1}{M^3} - \ldots \bigg] \\
&= {\varphi \choose 1} - {\varphi \choose 2} \frac{1}{M}
+ {\varphi \choose 3} \frac{1}{M^2} - \ldots \\
&\rightarrow {\theta \choose 1} = \theta.
\end{align*}
(Note that here ${x \choose i}$ denotes the generalized binomial
coefficient, for nonintegral $x$.)  Also,
\begin{align*}
M \Big( M \big(1- (1-1/M)^\varphi\big) - \varphi \Big) &=
M \bigg[ {\varphi \choose 1} - {\varphi \choose 2} \frac{1}{M}
+ {\varphi \choose 3} \frac{1}{M^2} - \ldots - \varphi \bigg] \\
&= - {\varphi \choose 2} 
+ {\varphi \choose 3} \frac{1}{M} - \ldots \\
&\rightarrow - {\theta \choose 2} = \frac{\theta(1-\theta)}{2}.
\end{align*}
\end{proof}

\newpage
\bibliographystyle{agsm}
\bibliography{ryantibs}

\end{document}